\newif\ificlrfinal
\def\addcontentsline#1#2#3{}
\renewenvironment{abstract}{\vskip.075in\centerline{\large
Abstract}\vspace{0.5ex}\begin{quote}}{\par\end{quote}\vskip 1ex}
\def\section{\@startsection {section}{1}{\z@}{-2.0ex plus
    -0.5ex minus -.2ex}{1.5ex plus 0.3ex
minus0.2ex}{\large\raggedright}}
\def\subsection{\@startsection{subsection}{2}{\z@}{-1.8ex plus
-0.5ex minus -.2ex}{0.8ex plus .2ex}{\normalsize\raggedright}}
\def\subsubsection{\@startsection{subsubsection}{3}{\z@}{-1.5ex
plus      -0.5ex minus -.2ex}{0.5ex plus
.2ex}{\normalsize\raggedright}}
\def\paragraph{\@startsection{paragraph}{4}{\z@}{1.5ex plus
0.5ex minus .2ex}{-1em}{\normalsize\bf}}
\def\subparagraph{\@startsection{subparagraph}{5}{\z@}{1.5ex plus
  0.5ex minus .2ex}{-1em}{\normalsize}}
\def\footnoterule{\kern-3pt \hrule width 12pc \kern 2.6pt }
\date{\vspace{-1em}}
\title{Unpacking Information Bottlenecks: Surrogate Objectives for Deep Learning}
\author{%
  Andreas Kirsch \hspace{5mm}
  Clare Lyle \hspace{5mm}
  Yarin Gal \\
  OATML \\
  Department of Computer Science \\
  University of Oxford \\
  \texttt{\{andreas.kirsch, clare.lyle, yarin\}@cs.ox.ac.uk}
}
\definecolor{mydarkblue}{rgb}{0,0.08,0.45}
\def\figref#1{figure~\ref{#1}}
\def\Figref#1{Figure~\ref{#1}}
\def\secref#1{section~\ref{#1}}
\def\Secref#1{Section~\ref{#1}}
\def\1{\bm{1}}
\def\eps{{\epsilon}}
\DeclareMathAlphabet{\mathsfit}{\encodingdefault}{\sfdefault}{m}{sl}
\SetMathAlphabet{\mathsfit}{bold}{\encodingdefault}{\sfdefault}{bx}{n}
\newcommand{\KL}{D_{\mathrm{KL}}}
\newcommand{\Var}{\mathrm{Var}}
\newcommand{\Cov}{\mathrm{Cov}}
\DeclareMathOperator*{\argmin}{arg\,min}
\newtheorem{theorem}{Theorem}
\newtheorem{proposition}{Proposition}
\newtheorem{observation}{Proposition}
\newtheorem{corollary}{Corollary}
\newtheorem*{exobservation}{Proposition}
\DeclareMathOperator{\opE}{\mathbbm{E}}
\DeclareMathOperator{\opH}{H}
\DeclareMathOperator{\opI}{I}
\DeclareMathOperator{\opp}{p}
\DeclareMathOperator{\oppm}{p_\theta}
\DeclareMathOperator{\opph}{\hat{p}}
\DeclareMathOperator{\opq}{q}
\DeclareMathOperator{\opr}{r}
\DeclareMathOperator{\opmus}{\mu^*}
\DeclareMathOperator{\opVar}{\mathrm{Var}}
\DeclareMathOperator{\opCov}{\mathrm{Cov}}
\DeclareMathOperator{\opDiag}{\mathrm{diag}}
\renewcommand{\Var}[1]{\opVar[ #1 ]}
\renewcommand{\Cov}[1]{\opCov[ #1 ]}
\newcommand{\hVar}[1]{\widehat{\opVar}[#1]}
\newcommand{\cVar}[2]{\opVar[#1 \mathbin{\vert} #2]}
\newcommand{\hcVar}[2]{\widehat{\opVar}[#1 \mathbin{\vert} #2]}
\newcommand{\cCov}[2]{\opCov[#1 \mathbin{\vert} #2]}
\DeclareMathOperator{\tr}{tr}
\newcommand{\MuS}[1]{\opmus \mathopen{} ( #1  )\mathclose{}}
\newcommand{\Entropy}[1]{\opH [ #1 ]}
\newcommand{\CrossEntropy}[2]{\opH ( #1 \mathbin{\vert \vert} #2 )}
\newcommand{\ic}[1]{h \left ( #1 \right )}
\newcommand{\Hc}[2]{\opH [ #1 \mathbin{\vert} #2 ]}
\newcommand{\MI}[2]{\opI [ #1 ; #2 ]}
\newcommand{\MIt}[3]{\opI [ #1 ; #2 ; #3 ]}
\newcommand{\MIc}[3]{\opI [ #1 ; #2 \mathbin{\vert} #3 ]}
\newcommand{\MIct}[4]{\opI [ #1 ; #2 ; #3 \mathbin{\vert} #4 ]}
\newcommand{\prob}[1]{\opp ( #1 )}
\newcommand{\probc}[2]{\opp ( #1 \mathbin{\vert} #2 )}
\newcommand{\probh}[1]{\opph ( #1 )}
\newcommand{\probhc}[2]{\opph ( #1 \mathbin{\vert} #2 )}
\newcommand{\probm}[1]{\oppm ( #1 )}
\newcommand{\probmc}[2]{\oppm ( #1 \mathbin{\vert} #2 )}
\newcommand{\qprob}[1]{\opq ( #1 )}
\newcommand{\qprobc}[2]{\opq ( #1 \mathbin{\vert} #2 )}
\newcommand{\rprob}[1]{\opr ( #1 )}
\newcommand{\kale}[2]{\KL ( #1 \mathbin{\vert \vert} #2 )}
\newcommand{\N}{\mathcal{N}}
\newcommand{\normaldist}[2]{\N(#1,\,#2)}
\newcommand{\reals}{\mathbb{R}}
\newcommand{\x}{x}
\newcommand{\X}{\textnormal{X}}
\newcommand{\y}{y}
\newcommand{\Y}{\textnormal{Y}}
\newcommand{\yhat}{\hat{y}}
\newcommand{\Yhat}{\hat{\textnormal{Y}}}
\newcommand{\z}{z}
\newcommand{\Z}{\textnormal{Z}}
\newcommand{\pmdecoder}{\probmc{\yhat}{\z}}
\newcommand{\pmdecoderce}{\probmc{\Yhat=\y}{\z}}
\newcommand{\pdecoder}{\probc{\y}{\z}}
\newcommand{\pdecoderhat}{\probc{\Y=\yhat}{\z}}
\newcommand{\pmpredict}{\probmc{\yhat}{\x}}
\newcommand{\pmpredictce}{\probmc{\Yhat=\y}{\x}}
\newcommand{\pmgenerative}{\probm{\x}\pmpredict}
\newcommand{\pmgenerativece}{\probm{\x}\pmpredictce}
\newcommand{\CEpmdecoder}{\CrossEntropy{\pdecoder}{\pmdecoderce}}
\newcommand{\CEpmpredict}{\CrossEntropy{\pycx}{\pmpredictce}}
\newcommand{\thetaEntropy}[2]{\opH_\theta [ #1 \mathbin{\vert} #2 ]}
\definecolor{ik-darkblue}{HTML}{005aff}
\definecolor{ik-brightblue}{HTML}{00a2ff}
\definecolor{ik-darkorange}{HTML}{ff6c00}
\definecolor{ik-brightorange}{HTML}{ffb000}
\definecolor{ik-pink}{HTML}{e55ca5}
\definecolor{ik-green}{HTML}{009926}
\definecolor{ik-purple}{HTML}{66001d}
\definecolor{ik-gray}{HTML}{bebebe}
\colorlet{labeluncertaintycolor}{ik-pink}
\newcommand{\labeluncertainty}{\textcolor{labeluncertaintycolor}{\Hc\Y\X}}
\newcommand{\LabelUncertainty}{\textcolor{labeluncertaintycolor}{Label Uncertainty}\xspace}
\colorlet{encodinguncertaintycolor}{ik-purple}
\newcommand{\encodinguncertainty}{\textcolor{encodinguncertaintycolor}{\Hc\Z\X}}
\newcommand{\EncodingUncertainty}{\textcolor{encodinguncertaintycolor}{Encoding Uncertainty}\xspace}
\colorlet{residualinfocolor}{ik-darkblue}
\newcommand{\residualinfo}{\textcolor{residualinfocolor}{\MIc\X\Y\Z}}
\newcommand{\ResidualInfo}{\textcolor{residualinfocolor}{Residual Information}\xspace}
\colorlet{redundantinfocolor}{ik-darkorange}
\newcommand{\redundantinfo}{\textcolor{redundantinfocolor}{\MIc\X\Z\Y}}
\newcommand{\RedundantInfo}{\textcolor{redundantinfocolor}{Redundant Information}\xspace}
\colorlet{datanoisecolor}{ik-gray}
\newcommand{\datanoise}{\textcolor{datanoisecolor}{\Hc\X{\Y,\Z}}}
\colorlet{capturedinfocolor}{ik-green}
\newcommand{\capturedinfo}{\textcolor{capturedinfocolor}{\MI\Y\Z}}
\newcommand{\CapturedInfo}{\textcolor{capturedinfocolor}{Preserved Relevant Information}\xspace}
\colorlet{relevantinfocolor}{ik-brightblue}
\newcommand{\relevantinfo}{\textcolor{relevantinfocolor}{\MI\X\Y}}
\newcommand{\RelevantInfo}{\textcolor{relevantinfocolor}{Relevant Information}\xspace}
\colorlet{preservedinfocolor}{ik-brightorange}
\newcommand{\preservedinfo}{\textcolor{preservedinfocolor}{\MI\X\Z}}
\newcommand{\PreservedInfo}{\textcolor{preservedinfocolor}{Preserved Information}\xspace}
\colorlet{rdecoderuncertaintycolor}{ik-darkorange}
\newcommand{\rdecoderuncertainty}{\textcolor{rdecoderuncertaintycolor}{\Hc\Z\Y}}
\newcommand{\RDecoderUncertainty}{\textcolor{rdecoderuncertaintycolor}{Reverse Decoder Uncertainty}\xspace}
\colorlet{decoderuncertaintycolor}{ik-darkblue}
\newcommand{\decoderuncertainty}{\textcolor{decoderuncertaintycolor}{\Hc\Y\Z}}
\newcommand{\DecoderUncertainty}{\textcolor{decoderuncertaintycolor}{Decoder Uncertainty}\xspace}
\colorlet{labelentropycolor}{ik-darkblue}
\newcommand{\labelentropy}{\textcolor{labelentropycolor}{\Entropy{\Y}}}
\newcommand{\LabelEntropy}{\textcolor{labelentropycolor}{Label Entropy}\xspace}
\colorlet{encodingentropycolor}{ik-darkorange}
\newcommand{\encodingentropy}{\textcolor{encodingentropycolor}{\Entropy{\Z}}}
\newcommand{\EncodingEntropy}{\textcolor{encodingentropycolor}{Encoding Entropy}\xspace}
\colorlet{decoderxecolor}{decoderuncertaintycolor}
\newcommand{\decoderXE}{\textcolor{decoderxecolor}{\opH_\theta [ \Y \mathbin{\vert} \Z ]}}
\newcommand{\DecoderCE}{\textcolor{decoderxecolor}{Decoder Cross-Entropy}\xspace}
\colorlet{predictxecolor}{labeluncertaintycolor}
\newcommand{\predictXE}{\textcolor{predictxecolor}{\opH_\theta [ \Y \mathbin{\vert} \X ]}}
\newcommand{\PredictionCE}{\textcolor{predictxecolor}{Prediction Cross-Entropy}\xspace}
\newcommand{\pmencoder}{\probmc{\z}{\x}}
\newcommand{\px}{\probh{\x}}
\newcommand{\pxy}{\probh{\x,\y}}
\newcommand{\pycx}{\probhc{\y}{\x}}
\newcommand{\pxyz}{\prob{\x,\y,\z}}
\newcommand{\pyz}{\prob{\y,\z}}
\newcommand{\pz}{\prob{\z}}
\newcommand{\py}{\probh{\y}}
\newcommand{\pzcx}{\probc{\z}{\x}}
\newcommand{\ddtheta}{\frac{d}{d\theta}}
\newcommand{\DVIB}{DVIB\xspace}
\newcommand{\VCEB}{VCEB\xspace}
\newcommand{\CEB}{CEB\xspace}
\newcommand{\DIB}{DIB\xspace}
\newcommand{\IB}{IB\xspace}
\newcommand{\ZeroEntropyNoise}{zero-entropy noise\xspace}
\newcommand{\EDM}{Entropy Distance Metric\xspace}
\newcommand{\edmp}[2]{EDM \left( #1, #2 \right )}
\newcommand{\edmyz}{\edmp{\Y}{\Z}}
\newcommand{\IWSGD}{Importance Weighted Stochastic Gradient Descent\xspace}
\newcommand{\MSD}{Multi-Sample Dropout\xspace}
\newcommand{\MSA}{\textcolor{rdecoderuncertaintycolor}{\chainedE{\Vert Z \Vert^2}{}}}
\newcommand{\logVarZY}{\textcolor{rdecoderuncertaintycolor}{\log \cVar{\Z}{\Y}}}
\newcommand{\logVarZ}{\textcolor{rdecoderuncertaintycolor}{\log \Var{\Z}}}
\newcommand{\E}[2]{\opE_{#2} \left [ #1 \right ]}
\newcommand{\chainedE}[2]{\opE_{#2} {#1}}
\newcommand{\TK}[1]{{\leavevmode\color{red}TK[[#1]]TK}}
\renewcommand{\TK}[1]{}
\newcommand{\CUT}[1]{}
\begin{document}

\maketitle

\begin{abstract}
    The Information Bottleneck principle offers both a mechanism to explain how deep neural networks train and generalize, as well as a regularized objective with which to train models. However, multiple competing objectives are proposed in the literature, and the information-theoretic quantities used in these objectives are difficult to compute for large deep neural networks, which in turn limits their use as a training objective. In this work, we review these quantities and compare and unify previously proposed objectives, which allows us to develop surrogate objectives more friendly to optimization without relying on cumbersome tools such as density estimation. We find that these surrogate objectives allow us to apply the information bottleneck to modern neural network architectures. We demonstrate our insights on MNIST, CIFAR-10 and Imagenette with modern DNN architectures (ResNets).
\end{abstract}

\section{Introduction}

The Information Bottleneck (\IB) principle, introduced by \citet{Tishby2000}, proposes that training and generalization in deep neural networks (DNNs) can be explained by information-theoretic principles \citep{Tishby2015,Shwartz-Ziv2017,Achille2017}. 
This is attractive as the success of DNNs remains largely unexplained by tools from computational learning theory \citep{Zhang2019,Bengio2009}.
The \IB principle suggests that learning consists of two competing objectives: maximizing the mutual information between the latent representation and the label to promote accuracy, while at the same time minimizing the mutual information between the latent representation and the input to promote generalization. 
Following this principle, many variations of \IB objectives have been proposed \citep{Alemi2019,Strouse2016,fischer2020modelrobustness,fischer2020conditional,Fisher2019,Gondek,Achille2017}, which, in supervised learning, have been demonstrated to benefit robustness to adversarial attacks \citep{Alemi2019,Fisher2019} and generalization and regularization against overfitting to random labels \citep{Fisher2019}. 

Whether the benefits of training with IB \textit{objectives} are due to the IB \textit{principle}, or some other unrelated mechanism, remains unclear \citep{Saxe2019,AliAmjad2018,Tschannen2019}, suggesting that although recent work has also tied the principle to successful results in both unsupervised and self-supervised learning \citep[among others]{Oord2018,Belghazi2018,Zhang,burgess2018understanding}, our understanding of how \IB objectives affect representation learning remains unclear.

Critical to studying this question is the computation of the information-theoretic quantities\footnote{We shorten these to \emph{information quantities} from now on.} used.
While progress has been made in developing mutual information estimators for DNNs \citep{Poole2019,Belghazi2018,Noshad2019,McAllester2018,Kraskov2003}, current methods still face many limitations when concerned with high-dimensional random variables \citep{McAllester2018} and rely on complex estimators or generative models. This presents a challenge to training with IB objectives.

\begin{figure}[!tbp]
    \begin{minipage}[t]{0.49\textwidth}
        \centering
        \includegraphics[width=0.95\linewidth, clip, trim=0 0 0 5]{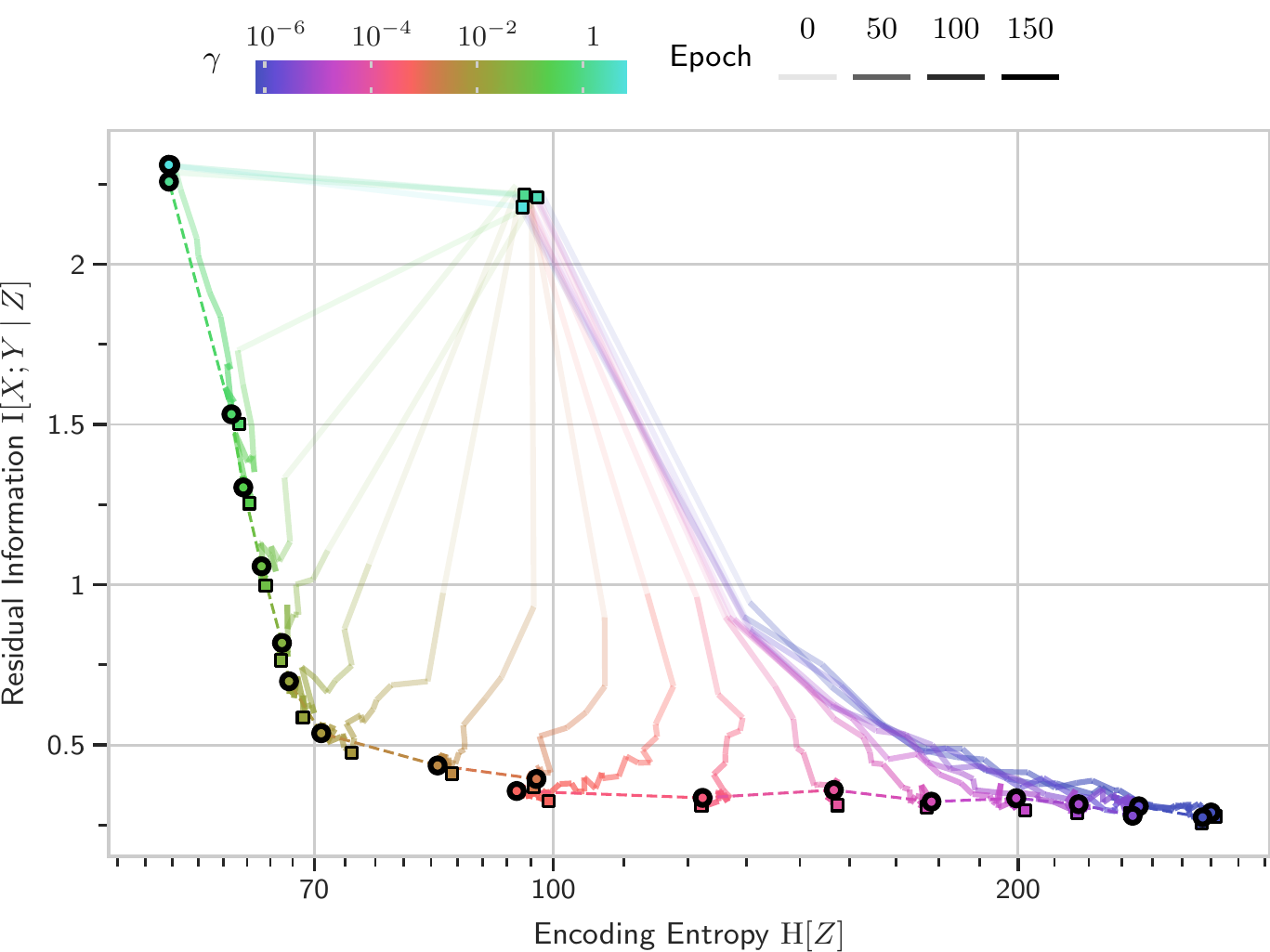}%
        \caption{\emph{Information plane plot of the training trajectories of ResNet18 models with our surrogate objective $\min_\theta \decoderXE + \gamma \MSA$ on \textbf{Imagenette}.} Color shows $\gamma$; transparency the training epoch. %
        Compression (\EncodingEntropy $\downarrow)$ trades-off with test performance (\ResidualInfo $\downarrow$). See \secref{sec:exp surrogate regularizers}.
        }%
        \label{fig:imagenette_beta_trajectories_test}%
    \end{minipage}
	\hfill
	\begin{minipage}[t]{0.49\textwidth}
        \centering %
        \includegraphics[width=0.95\linewidth]{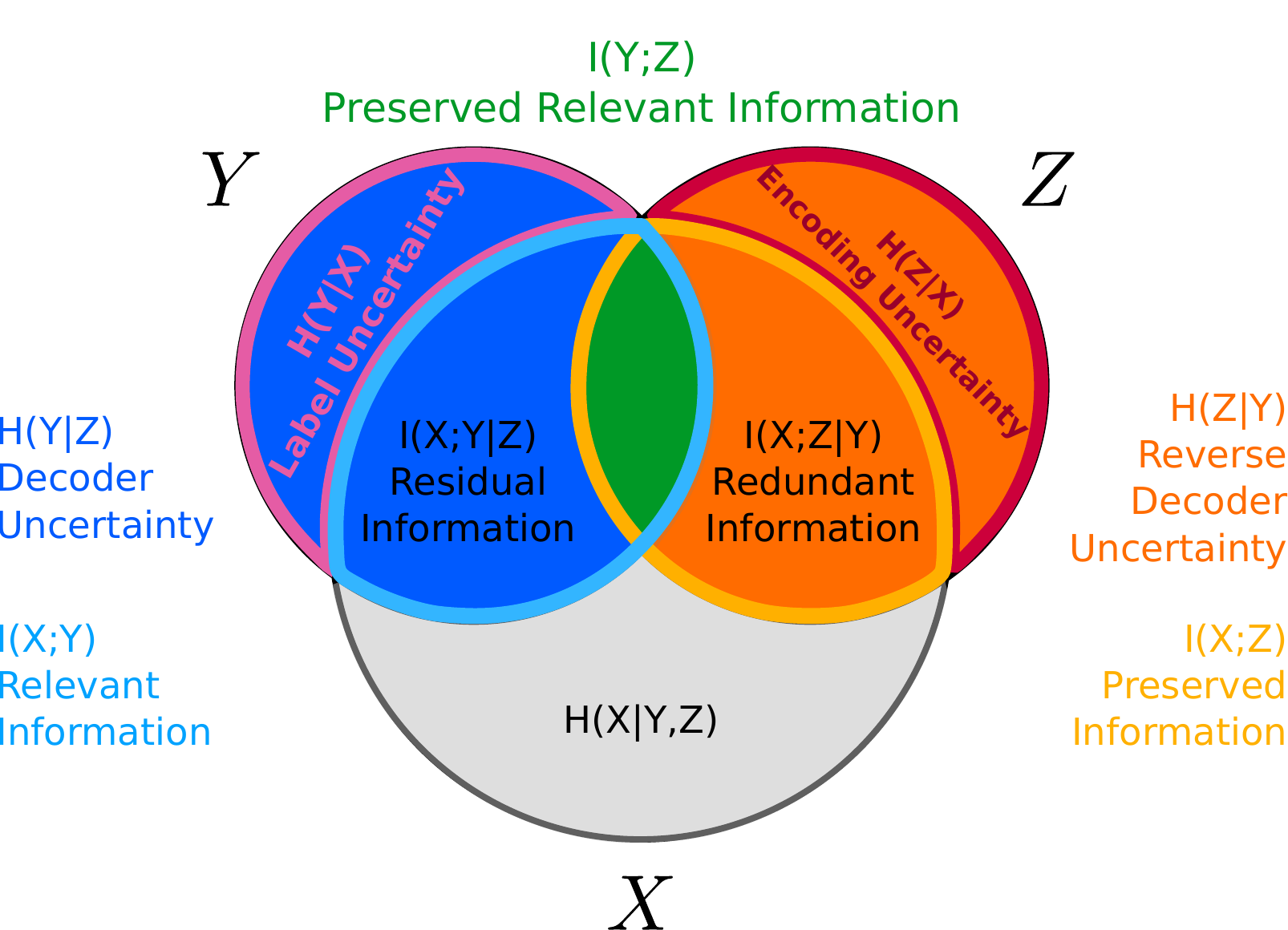} %
        \caption{\emph{Mickey Mouse I-diagram for the information quantities in the model $\prob{\x,\y,\z} = \pxy \pmencoder$.} $\X$ is for the data, $\Y$ for the labels, $\Z$ for the latent encodings. See \secref{sec:information quantities and information diagrams} for more details and \secref{app:mickey mouse intuitions} for a description of all the quantities. $\capturedinfo=\textcolor{capturedinfocolor}{\MIt{\X}{\Y}{\Z}}$ because $\MIc{\Y}{\Z}{\X}=0$. Best viewed in color.}%
        \label{fig:mickeymouse}%
    \end{minipage}
    \vspace{-1.5em}
\end{figure}

In this paper, we analyze information quantities and relate them to surrogate objectives for the \IB principle which are more friendly to optimization, showing that complex or intractable \IB objectives can be replaced with simple, easy-to-compute surrogates that produce similar performance and similar behaviour of information quantities over training.
\textbf{Sections \ref{sec:information quantities and information diagrams} \& \ref{sec:information bottlenecks}} review commonly-used information quantities for which we provide mathematically grounded intuition via information diagrams and unify different \IB objectives by identifying two key information quantities, \DecoderUncertainty $\decoderuncertainty$ and \RDecoderUncertainty $\rdecoderuncertainty$ which act as the main loss and regularization terms in our unified \IB objective.
In particular, \textbf{\Secref{sec:decoder uncertainty}} demonstrates that using the \DecoderUncertainty as a training objective can minimize the training error, and shows how to estimate an upper bound on it efficiently for well-known DNN architectures.
We expand on the findings of \citet{Alemi2019} in their variational \IB approximation and demonstrate that this upper bound is equal to the commonly-used cross-entropy loss\footnote{This connection was assumed without proof by \citet{Achille2017,Achille2018}.} under dropout regularization.
\textbf{\Secref{sec:regularizer estimators}} examines pathologies of differential entropies that hinder optimization and 
proposes adding Gaussian noise to force differential entropies to become non-negative, which leads to new surrogate terms to optimize the \RDecoderUncertainty.
Altogether this leads to simple and tractable surrogate \IB objectives such as the following, which uses dropout, adds Gaussian noise over the feature vectors $f(x; \eta)$, and uses an L2 penalty over the noisy feature vectors:
\begin{equation}
    \min_\theta \E{- \log \probc{\Yhat = y}{z=f_\theta(x; \eta) + \epsilon} 
    + \gamma \left \Vert f_\theta(x; \eta) 
    + \epsilon \right \Vert_2^2}{\substack{x,y \sim \pxy, \epsilon \sim \N\\ \eta \sim \text{dropout mask}}}. \label{eq:explicit_loss}
\end{equation}
\textbf{\Secref{sec:experiments}} describes experiments that validate our insights qualitatively and quantitatively on MNIST, CIFAR-10 and Imagenette, and shows that with objectives like the one in equation \eqref{eq:explicit_loss} we obtain information plane plots (as in \figref{fig:imagenette_beta_trajectories_test}) similar to those predicted by \citet{Tishby2015}. Our simple surrogate objectives thus induce the desired behavior of \IB objectives while scaling to large, high-dimensional datasets. We present \textbf{evaluations on CIFAR-10 and Imagenette images}\footnote{Recently, \citet{fischer2020modelrobustness} report results on CIFAR-10 and ImageNet, see \secref{app:detailed_prior_art_comparison}.}.

Compared to existing work, we show that we can optimize \IB objectives for well-known DNN architectures using standard optimizers, losses and simple regularizers, without needing complex estimators, generative models, or variational approximations. This will allow future research to make better use of IB objectives and study the IB principle more thoroughly.

\section{Background}
\textbf{Information quantities \& information diagrams.}
\label{sec:information quantities and information diagrams}
We denote entropy $\Entropy{\cdot}$, joint entropy $\Entropy{\cdot,\cdot}$, conditional entropy $\Hc{\cdot}{\cdot}$, mutual information $\MI{\cdot}{\cdot}$ and Shannon's information content $\ic{\cdot}$ \citep{Cover1991,Mackay, Shannon}. We will further require the Kullback-Leibler divergence $\kale{\cdot}{\cdot}$ and cross-entropy $\CrossEntropy{\cdot}{\cdot}$. 
The definitions can be found in \secref{app:information quantities}.
We will use differential entropies interchangeably with entropies: equalities between them are preserved in the differential setting, and inequalities will be covered in \secref{sec:continuous entropies}.

Information diagrams (I-diagrams), like the one depicted in \figref{fig:mickeymouse}, clarify the relationship between information quantities: \emph{similar to Venn diagrams, a quantity equals the sum of its parts in the diagram}. 
Importantly, they offer a grounded intuition as \citet{Yeung1991} show that we can define a signed measure $\opmus$ such that information quantities map to abstract sets and are consistent with set operations. We provide details on how to use I-diagrams and what to watch out for in \secref{app:information diagrams}.

\textbf{Probabilistic model.}
\label{sec:probabilistic model}
We will focus on a supervised classification task that makes prediction $\Yhat$ given data $\X$ using a latent encoding $\Z$, while the provided target is $\Y$. We assume categorical $\Y$ and $\Yhat$, and continuous $\X$.
Our probabilistic model based on these assumptions is as follows:
\begin{gather}
    \prob{\x,\y,\z,\yhat} = \pxy \pmencoder \pmdecoder. \label{eq:probmodel}
\end{gather}
Thus, $\Z$ and $\Y$ are independent given $\X$, and $\Yhat$ is independent of $\X$ and $\Y$ given $\Z$.
The data distribution $\pxy$ is only available to us as an empirical sample distribution. $\theta$ are the parameters we would like to learn. 
$\pmencoder$ is the encoder from data $\X$ to latent $\Z$, and
$\pmdecoder$ the decoder from latent $\Z$ to prediction $\Yhat$. Together, $\pmencoder$ and $\pmdecoder$ form the discriminative model $\pmpredict$:
\begin{equation}
    \pmpredict=\chainedE{\pmdecoder}{\pmencoder} \label{eq:discriminative_model}.
\end{equation}
We can derive the cross-entropy loss $\CEpmpredict$ \citep{Solla1988,Hinton1990} %
by minimizing the Kullback-Leibler divergence between the empirical sample distribution $\pxy$ and the parameterized distribution $\pmgenerative$, where we set $\probm{x}=\px$. See \secref{sec:xel}.

\textbf{Mickey Mouse I-diagram.}
\label{sec:mickey mouse}
The corresponding I-diagram for $\X$, $\Y$, and $\Z$ is depicted in \figref{fig:mickeymouse}.
As some of the quantities have been labelled before, we try to follow conventions and come up with consistent names otherwise.
\emph{\Secref{app:mickey mouse intuitions} provides intuitions for these quantities, and \secref{app:all definitions} lists all definitions and equivalences explicitly}.
For categorical $\Z$,
all the quantities in the diagram are positive, which allows us to read off inequalities from the diagram:
only $\textcolor{capturedinfocolor}{\MIt\X\Y\Z}$ could be negative, but as $\Y$ and $\Z$ are independent given $\X$, we have $\MIc{\Y}{\Z}{\X} = 0$, and
\begin{math}
    \textcolor{capturedinfocolor}{\MIt{\X}{\Y}{\Z}}=\capturedinfo - \MIc{\Y}{\Z}{\X} = \capturedinfo \ge 0.
\end{math}
\Secref{sec:continuous entropies} investigates how to preserve inequalities for continuous $\Z$.
\section{Surrogate \IB{} \& \DIB{} objectives}
\label{sec:information bottlenecks}
\label{sec:different perspective on IB and DIB}

\subsection{IB Objectives} 
\label{sec:ib background}
\citet{Tishby2000} introduce the \IB objective as a relaxation of a constrained optimization problem: minimize the mutual information between the input $X$ and its latent representation $Z$ while still accurately predicting $Y$ from $Z$. An analogous objective which yields deterministic $Z$, the Deterministic Information Bottleneck (\DIB) was proposed by \citet{Strouse2016}. Letting $\beta$ be a Lagrange multiplier, we arrive at the \IB and \DIB objectives:
\begin{equation}
    \min \preservedinfo - \beta \capturedinfo \; \quad \text{ for IB, and } \quad \min \encodingentropy 
    - \beta \capturedinfo  \text{ for DIB.}
\end{equation} 
This principle can be recast as a generalization of finding minimal sufficient statistics for the labels given the data \citep{Shamir2010,Tishby2015,Fisher2019}:
it strives for minimality and sufficiency of the latent $\Z$. Minimality is achieved by minimizing the \PreservedInfo $\preservedinfo$; while sufficiency is achieved by maximizing the \CapturedInfo $\capturedinfo$.
We defer an in-depth discussion of the \IB principle to the appendix \Secref{app:optimization goals}. We discuss the several variants of \IB objectives, and justify our focus on \IB and \DIB, in \Secref{app:ib objectives}.

The information quantities that appear in the \IB objective are not tractable to compute for the representations learned by many function classes of interest, including neural networks; for example, \citet{Strouse2016} only obtain an analytical solution to their Deterministic Information Bottleneck (DIB) method for the tabular setting. \citet{Alemi2019} address this challenge by constructing a variational approximation of the \IB objective, but their approach has not been applied to more complex datasets than MNIST variants. \citet{Belghazi2018} use a separate statistics network to approximate the mutual information, a computationally expensive strategy that does not easily lend itself to optimization.

In this section, we introduce and justify tractable surrogate losses that are easier to apply in common deep learning pipelines, and which can be scaled to large and high-dimensional datasets.
We begin by proposing the following reformulation of \IB and \DIB objectives.
\begin{observation}
\label{obs:rewritten_objectives}    
For \IB, we obtain 
\begin{align}
    \argmin \preservedinfo - \beta \capturedinfo &= \argmin \decoderuncertainty + \beta' \underbrace{\redundantinfo}_{=\rdecoderuncertainty - \encodinguncertainty}, \\
\intertext{
and, for \DIB,
}
    \argmin \encodingentropy - \beta \capturedinfo &= \argmin \decoderuncertainty + \beta' \rdecoderuncertainty
    = \argmin \decoderuncertainty + \beta'' \encodingentropy
\end{align}
with $\beta' := \frac{1}{\beta - 1} \in [0,\infty)$ and $\beta'' := \frac{1}{\beta}\in [0,1)$.
The derivation can be found in \secref{app:different perspective on IB and DIB}.
\end{observation}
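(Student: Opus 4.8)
The plan is to reduce both reformulations to a short chain of standard information identities, combined with two operations that leave $\argmin$ invariant: discarding terms that are constant in $\theta$ (the label entropy $\Entropy{\Y}$, which is fixed by the data distribution $\pxy$), and rescaling by a strictly positive factor. The only genuinely structural ingredient is the Markov property of the model in \eqref{eq:probmodel}: since $\Z$ and $\Y$ are independent given $\X$, we have $\MIc{\Y}{\Z}{\X} = 0$, so the chain rule $\MI{\Y}{\Z} + \MIc{\X}{\Z}{\Y} = \MI{\X}{\Z} + \MIc{\Y}{\Z}{\X}$ collapses to
\begin{equation}
    \MI{\X}{\Z} = \MI{\Y}{\Z} + \MIc{\X}{\Z}{\Y}. \label{eq:key_identity}
\end{equation}
This is exactly the additive relation one reads off the Mickey Mouse I-diagram. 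The same Markov property gives $\Hc{\Z}{\X,\Y} = \Hc{\Z}{\X}$, whence $\MIc{\X}{\Z}{\Y} = \Hc{\Z}{\Y} - \Hc{\Z}{\X,\Y} = \Hc{\Z}{\Y} - \Hc{\Z}{\X}$, which justifies the underbrace in the statement.

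For the \IB case I would substitute \eqref{eq:key_identity} into the objective to get $\MI{\X}{\Z} - \beta \MI{\Y}{\Z} = (1-\beta)\MI{\Y}{\Z} + \MIc{\X}{\Z}{\Y}$, then rewrite $\MI{\Y}{\Z} = \Entropy{\Y} - \Hc{\Y}{\Z}$ and drop the constant $\Entropy{\Y}$; this leaves $(\beta-1)\Hc{\Y}{\Z} + \MIc{\X}{\Z}{\Y}$, and dividing through by $\beta - 1$ yields $\Hc{\Y}{\Z} + \beta' \MIc{\X}{\Z}{\Y}$ with $\beta' = \tfrac{1}{\beta-1}$. For the first \DIB equality I would proceed identically after expanding $\Entropy{\Z} = \MI{\Y}{\Z} + \Hc{\Z}{\Y}$, again isolating $(\beta-1)\Hc{\Y}{\Z} + \Hc{\Z}{\Y}$ and rescaling. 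For the second \DIB equality I would return to the original objective $\Entropy{\Z} - \beta\MI{\Y}{\Z}$, expand only $\MI{\Y}{\Z} = \Entropy{\Y} - \Hc{\Y}{\Z}$, drop $\Entropy{\Y}$, and rescale by $\beta$, giving $\Hc{\Y}{\Z} + \beta'' \Entropy{\Z}$ with $\beta'' = \tfrac{1}{\beta}$.

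The main point requiring care is the rescaling step: dividing by $\beta - 1$ preserves the minimizer only when $\beta - 1 > 0$, so the whole reformulation is valid precisely on the regime $\beta > 1$, which is the range encoded by $\beta' \in [0,\infty)$ and $\beta'' \in [0,1)$. I would therefore state this constraint explicitly and verify that the two \DIB forms rest on compatible positivity assumptions ($\beta > 1$ for the first, $\beta > 0$ for the second, so $\beta > 1$ suffices for both). The remaining work is bookkeeping: confirming that $\Entropy{\Y}$ is the only data-dependent constant being discarded, and that every identity used is a genuine equality rather than an inequality, so that it transfers unchanged to the differential-entropy setting as noted in \secref{sec:information quantities and information diagrams}. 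No step is deep; the only real subtlety is sign-tracking on the Lagrange multiplier.
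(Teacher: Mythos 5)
Your proposal is correct and follows essentially the same route as the paper's derivation in \secref{app:different perspective on IB and DIB}: the same decomposition $\preservedinfo = \capturedinfo + \redundantinfo$ and $\encodingentropy = \capturedinfo + \rdecoderuncertainty$ (which you derive from the chain rule plus $\MIc{\Y}{\Z}{\X}=0$ rather than reading off the I-diagram), the same discarding of the constant $\labelentropy$, and the same positive rescaling restricted to $\beta > 1$. No substantive differences.
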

In the next sections, we show that \DecoderUncertainty $\decoderuncertainty$ provides a loss term, which minimizes the training error, and \DIB's \RDecoderUncertainty $\rdecoderuncertainty$ and \IB's \RedundantInfo $\redundantinfo$, respectively, provide a regularization term, which helps generalization. 
\CUT{
We implicitly limit ourselves to $\beta \ge 1$ (and allow for $\beta \to \{1, \infty\}$).
For $\beta < 1$, we would be maximizing the \DecoderUncertainty{}, which does not make sense: the trivial solution to this is one where $\Z$ contains no information on $\Y$\footnote{In the case of \DIB{}, the trivial solution is to map every input deterministically to a single value;
whereas for \IB{}, we only minimize the \RedundantInfo{}, and an optimal solution has no information on $\Y$ while being free to contain noise.}.
}
Another perspective can be found by relating the objectives to the Entropy Distance Metric introduced by \citet{Mackay}, which we detail in \secref{app:EDM}.

\subsection{\DecoderUncertainty{} \texorpdfstring{$\decoderuncertainty$}{}}
\label{sec:decoder uncertainty}
The \DecoderUncertainty{} $\decoderuncertainty$ is the first term in our reformulated IB and DIB objectives, and captures the data fit component of the \IB principle. This quantity is not easy to compute directly for arbitrary representations $Z$, so we turn our attention to two related entities instead, where we use $\theta$ as subscript to mark dependence on the model:
the \PredictionCE, denoted $\predictXE$ (more commonly known as the model's cross-entropy loss; see \secref{sec:xel}), and the \DecoderCE, denoted $\decoderXE$.
Noting that $\ic{x} = -\ln x$, we define these terms as follows:
\begin{align}
    & \predictXE := 
    \CEpmpredict 
    = \chainedE{\ic{\chainedE{\pmdecoderce}{\pmencoder}}}{\pxy}
    \\
    & \decoderXE := 
    \CEpmdecoder
    = \chainedE{\chainedE{\ic{\pmdecoderce}}{\pmencoder}}{\pxy}.
\end{align}
Jensen's inequality yields $\predictXE \le \decoderXE$, with equality iff $Z$ is a deterministic function of $X$. The notational similarity\footnote{This notation is compatible with $\mathcal{V}$-Entropy introduced by \citet{Xu2020}.} between $\decoderXE$ and $\decoderuncertainty$ is deliberately suggestive: this cross-entropy bounds the conditional entropy $\decoderuncertainty$, as characterized in the following proposition. 
\begin{observation}
    The \DecoderCE provides an upper bound on the \DecoderUncertainty:
    \begin{align}
        \decoderuncertainty \le \decoderuncertainty + \kale{\pdecoder}{\pmdecoder} = \decoderXE,  \label{eq:decoder bound with KALE} 
    \end{align}
    and further bounds the training error:
    \begin{align}
        \prob{\text{``$\Yhat$ is wrong''}} & \le 1 - e^{-\decoderXE}
        = 1 - e^{-\left (\decoderuncertainty + \kale{\pdecoder}{\pmdecoder} \right )}.
    \end{align}
    Likewise, for $\predictXE$ and $\labeluncertainty$. See \secref{app:minimizing the training error} for a derivation.
\end{observation}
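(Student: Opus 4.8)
The plan is to treat the two claims separately: the cross-entropy bound \eqref{eq:decoder bound with KALE} reduces to the non-negativity of the KL divergence (Gibbs' inequality), while the training-error bound reduces to a single application of Jensen's inequality to the convex map $t \mapsto e^{t}$.

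For the first claim, I would write both sides as expectations over $\pyz$, using $\ic{x}=-\ln x$. The \DecoderUncertainty is $\decoderuncertainty = \E{\ic{\pdecoder}}{\pyz}$, and the conditional KL term expands to $\kale{\pdecoder}{\pmdecoder} = \E{\ln \pdecoder - \ln \pmdecoderce}{\pyz}$. Summing these, the $\ln \pdecoder$ contributions cancel and leave $\E{\ic{\pmdecoderce}}{\pyz}$, which is exactly $\decoderXE$ by its definition; this is the conditional form of the identity ``cross-entropy $=$ entropy $+$ KL.'' The inequality $\decoderuncertainty \le \decoderXE$ then follows immediately because $\kale{\pdecoder}{\pmdecoder} \ge 0$.

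For the error bound, I would first identify the probability of a correct prediction under the model \eqref{eq:probmodel}. Since $\Yhat$ is sampled from $\pmdecoder$ and $\Z$ from $\pmencoder$, marginalizing $\X$ out shows that the probability that the sampled $\Yhat$ equals the target $\Y$ is $\E{\pmdecoderce}{\pyz}$, so $\prob{\text{``$\Yhat$ is wrong''}} = 1 - \E{\pmdecoderce}{\pyz}$. It therefore suffices to prove $\E{\pmdecoderce}{\pyz} \ge e^{-\decoderXE}$. Treating $q := \pmdecoderce$ as a random variable under $\pyz$ and applying Jensen's inequality to $t \mapsto e^{t}$ gives $\E{q}{\pyz} \ge \exp\!\big(\E{\ln q}{\pyz}\big) = \exp(-\decoderXE)$, since $\E{\ln q}{\pyz} = -\decoderXE$. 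Rearranging yields the stated bound. The \emph{likewise} statement is obtained by the identical argument after replacing $\Z$ by $\X$, the decoder $\pmdecoder$ by the discriminative predictor $\pmpredict$ of \eqref{eq:discriminative_model}, and $\pyz$ by $\pxy$.

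The main obstacle is not any single inequality — Gibbs and Jensen are both standard — but the probabilistic bookkeeping in the error step: one must marginalize \eqref{eq:probmodel} correctly, recognize that the correct-classification probability is the \emph{expectation} of the model's assigned probability $\pmdecoderce$ (and not, say, the probability that an $\argmax$ decision is correct, which would give a different and looser relation), and verify that Jensen is applied in the direction that produces a lower bound on that expectation rather than an upper bound.
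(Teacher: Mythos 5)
Your proposal is correct and follows essentially the same route as the paper: the first inequality is the conditional ``cross-entropy $=$ entropy $+$ KL'' identity plus Gibbs' inequality, and the error bound comes from identifying $\prob{\text{``$\Yhat$ is correct''}} = \chainedE{\pmdecoderce}{\pyz}$ and applying Jensen (the paper uses convexity of $-\ln x$, you use convexity of $e^t$, which is the same inequality). Your closing remark about the correct-classification probability being the expectation of the assigned probability, rather than an $\argmax$ accuracy, matches the paper's implicit reading of the probabilistic model.
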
 
Hence, by bounding $\kale{\pdecoder}{\pmdecoder}$, we can obtain a bound for the training error in terms of $\decoderuncertainty$. We examine one way of doing so by using optimal decoders $\pmdecoder := \pdecoderhat$ for the case of categorical $\Z$ in \secref{app:optimal decoders}.

\citet{Alemi2019} use the \DecoderCE bound in equation \eqref{eq:decoder bound with KALE} to variationally approximate $\pdecoder$. We make this explicit by applying the reparameterization trick to rewrite the latent $z$ as a parametric function of its input $x$ and some independent auxiliary random variable $\eta$, i.e. $f_\theta(x, \eta) \overset{D}{=} z \sim \pmencoder$, yielding 
\begin{align}
    \decoderuncertainty \le \decoderXE 
    = \chainedE{\chainedE{\ic{\probmc{\Yhat=y}{z=f_\theta(x; \eta)}}}{\prob{\eta}}}{\pxy}.
    \label{eq:reparam_decoder_xe}
\end{align}
Equation \eqref{eq:reparam_decoder_xe} can be applied to many forms of stochastic regularization that turn deterministic models into stochastic ones, in particular dropout. This allows us to use modern DNN architectures as stochastic encoders.

\textbf{Dropout regularization}
\label{sec:decoderxe and dropout}
When we interpret $\eta$ as a sampled dropout mask for a DNN, DNNs that use dropout regularization \citep{Srivastava2014}, or variants like DropConnect \citep{Wan2013}, fit the equation above as stochastic encoders. Monte-Carlo dropout \citep{Gal2015}, for example, even specifically estimates the predictive mean $\pmpredict$ from equation \eqref{eq:discriminative_model}. The following result extends the observation by \citet{Burda} that sampling yields an unbiased estimator for the \DecoderCE $\decoderXE$, while it only yields a biased estimator for the \PredictionCE $\predictXE$ (which it upper-bounds). 

\begin{corollary}
Let $x, y, z$ and $f_\theta$ be defined as previously, with $\eta$ a sampled stochastic dropout mask. Then $\ic{\probmc{\Yhat=y}{z=f_\theta(x; \eta)}}$ evaluated for a single sample $\eta$ is an unbiased estimator of the \DecoderCE $\decoderXE$, and an estimator of an upper bound on the \PredictionCE $\predictXE$.
\end{corollary}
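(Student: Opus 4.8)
The plan is to compute the expectation of the single-sample quantity directly, recognize it as the definition of $\decoderXE$ to establish unbiasedness, and then invoke the already-established Jensen inequality for the upper-bound claim.

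First I would write the single-sample estimator as $\ic{\probmc{\Yhat=y}{z=f_\theta(x; \eta)}}$ evaluated at one draw $x,y \sim \pxy$ together with one sampled dropout mask $\eta$. Taking the expectation over $\eta$ first, at fixed $x$, I would use the reparameterization equivalence $f_\theta(x, \eta) \overset{D}{=} z \sim \pmencoder$ introduced just above equation~\eqref{eq:reparam_decoder_xe} to identify $\chainedE{\ic{\probmc{\Yhat=y}{z=f_\theta(x; \eta)}}}{\prob{\eta}} = \chainedE{\ic{\pmdecoderce}}{\pmencoder}$. Averaging over $x,y \sim \pxy$ then reproduces exactly the definition $\decoderXE = \chainedE{\chainedE{\ic{\pmdecoderce}}{\pmencoder}}{\pxy}$, so by linearity of expectation the single-sample quantity is unbiased for $\decoderXE$.

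For the second claim I would combine this with the Jensen inequality $\predictXE \le \decoderXE$ established directly below the definitions: since the single-sample quantity is unbiased for $\decoderXE$, and $\decoderXE$ upper-bounds $\predictXE$, the estimator targets a quantity that is an upper bound on $\predictXE$. The contrast with $\predictXE$ itself is the instructive part and worth spelling out: the \PredictionCE places the encoder expectation \emph{inside} the concave $\ic{\cdot}$, namely $\predictXE = \chainedE{\ic{\chainedE{\pmdecoderce}{\pmencoder}}}{\pxy}$, so a single Monte-Carlo draw of $\eta$ cannot be pushed through the nonlinear logarithm without bias; this is the origin of the strict gap whenever $Z$ is genuinely stochastic, and it is also why the estimator is merely upward-biased (hence an estimator of an upper bound) rather than unbiased for $\predictXE$.

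There is no genuinely hard step here: unbiasedness is a direct consequence of linearity of expectation plus the reparameterization identity, and the upper-bound statement follows immediately from the previously stated Jensen bound. The only point requiring care is the order of integration, i.e.\ taking the $\eta$-expectation inside at fixed $x$ before the data expectation, so that the inner average over dropout masks coincides cleanly with the encoder expectation $\chainedE{\cdot}{\pmencoder}$; this is what lets the log-inside-expectation form of $\decoderXE$ be hit without bias while the log-outside-expectation form of $\predictXE$ cannot be.
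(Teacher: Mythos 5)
Your proposal is correct and follows essentially the same route as the paper: unbiasedness for $\decoderXE$ is read off directly from the nested-expectation form in equation~\eqref{eq:reparam_decoder_xe} via the reparameterization identity $f_\theta(x,\eta) \overset{D}{=} z \sim \pmencoder$, and the upper-bound claim for $\predictXE$ is exactly the previously established Jensen inequality $\predictXE \le \decoderXE$. Your remark on why the log-outside-expectation form of $\predictXE$ cannot be estimated without bias matches the paper's framing of the distinction (following \citet{Burda}) and adds nothing that conflicts with it.
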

This distinction between the \DecoderCE and the \PredictionCE has been observed in passing in the literature, but not made explicit.
Multi-sample approaches like \MSD \citep{Inoue2019}, for example, optimize $\decoderXE$, while \IWSGD \citep{Noh2017} optimizes $\predictXE$. 
\citet{Dusenberry2020} observe empirically in the different context of rank-1 Bayesian Neural Networks that optimizing $\decoderXE$ instead of $\predictXE$ is both easier and also yields better generalization performance (NLL, accuracy, and ECE), while they also put forward an argument for why the stochastic gradients for $\decoderXE$ might benefit from lower variance. 
We empirically compare training with either cross-entropy in \secref{sec:exp for cross-entropies} and show results in \figref{fig:continuous_bounds_all} in the appendix.
We conclude this section by highlighting that $\decoderuncertainty$ is therefore already minimized in modern DNN architectures that use dropout together with a cross-entropy loss. This means that, at least for one half of our reformulation of the \IB objective, we can apply off-the-shelf, scalable objectives and optimizers for its minimization.

\subsection{Surrogates for the regularization terms}
\label{sec:regularizer estimators}

In the previous section, we have examined how to tractably estimate the error minimization term $\decoderuncertainty$.
In this section, we will examine tractable optimization of the regularization terms $\rdecoderuncertainty$ and $\redundantinfo$, respectively. We discuss how to minimize entropies meaningfully and show how this unifies \DIB and \IB via the inequality $\redundantinfo \le \rdecoderuncertainty \le \encodingentropy$ before providing tractable upper-bounds for $\rdecoderuncertainty$ and $\encodingentropy$. 

\begin{figure}[!tbp]
	\begin{minipage}[t]{0.49\textwidth}
        \centering
        \includegraphics[width=0.95\linewidth, clip, trim=0 0 0 0]{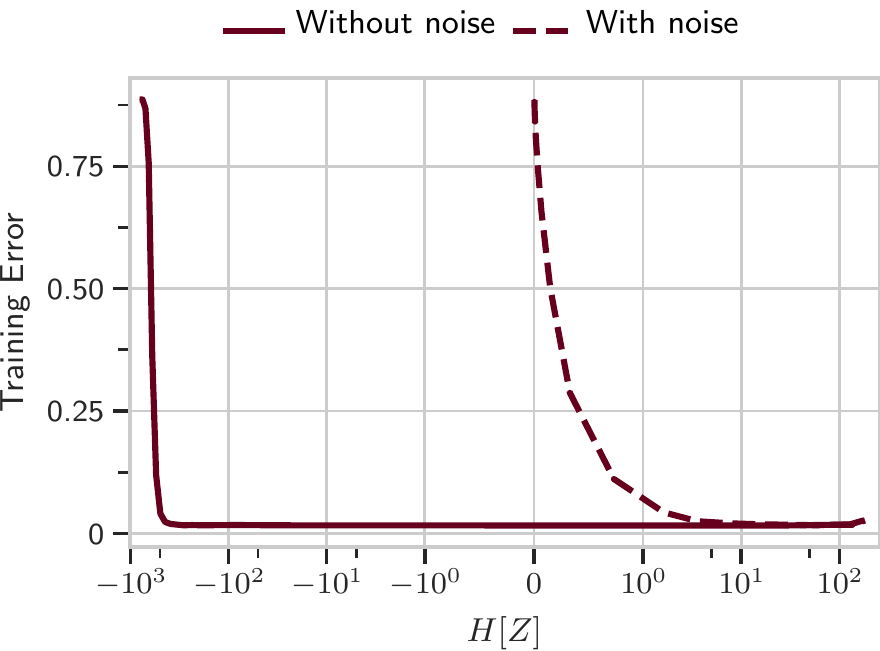}
        \caption{\emph{Decreasing the entropy of a noise-free latent does not affect the training error.} (Though floating-point issues start affecting it negatively eventually.) When adding \ZeroEntropyNoise, the error rate increases as the entropy approaches zero. See \secref{sec:continuous entropies} and \ref{sec:exp minimizing entropy} for more details.
        }
        \label{fig:entropy_minimization_and_noise}
	\end{minipage}
	\hfill
	\begin{minipage}[t]{0.49\textwidth}
        \includegraphics[width=0.95\linewidth, clip, trim=0 0 0 0]{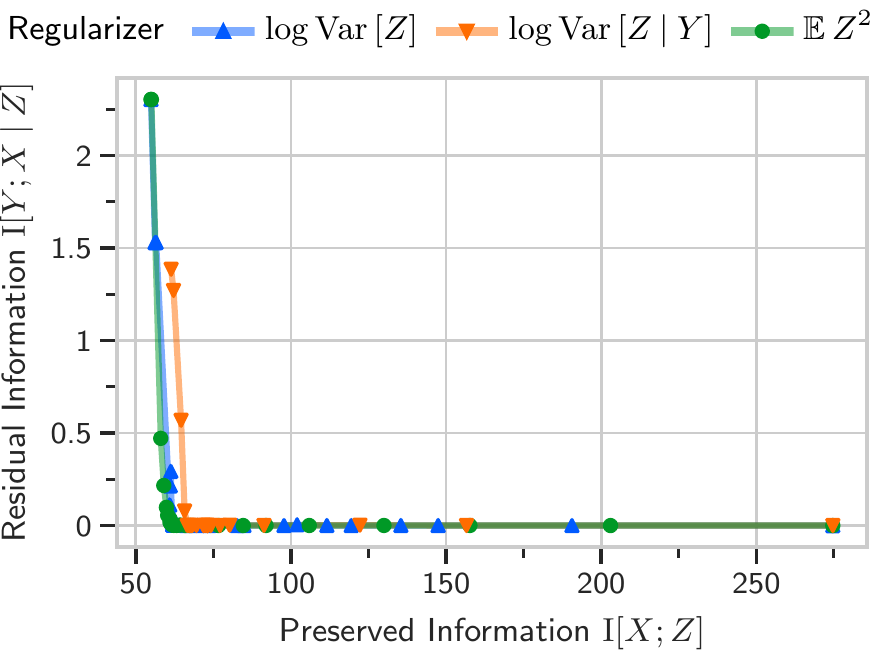}
        \caption{\emph{Information plane plot of the latent $\Z$ similar to \citet{Tishby2015} but using a \emph{ResNet18} model on \emph{CIFAR-10} using the different regularizes from \secref{sec:summary surrogate objectives}.} A larger version can be found in \figref{fig:cifar10_kraskov_IBP_big}.
        See \secref{sec:exp info plane plots} for more details. Best viewed in color.}
        \label{fig:cifar10_beta_ib}
    \end{minipage}
    \vspace{-1.5em}
\end{figure}

\textbf{Differential entropies}
\label{sec:continuous entropies}
In most cases, the latent $\Z$ is a continuous random variable in many dimensions.
Unlike entropies on discrete probability spaces, differential entropies defined on continuous spaces are not bounded from below.
This means that the \DIB objective is not guaranteed to have an optimal solution and allows for pathological optimization trajectories in which the variance of the latent $Z$ can be scaled to be arbitrarily small, achieving arbitrarily high-magnitude negative entropy.
We provide a toy experiment demonstrating this in \secref{sec:exp minimizing entropy}.

Intuitively, one can interpret this issue as being allowed to encode information in an arbitrarily-small real number using infinite precision, similar to arithmetic coding \citep{Mackay, Shwartz-Ziv2017}\footnote{Conversely, \citet{Mackay} notes that without upper-bounding the ``power" $\chainedE{\Z^2}{\pz}$, all information could be encoded in a single very large integer.}. 
In practice, due to floating point constraints, optimizing DIB naively will invariably end in garbage predictions and underflow as activations approach zero. It is therefore not desirable for training. This is why \citet{Strouse2016} only consider analytical solutions to \DIB by evaluating a limit for the tabular case. \citet{Mackay} proposes the introduction of noise to solve this issue in the application of continuous communication channels. 

However, here we propose adding specific noise to the latent representation to lower-bound the conditional entropy of $Z$,
which allows us to enforce non-negativity across all \IB information quantities as in the discrete case and transport inequalities to the continuous case:
for a continuous $\hat{\Z} \in \reals^k$ and independent noise $\eps$, we set $\Z := \hat{\Z}+\epsilon$; the differential entropy then satisfies $\encodingentropy = \Entropy{\hat{\Z}+\epsilon} \geq \Entropy{\epsilon}$; and by using \emph{\ZeroEntropyNoise} $\epsilon \sim \normaldist{0}{\frac{1}{2\pi e} I_k}$ specifically, we obtain
\begin{math}
    \encodingentropy \geq \Entropy{\epsilon} = 0.
\end{math}
\begin{observation}
After adding \ZeroEntropyNoise, the inequality $\redundantinfo \le \rdecoderuncertainty \le \encodingentropy$ also holds for continuous $\Z$, and we can minimize $\redundantinfo$ in the \IB objective by minimizing $\rdecoderuncertainty$ or $\encodingentropy$, similarly to the \DIB objective. 
\end{observation}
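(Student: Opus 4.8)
The plan is to reduce the whole statement to two elementary chain-rule identities and then control the single term that differential entropy allows to become negative. First I would write
\[
\redundantinfo = \rdecoderuncertainty - \Hc{\Z}{\X,\Y}, \qquad \rdecoderuncertainty = \encodingentropy - \MI{\Z}{\Y}.
\]
The right-hand inequality $\rdecoderuncertainty \le \encodingentropy$ needs no modification at all: $\MI{\Z}{\Y}$ is a Kullback--Leibler divergence and hence non-negative, even for a continuous $\Z$ (with categorical $\Y$). The left-hand inequality $\redundantinfo \le \rdecoderuncertainty$ is equivalent to $\Hc{\Z}{\X,\Y} \ge 0$, and $\Hc{\Z}{\X,\Y}$ is exactly the quantity that differential entropy can send to $-\infty$ under the pathological scaling described in \secref{sec:continuous entropies}. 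This term is therefore where the \ZeroEntropyNoise\ must do its work.

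Second, I would establish $\Hc{\Z}{\X,\Y} \ge 0$ after setting $\Z := \hat{\Z} + \eps$ with $\eps \sim \normaldist{0}{\frac{1}{2\pi e} I_k}$ independent of $(\X,\Y,\hat{\Z})$. The argument is a short two-step bound: conditioning reduces entropy, so $\Hc{\Z}{\X,\Y} \ge \Hc{\Z}{\X,\Y,\hat{\Z}}$ (the gap is a non-negative conditional mutual information, which holds for differential entropies too); and given $\hat{\Z}$, the variable $\Z = \hat{\Z} + \eps$ is a deterministic translate of the independent noise $\eps$, so by translation-invariance of differential entropy $\Hc{\Z}{\X,\Y,\hat{\Z}} = \Entropy{\eps}$. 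With the chosen variance, $\Entropy{\eps} = \frac{k}{2}\log\!\big(2\pi e \cdot \tfrac{1}{2\pi e}\big) = 0$, hence $\Hc{\Z}{\X,\Y} \ge 0$. Both facts invoked here — monotonicity of entropy under conditioning and invariance under constant shifts — transfer verbatim from the discrete to the differential setting, which is what makes the continuous case go through cleanly.

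Finally, I would combine the two identities with $\MI{\Z}{\Y} \ge 0$ and the freshly obtained $\Hc{\Z}{\X,\Y} \ge 0$ to read off the full chain $\redundantinfo \le \rdecoderuncertainty \le \encodingentropy$ for continuous $\Z$. Since $\rdecoderuncertainty$ and $\encodingentropy$ are then upper bounds on $\redundantinfo$ that are moreover floored at a meaningful (non-negative) value rather than diverging, minimizing either surrogate drives down the \IB\ regularizer $\redundantinfo$, precisely as $\rdecoderuncertainty$ and $\encodingentropy$ serve the \DIB\ objective.

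I expect the main obstacle to be the second step, and specifically the fact that the lower bound supplied by the noise must survive conditioning on $(\X,\Y)$: it is \emph{not} enough to bound the unconditional $\encodingentropy \ge \Entropy{\eps}$ as the surrounding text does, since the offending term is $\Hc{\Z}{\X,\Y}$. The clean fix is to condition further on $\hat{\Z}$ and use independence of $\eps$, but one should check that $\eps$ is independent of the conditioning variables \emph{jointly} (not merely pairwise), and that the translation-invariance of differential entropy is being applied to the conditional law for almost every fixed $(x,y,\hat z)$ before integrating back out.
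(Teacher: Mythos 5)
Your proof is correct and follows essentially the same route as the paper's: the same two chain-rule decompositions, non-negativity of $\MI{\Y}{\Z}$ for the right-hand inequality, and a lower bound of $\Entropy{\epsilon}=0$ on the conditional entropy of $\Z$ for the left-hand one. The only cosmetic difference is that the paper reduces $\Hc{\Z}{\X,\Y}$ to $\Hc{\Z}{\X}$ via the conditional independence $\Z \perp \Y \mid \X$ and then invokes $\Entropy{A+B} \ge \Entropy{B}$ for independent summands, whereas you condition further on $\hat{\Z}$ and use translation invariance of differential entropy, which amounts to an inline proof of that same lemma.
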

Strictly speaking, \ZeroEntropyNoise is not necessary for optimizing the bounds: any Gaussian noise is sufficient, but \ZeroEntropyNoise is aesthetically appealing as it preserves inequalities from the discrete setting. In a sense, this propostion bounds the \IB objective by the \DIB objective. 
However, adding noise changes the optimal solutions: whereas \DIB in \citet{Strouse2016} leads to hard clustering in the limit, adding noise leads to soft clustering when optimizing the \DIB objective, as is the case with the \IB objective. We show 
in \secref{sec:entropy minimization with gaussian noise} that minimizing the \DIB objective with noise leads to soft clustering (for the case of an otherwise deterministic encoder).
Altogether, in addition to \citet{Shwartz-Ziv2017}, we argue that noise is essential to obtain meaningful differential entropies and to avoid other pathological cases as described further in \secref{sec:pathologies}.
\label{sec:upper-bounding rdecoderuncertainty}%
It is not generally possible to compute $\rdecoderuncertainty$ exactly for continuous latent representations $Z$, but we can derive an upper bound.
The maximum-entropy distribution for a given covariance matrix $\Sigma$ is a Gaussian with the same covariance. 
\begin{observation}
\label{obs:simple_approximators}
    The \RDecoderUncertainty{} can be approximately bounded using the empirical variance $\hcVar{\Z_i}{\y}$:
    \begin{align}
        \rdecoderuncertainty &\le \chainedE{ \sum_i \tfrac{1}{2} \ln (2 \pi e \; {\cVar{\Z_i}{\y})}}{\py} \approx \chainedE{ \sum_i \tfrac{1}{2} \ln (2 \pi e \; {\hcVar{\Z_i}{\y})}}{\py},
    \end{align}
    where $\Z_i$ are the individual components of $\Z$. $\encodingentropy$ can be bounded similarly. More generally, we can create an even looser upper bound by bounding the mean squared norm of the latent:
    \begin{align}
        \MSA \le C' \Rightarrow \rdecoderuncertainty \le \encodingentropy \le C,
    \end{align}
    with $C':=\frac{k e^{2C/k}}{2 \pi e}$ for $\Z \in \reals^k$.
    See \secref{sec:math for upperbound} for proof.
\end{observation}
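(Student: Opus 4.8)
The plan is to derive both bounds from the maximum-entropy characterization stated immediately above: among all distributions on $\reals^k$ with a fixed covariance $\Sigma$, the Gaussian has the largest differential entropy, namely $\frac{1}{2} \ln ( (2\pi e)^k \det \Sigma )$.

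For the per-component bound I would fix a label value $y$ and apply this to the conditional law of $\Z$ given $\Y = y$, whose covariance I write as $\Sigma_y$, giving $\Hc{\Z}{\Y=y} \le \frac{1}{2} \ln ( (2\pi e)^k \det \Sigma_y )$. To trade the determinant for the individual component variances $\cVar{\Z_i}{y} = (\Sigma_y)_{ii}$, I would invoke Hadamard's inequality $\det \Sigma_y \le \prod_i (\Sigma_y)_{ii}$, so that the logarithm splits into $\sum_i \frac{1}{2} \ln ( 2\pi e \, \cVar{\Z_i}{y} )$. Taking the expectation over $y \sim \py$ then yields the stated bound on $\rdecoderuncertainty$; dropping the conditioning on $\Y$ throughout gives the analogous bound on $\encodingentropy$. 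The trailing $\approx$ is simply the plug-in step in which the population conditional variances are replaced by their empirical estimators $\hcVar{\Z_i}{y}$, which is the only place the relation stops being an exact inequality.

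For the looser mean-squared-norm bound I would first use the ordering $\rdecoderuncertainty \le \encodingentropy$ established earlier, so that it suffices to control $\encodingentropy$. Starting once more from $\encodingentropy \le \frac{1}{2} \ln ( (2\pi e)^k \det \Sigma )$ with $\Sigma$ the unconditional covariance of $\Z$, I would apply the AM--GM inequality to the eigenvalues $\lambda_i$ of $\Sigma$ to get $\det \Sigma = \prod_i \lambda_i \le ( \frac{1}{k} \tr \Sigma )^k$. Since $\tr \Sigma = \sum_i \Var{\Z_i} \le \sum_i \E{\Z_i^2}{} = \MSA \le C'$, substituting gives $\encodingentropy \le \frac{k}{2} \ln ( \frac{2 \pi e}{k} C' )$. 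The final step is purely algebraic: solving $\frac{k}{2} \ln ( \frac{2 \pi e}{k} C' ) = C$ for $C'$ returns exactly $C' = \frac{k e^{2C/k}}{2\pi e}$, as claimed.

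Every inequality used here is standard --- the Gaussian maximum-entropy bound, Hadamard, and AM--GM --- so no step is conceptually difficult. The only things that require care are keeping the constants consistent so that the final bound reproduces the stated $C'$ exactly, and being explicit that the variance-based estimate is approximate (hence $\approx$) because it substitutes empirical conditional variances for the true ones.
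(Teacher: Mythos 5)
Your proof is correct and follows essentially the same route as the paper: Gaussian maximum-entropy bound conditioned on $\Y=y$, reduction of the determinant to the product of diagonal variances, expectation over $\py$, then AM--GM and $\sum_i \Var{\Z_i} \le \MSA$ for the looser bound, with the same algebra recovering $C' = \frac{k e^{2C/k}}{2\pi e}$. The only cosmetic differences are that you invoke Hadamard's inequality by name where the paper derives it from the non-negativity of the KL divergence between two Gaussians, and you apply AM--GM to the eigenvalues of the covariance rather than to the diagonal entries after the Hadamard step --- both yield the identical chain of bounds.
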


\textbf{Surrogate objectives} 
\label{sec:summary surrogate objectives}
These surrogate terms provide us with three different upper-bounds that we can use as surrogate regularizers. We refer to them as: conditional log-variance regularizer ($\logVarZY$), log-variance regularizer ($\logVarZ$) and activation $L_2$ regularizer ($\MSA$). 
We can now propose the main results of this paper: \IB surrogate objectives that reduce to an almost trivial implementation using the cross-entropy loss and one of the regularizers above while adding \ZeroEntropyNoise to the latent $\Z$.

\begin{theorem}
Let $\Z$ be obtained by adding a single sample of \ZeroEntropyNoise to a single sample of the output $z$ of the stochastic encoder. Then each of the following objectives is an estimator of an upper bound on the IB objective. In particular, for the surrogate objective $\MSA$, we obtain:
\begin{align}
    \min\CEpmdecoder + \gamma \|z\|^2; \label{eq:objective}
\end{align}
for $\logVarZY$:
\begin{align}
\label{eq:varobjective}
    \min \CEpmdecoder + \gamma \chainedE{ \sum_i \frac{1}{2} \ln (2 \pi e \; {\hcVar{\Z_i}{\y})}}{\py}; %
\end{align}
and for $\logVarZ$:
\begin{align}
    \min \CEpmdecoder + \gamma  \sum_i \frac{1}{2} \ln (2 \pi e \; {\hVar{\Z_i}}). \label{eq:logvarobjective}
\end{align} 
\end{theorem}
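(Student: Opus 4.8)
The plan is to assemble the three claimed objectives by composing the inequalities already established in the preceding sections, and then to argue that a single joint sample of the dropout mask and the Gaussian noise turns the resulting population bounds into (stochastic) estimators. I would start from the reformulated IB objective of \Cref{obs:rewritten_objectives}, namely $\decoderuncertainty + \beta' \redundantinfo$ with $\beta' \in [0,\infty)$, bound its two pieces separately, and then fold $\beta'$ into the free coefficient $\gamma$.

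For the data-fit term I would invoke the Decoder Cross-Entropy bound $\decoderuncertainty \le \decoderuncertainty + \kale{\pdecoder}{\pmdecoder} = \decoderXE = \CEpmdecoder$, which holds verbatim for continuous $\Z$. For the regularization term I would use that, once \ZeroEntropyNoise has been added to the encoder output, the chain $\redundantinfo \le \rdecoderuncertainty \le \encodingentropy$ carries over to the continuous case (the continuous-entropy observation of \Cref{sec:continuous entropies}). This already gives $\decoderuncertainty + \beta'\redundantinfo \le \CEpmdecoder + \beta'\rdecoderuncertainty \le \CEpmdecoder + \beta'\encodingentropy$, so it remains to replace $\rdecoderuncertainty$ and $\encodingentropy$ by the tractable surrogates of \Cref{obs:simple_approximators}. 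Substituting the maximum-entropy Gaussian bounds $\rdecoderuncertainty \le \chainedE{\sum_i \tfrac{1}{2}\ln(2\pi e\,\cVar{\Z_i}{\y})}{\py}$ and $\encodingentropy \le \sum_i \tfrac{1}{2}\ln(2\pi e\,\Var{\Z_i})$ and setting $\gamma := \beta'$ yields \eqref{eq:varobjective} and \eqref{eq:logvarobjective} directly, with the empirical variances $\hcVar{\Z_i}{\y}$ and $\hVar{\Z_i}$ standing in as sample estimators of the true (conditional) variances. The estimator claim then follows from the single-sample facts already recorded: by the corollary on dropout, $\ic{\probmc{\Yhat=y}{z}}$ evaluated at one sample $z = f_\theta(x;\eta)+\epsilon$ is an unbiased estimator of $\decoderXE = \CEpmdecoder$, while the batched empirical variances estimate their population counterparts, so composing an estimator of the first term with estimators of upper bounds on the second gives an estimator of an upper bound on the IB objective.

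The main obstacle is the $L_2$ objective \eqref{eq:objective}, because the power bound of \Cref{obs:simple_approximators} reads $\MSA \le C' \Rightarrow \encodingentropy \le C$ with $C$ logarithmic in $C'$, so $\gamma\|z\|^2$ is \emph{not} a direct additive upper bound on $\beta'\encodingentropy$ the way the two log-variance terms are. I would close this gap with a Lagrangian/penalty argument: minimizing $\CEpmdecoder + \gamma\|z\|^2$ is the penalized form of minimizing $\CEpmdecoder$ subject to a power constraint $\MSA \le C'$, and on the feasible set that constraint forces $\redundantinfo \le \encodingentropy \le C = \tfrac{k}{2}\ln(\tfrac{2\pi e}{k}C')$ for $\Z \in \reals^k$; hence sweeping $\gamma$ traces out minimizers of $\CEpmdecoder$ under a bound on the IB regularizer, i.e. of an upper bound on the IB objective. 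Here $\|z\|^2$ is an unbiased single-sample estimate of the power $\MSA$, which in turn controls $\encodingentropy$ through the monotone map above.

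A secondary point requiring care is the exact reading of ``estimator of an upper bound.'' The empirical-variance surrogates are only approximations of the population bounds (the $\approx$ in \Cref{obs:simple_approximators}), and $\|z\|^2$ estimates $\MSA$ rather than the logarithmic entropy bound itself; so the statement I would prove is that the estimated quantity bounds the IB objective \emph{in expectation} (and, for the data-fit term, unbiasedly per sample), not that each realization bounds it pathwise. I would flag this explicitly rather than claim a deterministic inequality.
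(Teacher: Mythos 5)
Your proposal is correct and follows essentially the same route as the paper: the reformulation of \Cref{obs:rewritten_objectives}, the bound $\decoderuncertainty \le \decoderXE$, the noise-induced chain $\redundantinfo \le \rdecoderuncertainty \le \encodingentropy$, and the Gaussian maximum-entropy/log-variance bounds of \Cref{obs:simple_approximators} are exactly the ingredients the paper composes, and your caveat that the bound holds in expectation rather than pathwise matches the paper's own reading. The only divergence is the $\MSA$ case: you claim a direct additive bound is unavailable and fall back on the Lagrangian/penalty reading, whereas the paper (in \secref{sec:math for upperbound}) additionally derives the direct additive bound $\encodingentropy \le \pi\,\MSA$ via $\ln x \le x - 1$, offering the constrained-optimization argument only as a second, complementary motivation — so your route is one the paper endorses, but you missed the simpler one.
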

For the latter two surrogate regularizers, we can relate their coefficient $\gamma$ to $\beta'$, $\beta''$ and $\beta$ from \secref{sec:different perspective on IB and DIB}. However, as regularizing $\MSA$ does not approximate an entropy directly, its coefficient does not relate to the Lagrange multiplier of any fixed \IB objective. 
We compare the performance of these objectives in \secref{sec:exp surrogate regularizers}.

\section{Experiments}
\label{sec:experiments}

\begin{figure}[!tbp]
    \begin{minipage}[t]{\textwidth}
        \centering
        \includegraphics[width=0.6\textwidth, clip, trim=0 160 0 0]{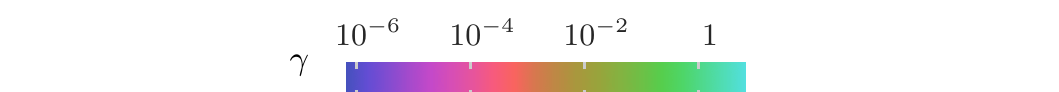}
    \end{minipage}
	\begin{minipage}[t]{0.49\textwidth}
        \centering
        \includegraphics[width=\linewidth, clip, trim=0 0 0 35]{iclr_images/main_robustness.pdf}
        \subcaption{\emph{Robustness for different attack strengths $\eps$.} The dashed black line represents a model trained only with cross-entropy and no noise injection. We see that models trained with the surrogate IB objective (colored by $\gamma$) see improved robustness over a model trained only to minimize the cross-entropy training objective (shown in black) while the models regularized with weight-decay actually perform worse.}
	\end{minipage}
	\hfill
    \begin{minipage}[t]{0.49\textwidth}
        \includegraphics[width=0.95\linewidth, clip, trim=0 0 0 30]{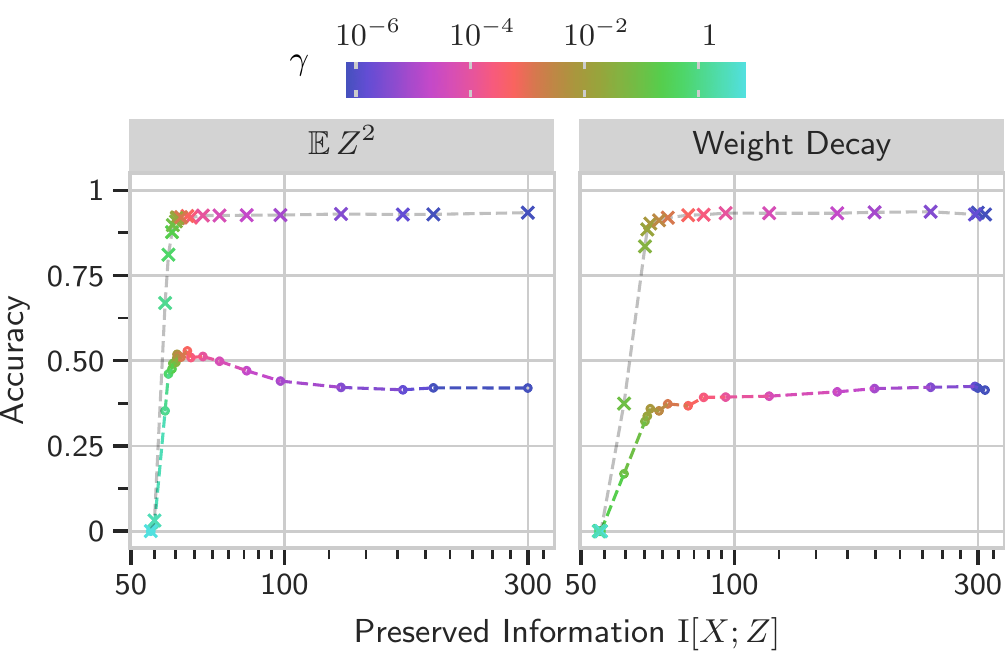}
        \subcaption{\emph{Average robustness over $\eps \in [0, 0.1]$ compared to normal accuracy for different amounts of \PreservedInfo.} $\circ$ markers show robustness. $\times$ markers show the normal accuracy. We see that robustness depends on the \PreservedInfo. If the latent is compressesed too much, robustness (and accuracy) are low. If the latent is not compressed enough, robustness and thus generalization suffer.}
    \end{minipage}
    \caption{\emph{Adversarial robustness of ResNet18 models trained on CIFAR-10 with surrogate objectives in comparison to regularization with L2 weight-decay as non-IB method.} The robustness is evaluated using FGSM, PGD, DeepFool and BasicIterative attacks of varying $\epsilon$ values.}
    \label{fig:adversarial_robustness}
    \vspace{-1.5em}
\end{figure}

We now provide empirical verification of the claims made in the previous sections. Our goal in this section is to highlight two main findings: first, that our surrogate objectives obtain similar behavior to what we expect of exact IB objectives with respect to their effect on robustness to adversarial examples. In particular, we show that our surrogate \IB objectives improve adversarial robustness compared to models trained only on the cross-entropy loss, consistent with the findings of \citet{Alemi2019}.
Second, we show the effect of our surrogate objectives on information quantities during training by plotting information plane diagrams, demonstrating that models trained with our objectives trade off between $I[X;Z]$ and $I[Y;Z]$ as expected. We show this by recovering information plane plots similar to the ones in \citet{Tishby2015} and qualitatively examine the optimization behavior of the networks through their training trajectories.
We demonstrate the scalability of our surrogate objectives by applying our surrogate IB objectives to the CIFAR-10 and Imagenette datasets, high-dimensional image datasets.

For details about our experiment setup, DNN architectures, hyperparameters and additional insights, see \secref{app:experiment_details}. In particular, empirical quantification of our observations on the relationship between the \DecoderCE loss and the \PredictionCE are deferred to the appendix due to space limitations as well as the description of the toy experiment that shows that minimizing $\rdecoderuncertainty$ for continuous latent $\Z$ without adding noise does not constrain information meaningfully and that adding noise solves the issue as detailed in \secref{sec:continuous entropies}. 

\textbf{Robustness to adversarial attacks}
\label{sec:robustness}
\citet{Alemi2019} and \citet{fischer2020modelrobustness} observe that their \IB objectives lead to improved adversarial robustness over standard training objectives. We perform a similar evaluation to see whether our surrogate objectives also see improved robustness. We train a fully-connected residual network on CIFAR-10 for a range of regularization coefficients $\gamma$ using our $\MSA$ surrogate objective; we then compare against a similar regularization method that does not have an information-theoretic interpretation: L2 weight-decay. We inject \ZeroEntropyNoise in both cases. 
After training, we evaluate the models on adversarially perturbed images using the FGSM \citep{szegedy2013intriguing}, PGD \citep{madry2018towards}, BasicIterative \citep{kurakin2016adversarial} and DeepFool \citep{moosavi2016deepfool} attacks for varying levels of the perturbation magnitude parameter $\epsilon$. We also compare to a simple unregularized cross-entropy baseline (black dashed line). To compute overall robustness, we use each attack in turn and only count a sample as robust if it defeats them all.
As depicted in \figref{fig:adversarial_robustness}, we find that our surrogate objectives yield significantly more robust models while obtaining similar test accuracy on the unperturbed data whereas weight-decay regularization reduces robustness against adversarial attacks.
Plots for the other two regularizers can be found in the appendix in \figref{fig:appendix_robustness} and \figref{fig:appendix_integrated_robustness}.

\textbf{Information plane plots for CIFAR-10}
\label{sec:exp surrogate regularizers}
To compare the different surrogate regularizers, we again use a ResNet18 model on CIFAR-10 with \ZeroEntropyNoise added to the final layer activations $\Z$, with $K=256$ dimensions, as an encoder and add a single $K \times 10$ linear unit as a decoder.
We train with the surrogate objectives from \secref{sec:regularizer estimators} for various $\gamma$, chosen in logspace from different ranges to compensate for their relationship to $\beta$ as noted in \secref{sec:summary surrogate objectives}:
for $\logVarZ$, $\gamma \in [10^{-5}, 1]$;
for $\logVarZY$, $\gamma \in [10^{-5}, 10]$;
and for $\MSA$, by trial and error, $\gamma \in [10^{-6},10]$.
We estimate information quantities using the method of \citet{Kraskov2003}.

\begin{figure}[!tbp]
    \centering
    \includegraphics[width=0.95\linewidth, clip, trim=0 0 0 5]{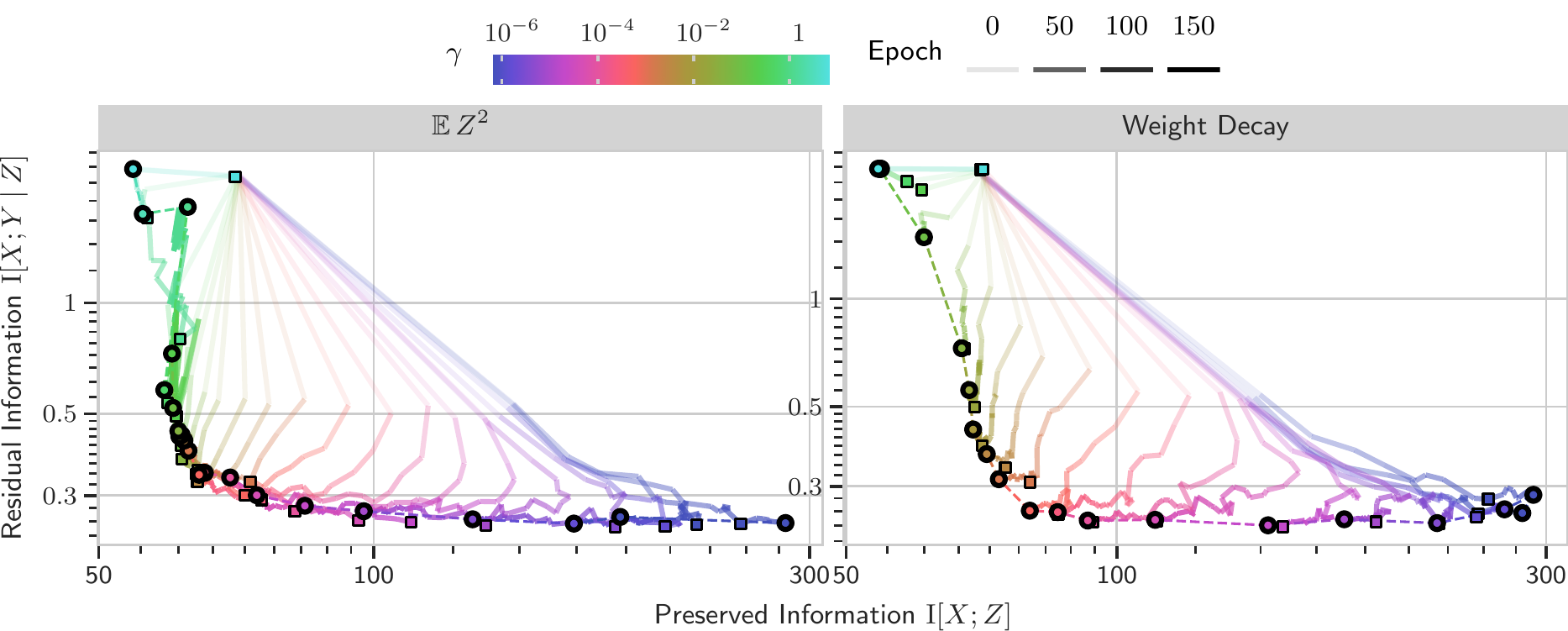}%
    \caption{\emph{Information plane plot of the training trajectories of ResNet18 models with the $\MSA$ surrogate objective or L2 weight-decay on CIFAR-10.} The color shows $\gamma$; the transparency the training epoch. %
    Compression (\PreservedInfo $\downarrow)$ trades-off with performance (\ResidualInfo $\downarrow$). See \secref{sec:exp surrogate regularizers}. While the trajectories are similar, robustness is very different, see \figref{fig:adversarial_robustness}.
    }%
    \label{fig:cifar10_beta_trajectories}%
    \vspace{-1.5em}
\end{figure}

\label{sec:exp info plane plots}
\Figref{fig:cifar10_beta_trajectories} shows an information plane plot for regularizing with $\MSA$ for different $\gamma$ over different epochs for the training set. 
Similar to \citet{Shwartz-Ziv2017}, we observe that there is an initial expansion phase followed by compression.
The jumps in performance (reduction of the \ResidualInfo) are due to drops in the learning rate. 
In \figref{fig:cifar10_beta_ib}, we can see that the saturation curves for all 3 surrogate objectives qualitatively match the predicted curve from \citet{Tishby2015}. 
\Figref{fig:cifar10_kraskov_IBP_big} shows the difference between the regularizers more clearly, and
\figref{appfig:cifar10_beta_trajectories_no_dropout} shows the training trajectories for all three regularizers. More details in \secref{app:measuring information quantities}.

\textbf{Information plane plots for Imagenette} %
To show that our surrogate objectives also scale up to larger datasets, we run a similar experiment on Imagenette \citep{imagewang}, which is a subset of ImageNet with 10 classes with $224 \times 224 \times 3=1.5\times10^5$ input dimensions, and on which we obtain 90\% test accuracy. See the \figref{fig:imagenette_beta_trajectories_test}, which shows the trajectories on the test set. We obtain similar plots to the ones obtained for CIFAR-10, \emph{showing that our surrogate objectives scale well to higher-dimensional datasets despite their simplicity.}

\section{Conclusion}

The contributions of this paper have been threefold:
First, we have proposed simple, tractable training objectives which capture many of the desirable properties of \IB methods while also scaling to problems of interest in deep learning. 
For this we have introduced implicit stochastic encoders, e.g. using dropout, and compared multi-sample dropout approaches to identify the one that approximates the \DecoderUncertainty $\decoderXE$, relating them to the cross-entropy loss that is commonly used for classification problems. This widens the range of DNN architectures that can be used with \IB objectives considerably.
We have demonstrated that our objectives perform well for practical DNNs without cumbersome density models.
Second, we have motivated our objectives by providing insight into limitations of \IB training, demonstrating how to avoid pathological behavior in \IB objectives, and  
by endeavouring to provide a unifying view on \IB approaches.
Third, we have provided mathematically grounded intuition by using I-diagrams for the information quantities involved in \IB, shown common pitfalls when using information quantities and how to avoid them, and examined how the quantities relate to each other.
Future work investigating the practical constraints on the expressivity of a given neural network may provide further insight into how to measure compression in neural networks. %
Moreover, the connection to Bayesian Neural Networks remains to be explored.

\section*{Acknowledgements}

The authors want to thank Sam Ballard for revamping the color scheme, and Lewis Smith and Tim Rudner for fruitful discussions. We would also like to thank OATML in general for their feedback at several stages of the project.
AK is supported by the UK EPSRC CDT in Autonomous Intelligent Machines
and Systems (grant reference EP/L015897/1). 
CL is supported by an Open Philanthropy Fund AI Fellowship.

\clearpage

\bibliographystyle{plainnat}
\bibliography{submission}
\clearpage
\appendix
\counterwithin{figure}{section}

\section{Information quantities \& information diagrams}
\label{sec:information quantities and information diagrams long}

Here we introduce notation and terminology in greater detail than in the main paper. We review well-known information quantities and provide more details on using information diagrams \citep{Yeung1991}.

\subsection{Information quantities}
\label{app:information quantities}
We denote entropy $\Entropy{\cdot}$, joint entropy $\Entropy{\cdot,\cdot}$, conditional entropy $\Hc{\cdot}{\cdot}$, mutual information $\MI{\cdot}{\cdot}$ and Shannon's information content $\ic{\cdot}$ following \citet{Cover1991,Mackay, Shannon} :
\begin{align*}
    \ic{x} &= -\ln x \\
    \Entropy{X} &= \chainedE{\ic{\prob{x}}}{\prob{x}} \\
    \Entropy{X, Y} &= \chainedE{\ic{\prob{x, y}}}{\prob{x, y}} \\
    \Hc{X}{Y} &= \Entropy{X, Y} - \Entropy{Y} \\
    &= \chainedE{\Hc{X}{y}}{\prob{y}} = \chainedE{\ic{\probc{x}{y}}}{\prob{x, y}} \\
    \MI{X}{Y} &= \Entropy{X} + \Entropy{Y} - \Entropy{X, Y} \\
    &= \chainedE{\ic{\tfrac{\prob{x}\prob{y}}{\prob{x,y}}}}{\prob{x,y}} \\
    \MIc{X}{Y}{Z} &= \Hc{X}{Z} + \Hc{Y}{Z} - \Hc{X, Y}{Z},
\end{align*}
where $X, Y, Z$ are random variables and $x, y, z$ are outcomes these random variables can take. 

We use differential entropies interchangeably with entropies. We can do so because equalities between them hold as can be verified by symbolic expansions. For example,
\begin{align*}
    & \Entropy{X, Y} = \Hc{X}{Y} + \Entropy{Y} \\
    \Leftrightarrow & \chainedE{\ic{\prob{x, y}}}{\prob{x, y}} = 
    \E{\ic{\probc{x}{y}} + \ic{\prob{y}}}{\prob{x, y}} =
    \E{\ic{\probc{x}{y}}}{\prob{x, y}} + \chainedE{\ic{\prob{y}}}{\prob{y}},
\end{align*}
which is valid in both the discrete and continuous case (if the integrals all exist).
The question of how to transfer inequalities in the discrete case to the continuous case is dealt with in \secref{sec:continuous entropies}.

We will further require the Kullback-Leibler divergence $\kale{\cdot}{\cdot}$ and cross-entropy $\CrossEntropy{\cdot}{\cdot}$:
\begin{align*}
    \CrossEntropy{\prob{x}}{\qprob{x}} &= \chainedE{\ic{\qprob{x}}}{\prob{x}} \\
    \kale{\prob{x}}{\qprob{x}} &= \chainedE{\ic{\tfrac{\qprob{x}}{\prob{x}}}}{\prob{x}} \\
    \CrossEntropy{\probc{y}{x}}{\qprobc{y}{x}} &= \chainedE{\chainedE{\ic{\qprobc{y}{x}}}{\probc{y}{x}}}{\prob{x}} \\
    &= \chainedE{\ic{\qprobc{y}{x}}}{\prob{x, y}} \\
    \kale{\probc{y}{x}}{\qprobc{y}{x}} &=
    \chainedE{\ic{\tfrac{\qprobc{y}{x}}{\probc{y}{x}}}}{\prob{x, y}}  
\end{align*}

\subsection{Information diagrams}
\label{app:information diagrams}

Information diagrams (I-diagrams), like the one depicted in \figref{fig:mickeymouse} (or \figref{appfig:big mickey mouse} for a bigger version), visualize the relationship between information quantities: \citet{Yeung1991} shows that we can define a signed measure $\opmus$ such that these well-known quantities map to abstract sets and are consistent with set operations.
\begin{align*}
    \Entropy{A} &= \MuS{A} \\
    \Entropy{A_1, \ldots, A_n} &= \MuS{\cup_i A_i} \\
    \Hc{A_1, \ldots, A_n}{B_1, \ldots, B_n} 
    &= \MuS{\cup_i A_i - \cup_i B_i} \\
    \MIt{A_1}{\ldots}{A_n} &= \MuS{\cap_i A_i} \\
    \MIct{A_1}{\ldots}{A_n}{B_1,\dots,B_n} &= \MuS{\cap_i A_i - \cup_i B_i}
\end{align*}
Note that interaction information \citep{McGill1954} follows as canonical generalization of the mutual information to multiple variables from that work, whereas total correlation does not.

In other words, equalities can be read off directly from I-diagrams: an information quantity is the sum of its parts in the corresponding I-diagram. This is similar to Venn diagrams. The sets used in I-diagrams are just abstract symbolic objects, however.

An important distinction between I-diagrams and Venn diagrams is that while we can always read off inequalities in Venn diagrams, this is not true for I-diagrams in general because mutual information terms in more than two variables can be negative. In Venn diagrams, a set is always larger or equal any subset.

However, if we show that all information quantities are non-negative, we can read off inequalities again. 
We do this for \figref{fig:mickeymouse} at the end of \secref{sec:probabilistic model} for categorical $\Z$ and expand this to continuous $\Z$ in \secref{sec:continuous entropies}.
Thus, we can treat the Mickey Mouse I-diagram like a Venn diagram to read off equalities and inequalities. 

Nevertheless, caution is warranted sometimes.
As the signed measure can be negative, $\MuS{X \cap Y} = 0$ does \emph{not} imply $X \cap Y = \emptyset$: deducing that a mutual information term is $0$ does not imply that one can simply remove the corresponding area in the I-diagram. There could be $Z$ with $\MuS{(X \cap Y) \cap Z} < 0$, such that $\MuS{X \cap Y} = \MuS{X \cap Y \cap Z} + \MuS{X \cap Y - Z} = 0$ but $X \cap Y \neq \emptyset$.
This also means that we cannot drop the term from expressions when performing symbolic manipulations.
This is of particular importance because a mutual information of zero means two random variables are independent, which might invite one drawing them as disjoint areas.

The only time where one can safely remove an area from the diagram is for \emph{atomic} quantities, which are quantities which reference all the available random variables \citep{Yeung1991}. For example, when we only have three variables $X, Y, Z$, $\MIt{X}{Y}{Z}$ and $\MIc{X}{Y}{Z}$ are atomic quantities. We can safely remove atomic quantities from I-diagrams when they are $0$ as there are no random variables left to apply that could lead to the problem explored above.

Continuing the example, $0 = \MIt{X}{Y}{Z} = \MuS{X \cap Y \cap Z}$ would imply $X \cap Y \cap Z = \emptyset$, and we could remove it from the diagram without loss of generality. Moreover, atomic $\MIc{X}{Y}{Z} = \MuS{X \cap Y - Z} = 0$ then and could be removed from the diagram as well. 

We only use I-diagrams for the three variable case, but they supply us with tools to easily come up with equalities and inequalities for information quantities. In the general case with multiple variables, they can be difficult to draw, but for Markov chains they can be of great use.

\section{Mickey Mouse I-diagram}
\subsection{Intuition for the Mickey Mouse information quantities}
\label{app:mickey mouse intuitions}
We base the names of information quantities on existing conventions and come up with sensible extensions. For example, the name \CapturedInfo for $\capturedinfo$ was introduced by \citet{Tishby2015}. It can be seen as the intersection of $\preservedinfo$ and $\relevantinfo$ in the I-diagram, and hence we denote $\preservedinfo$ \PreservedInfo and $\relevantinfo$ \RelevantInfo, which are sensible names as we detail below.

We identify the following six atomic quantities:
\begin{description}
    \item[\LabelUncertainty{} $\labeluncertainty$] quantifies the uncertainty in our labels. If we have multiple labels for the same data sample, it will be $> 0$. It is $0$ otherwise.
    \item[\EncodingUncertainty{} $\encodinguncertainty$] quantifies the uncertainty in our latent encoding given a sample. When using a Bayesian model with random variable $\omega$ for the weights, one can further split this term into $\encodinguncertainty = \MIc{\Z}{\omega}{\X} + \Hc{\Z}{\X, \omega}$, so uncertainty stemming from weight uncertainty and independent noise \citep{Houlsby2011,Kirsch2019}.
    \item[\CapturedInfo{} $\capturedinfo$] quantifies information in the latent that is relevant for our task of predicting the labels \citep{Tishby2015}. Intuitively, we want to maximize it for good predictive performance.
    \item[\ResidualInfo{} $\residualinfo$] quantifies information for the labels that is not captured by the latent \citep{Tishby2015} but would be useful to be captured.
    \item[\RedundantInfo{} $\redundantinfo$] quantifies information in the latent that is not needed for predicting the labels\footnote{\citet{Fisher2019} uses the term ``Residual Information'' for this, which conflicts with \citet{Tishby2015}.}.
\end{description}
We also identify the following composite information quantities:
\begin{description}
    \item[\RelevantInfo{}] $\relevantinfo = \residualinfo + \capturedinfo$ quantifies the information in the data that is relevant for the labels and which our model needs to capture to be able to predict the labels.
    \item[\PreservedInfo{}] $\preservedinfo = \redundantinfo + \capturedinfo$ quantifies information from the data that is preserved in the latent.
    \item[\DecoderUncertainty{}] $\decoderuncertainty = \residualinfo + \labeluncertainty$ quantifies the uncertainty about the labels after learning about the latent $\Z$. If $\decoderuncertainty$ reaches $0$, it means that no additional information is needed to infer the correct label $\Y$ from the latent $\Z$: the optimal decoder can be a deterministic mapping. Intuitively, we want to minimize this quantity for good predictive performance.
    \item[\RDecoderUncertainty{}] $\rdecoderuncertainty = \redundantinfo + \encodinguncertainty$ quantifies the uncertainty about the latent $\Z$ given the label $\Y$. We can imagine training a new model to predict $\Z$ given $\Y$ and minimizing $\rdecoderuncertainty$ to 0 would allow for a deterministic decoder from the latent to given the label. 
    \item[\textcolor{datanoisecolor}{Nuisance}\footnotemark]\footnotetext{Not depicted in \figref{fig:mickeymouse}.} $\textcolor{datanoisecolor}{\Hc{\X}{\Y}} = \datanoise + \preservedinfo$ quantifies the information in the data that is not relevant for the task \citep{Achille2017}.
\end{description}

\subsection{Definitions \& equivalences}
\label{app:all definitions}

The following equalities can be read off from \figref{fig:mickeymouse}. For completeness and to provide a handy reference, we list them explicitly here. They can also be verified using symbolic manipulations and the properties of information quantities.

Equalities for composite quantities:
\begin{align}
\relevantinfo &= \residualinfo + \capturedinfo \\
\preservedinfo &= \redundantinfo + \capturedinfo \\
\decoderuncertainty &= \residualinfo + \labeluncertainty \\
\rdecoderuncertainty &= \redundantinfo +\encodinguncertainty \\
\textcolor{datanoisecolor}{\Hc{\X}{\Y}} &= \datanoise + \preservedinfo
\end{align}
We can combine the atomic quantities into the overall \LabelEntropy{} and \EncodingEntropy{}:
\begin{align}
    \labelentropy &= \labeluncertainty + \capturedinfo + \residualinfo \\
    \encodingentropy &= \encodinguncertainty + \capturedinfo + \redundantinfo.
\end{align}
We can express the \RelevantInfo{} $\relevantinfo$, \ResidualInfo{} $\residualinfo$, \RedundantInfo{} $\redundantinfo$ and \PreservedInfo{} $\preservedinfo$ without $\X$ on the left-hand side:
\begin{align}
    \relevantinfo &= \labelentropy - \labeluncertainty, \label{eq:equivalences} \\
    \preservedinfo &= \encodingentropy - \encodinguncertainty, \label{eq:usedinfo} \\
    \residualinfo &= \decoderuncertainty - \labeluncertainty, \label{eq:residualinfo} \\
    \redundantinfo &= \rdecoderuncertainty - \encodinguncertainty. \label{eq:redundantinfo}
\end{align}
This simplifies estimating these expressions as $\X$ is usually much higher-dimensional and irregular than the labels or latent encodings.
We also can rewrite the \CapturedInfo{} $\capturedinfo$ as:
\begin{align}
    \capturedinfo &= \labelentropy - \decoderuncertainty \\
    \capturedinfo &= \encodingentropy - \rdecoderuncertainty
\end{align}

\section{Information bottleneck \& related works}
\label{app:information bottlenecks}

\subsection{Goals \& motivation}
\label{app:optimization goals}

The \IB principle from \citet{Tishby2000} can be recast as a generalization of finding minimal sufficient statistics for the labels given the data \citep{Shamir2010,Tishby2015,Fisher2019}:
it strives for minimality and sufficiency of the latent $\Z$. Minimality is about minimizing amount of information necessary of $\X$ for the task, so minimizing the \PreservedInfo $\preservedinfo$; while sufficiency is about preserving the information to solve the task, so maximizing the \CapturedInfo $\capturedinfo$.

From \figref{fig:mickeymouse}, we can read off the definitions of \RelevantInfo{} and \PreservedInfo{}:
\begin{align}
    \relevantinfo  &= \capturedinfo + \residualinfo \label{eq:residual_vs_captured} \\
    \preservedinfo &= \capturedinfo + \redundantinfo \label{eq:redundant_vs_captured},
\end{align}
and see that maximizing the \CapturedInfo $\capturedinfo$ is equivalent to minimizing the \ResidualInfo $\residualinfo$, while minimizing the \PreservedInfo $\preservedinfo$ at the same time means minimizing the \RedundantInfo $\redundantinfo$, too, as $\relevantinfo$ is constant for the given dataset\footnote{That is, it does not depend on $\theta$.}.
Moreover, we also see that the \CapturedInfo{} $\capturedinfo$ is upper-bounded by \RelevantInfo{} $\relevantinfo$, so to capture all relevant information in our latent, we want $\relevantinfo = \capturedinfo$. 

Using the diagram, we can also see that minimizing the \ResidualInfo{} is the same as minimizing the \DecoderUncertainty{} $\decoderuncertainty$:
\begin{align*}
    \residualinfo &= \decoderuncertainty - \labeluncertainty.
\end{align*}
Ideally, we also want to minimize the \EncodingUncertainty $\encodinguncertainty$ to find the most deterministic latent encoding $\Z$. Minimizing the \EncodingUncertainty and the \RedundantInfo $\redundantinfo$ together is the same as minimizing the \RDecoderUncertainty $\rdecoderuncertainty$.

All in all, we want to minimize both the \DecoderUncertainty $\decoderuncertainty$ and the \RDecoderUncertainty $\rdecoderuncertainty$.

\subsection{\IB objectives}
\label{app:ib objectives}

\subsubsection*{``The Information Bottleneck Method'' (\IB)}

\citet{Tishby2000} introduce $MI(X; \hat{X}) - \beta MI(\hat{X};Y)$ as optimization objective for the Information Bottleneck. We can relate this to our notation by renaming $\hat{X} = \Z$, such that the objective becomes ``%
\begin{math}
    \min \preservedinfo - \beta \capturedinfo
\end{math}''.
The \IB objective minimizes the \PreservedInfo $\preservedinfo$ and trades it off with maximizing the \CapturedInfo $\capturedinfo$. \citet{Tishby2015} mention that the
\IB objective is equivalent to minimizing $\preservedinfo + \beta \residualinfo$, see our discussion above. 
\citet{Tishby2000} provide an optimal algorithm for the tabular case, when $\X$, $\Y$ and $\Z$ are all categorical. This has spawned additional research to optimize the objective for other cases and specifically for DNNs.

\subsubsection*{``Deterministic Information Bottleneck'' (\DIB{})}

\citet{Strouse2016} introduce as objective ``%
\begin{math}
    \min \encodingentropy - \beta \capturedinfo
\end{math}''.
Compared to the \IB objective, this also minimizes $\encodinguncertainty$ and encourages determinism. Vice-versa, for deterministic encoders, $\encodinguncertainty = 0$, and their objective matches the \IB objective. 
Like \citet{Tishby2000}, they provide an algorithm for the tabular case. To do so, they examine an analytical solution for their objective as it is unbounded: $\encodinguncertainty \to -\infty$ for the optimal solution. As we discuss in \secref{sec:continuous entropies}, it does not easily translate to a continuous latent representation.

\subsubsection*{``Deep Variational Information Bottleneck''}

\citet{Alemi2019} rewrite the terms in the bottleneck as maximization problem ``$\max \capturedinfo - \beta \preservedinfo$'' and swap the $\beta$ parameter. Their $\beta$ would be $\nicefrac{1}{\beta}$ in \IB above, which emphasizes that $\capturedinfo$ is important for performance and $\preservedinfo$ acts as regularizer. 

The paper derives the following variational approximation to the \IB objective, where 
$\z = f_\theta(\x, \epsilon)$ denotes a stochastic latent embedding with distribution $\pmencoder$,
$\pmdecoder$ denotes the decoder,
and $\rprob{\z}$ is some fixed prior distribution on the latent embedding:
\begin{align}
    \min \chainedE{\E{- \log \probmc{\Yhat=y}{z = f_\theta(x_n, \epsilon)} + \gamma \, \KL(p(z|x_n)||r(z))}{\epsilon \sim p(\epsilon)}}{\pxy}.
\end{align}
In principle, the distributions $\pmdecoder$ and $\pmencoder$ could be given by arbitrary parameterizations and function approximators. In practice, the implementation of DVIB presented by \citet{Alemi2019} constructs $\pmencoder$ as a multivariate Gaussian with parameterized mean and parameterized diagonal covariance using a neural network, and then uses a simple logistic regression to obtain $\pmdecoder$, while arbitrarily setting $\rprob{\z}$ to be a unit Gaussian around the origin. The requirement for $\pmencoder$ to have a closed-form Kullback-Leibler divergence limits the applicability of the \DVIB objective. 

The \DVIB objective can be written more concisely as
\begin{align*}
    \min \decoderXE + \gamma \, \kale{\pzcx}{\rprob{z}}
\end{align*}
in the notation introduced in \secref{sec:different perspective on IB and DIB}. We discuss the regularizer in more detail in \secref{app:alemia and msa}.

\subsubsection*{``Conditional Entropy Bottleneck''}

In a preprint, \citet{Fisher2019} introduce their Conditional Entropy Bottleneck as ``$\min \redundantinfo - \capturedinfo$''. We can rewrite the objective as $\redundantinfo + \residualinfo - \relevantinfo$, using equations \eqref{eq:residual_vs_captured} and \eqref{eq:redundant_vs_captured}. The last term is constant for the dataset and can thus be dropped. Likewise, the \IB objective can be rewritten as minimizing $\redundantinfo + (\beta - 1) \residualinfo$. The two match for $\beta = 2$. \citet{Fisher2019} provides experimental results that favorably compare to \citet{Alemi2019}, possibly due to additional flexibility as \citet{Fisher2019} do not constrain $\pz$ to be a unit Gaussian
and employ variational approximations for all terms. We relate \CEB to \EDM in \secref{app:EDM}.

\subsubsection*{``Conditional Entropy Bottleneck'' (2020)}

In a substantial revision of the preprint, \citet{fischer2020conditional} change their Conditional Entropy Bottleneck to include a Lagrange multiplier: ``$\min \redundantinfo - \gamma \capturedinfo$''.
Their \VCEB objective can be written more concisely as
\begin{align*}
    \min \decoderXE + \gamma (\textcolor{rdecoderuncertaintycolor}{\opH_\theta [ \Z \mathbin{\vert} \Y ]} - \textcolor{encodinguncertaintycolor}{\opH_\theta [ \Z \mathbin{\vert} \X ]}),
\end{align*}
where, without writing down the probabilistic model, we introduce variational approximations for the \RDecoderUncertainty and the \EncodingUncertainty.

They are the first to report results on CIFAR-10. It is not clear how they parameterize the model they use for CIFAR-10. They use one Gaussian per class to model $\textcolor{rdecoderuncertaintycolor}{\opH_\theta [ \Z \mathbin{\vert} \Y ]}$.

\subsubsection*{``CEB Improves Model Robustness''} 
\citet{fischer2020modelrobustness} take \CEB and switch to a deterministic model which they turn it into a stochastic encoder by adding unit Gaussian noise. They use Gaussians of fixed variance to variationally approximate $\qprobc{y}{z}$: for each class, $\qprobc{y}{z}$ is modelled as a separate Gaussian. 

They are the first to report results on ImageNet and report good rebustness against adversarial attacks without adversarial training.

\subsection{Canonical \IB{} \& \DIB{} objectives}
\label{app:different perspective on IB and DIB}
We expand the \IB and \DIB objectives into ``disjoint'' terms and drop constant ones to find a more canonical form. This leads us to focus on the optimization of the \DecoderUncertainty $\decoderuncertainty$ along with additional regularization terms. In \secref{sec:decoder uncertainty}, we discuss the properties of $\decoderuncertainty$, and in \secref{sec:regularizer estimators} we examine the regularization terms. 

\begin{exobservation}
    For \IB, we obtain 
    \begin{align}
        \argmin \preservedinfo - \beta \capturedinfo &= \argmin \decoderuncertainty + \beta' \underbrace{\redundantinfo}_{=\rdecoderuncertainty - \encodinguncertainty}, \\
    \intertext{
    and, for \DIB,
    }
        \argmin \encodingentropy - \beta \capturedinfo &= \argmin \decoderuncertainty + \beta' \rdecoderuncertainty
        = \argmin \decoderuncertainty + \beta'' \encodingentropy
    \end{align}
    with $\beta' := \frac{1}{\beta - 1} \in [0,\infty)$ and $\beta'' := \frac{1}{\beta}\in [0,1)$.
\end{exobservation}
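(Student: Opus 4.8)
The plan is to reduce both identities to the additive decompositions that can be read directly off the Mickey Mouse I-diagram (\secref{app:all definitions}), together with two elementary facts: the label entropy $\Entropy{\Y}$ does not depend on $\theta$, so it is a constant that may be discarded inside an $\argmin$; and an $\argmin$ is invariant under adding a constant and under multiplication by a strictly positive scalar. Throughout I work in the regime $\beta > 1$, which is what makes $\beta - 1 > 0$ and hence keeps the sign of the objective fixed when I divide through by it; this is also what pins down the stated ranges $\beta' = \tfrac{1}{\beta-1} \in [0,\infty)$ and $\beta'' = \tfrac{1}{\beta} \in [0,1)$ as $\beta$ ranges over $(1,\infty)$.

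For the \IB case, I would start from $\preservedinfo = \redundantinfo + \capturedinfo$ to rewrite the objective as $\preservedinfo - \beta \capturedinfo = \redundantinfo - (\beta-1)\capturedinfo$. I then substitute $\capturedinfo = \labelentropy - \decoderuncertainty$, which turns the second term into $(\beta-1)\decoderuncertainty$ plus the constant $-(\beta-1)\Entropy{\Y}$. Dropping the constant and dividing by the positive factor $\beta - 1$ leaves $\decoderuncertainty + \tfrac{1}{\beta-1}\redundantinfo$, i.e. the claimed form with $\beta' = \tfrac{1}{\beta-1}$; the underbraced identity $\redundantinfo = \rdecoderuncertainty - \encodinguncertainty$ is just \eqref{eq:redundantinfo}.

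For the \DIB case I would proceed in two parallel ways to obtain the two right-hand sides. For the $\encodingentropy$ form, substitute $\capturedinfo = \labelentropy - \decoderuncertainty$ directly into $\encodingentropy - \beta\capturedinfo$, obtaining $\encodingentropy + \beta\decoderuncertainty - \beta\Entropy{\Y}$; discarding the constant and dividing by $\beta > 0$ yields $\decoderuncertainty + \tfrac{1}{\beta}\encodingentropy$ with $\beta'' = \tfrac{1}{\beta}$. For the $\rdecoderuncertainty$ form, first use $\encodingentropy = \rdecoderuncertainty + \capturedinfo$ (the rearrangement of $\capturedinfo = \encodingentropy - \rdecoderuncertainty$), so the objective becomes $\rdecoderuncertainty - (\beta-1)\capturedinfo$, and then repeat the \IB manipulation verbatim with $\redundantinfo$ replaced by $\rdecoderuncertainty$, giving $\decoderuncertainty + \tfrac{1}{\beta-1}\rdecoderuncertainty$. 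Finally I would check that the two forms agree by eliminating $\encodingentropy$ via $\encodingentropy = \rdecoderuncertainty + \Entropy{\Y} - \decoderuncertainty$ and confirming the two objectives differ only by a positive rescaling (the factor $\tfrac{\beta}{\beta-1}$) and a constant.

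The only real obstacle is bookkeeping rather than mathematics: I must ensure that every transformation applied inside the $\argmin$ is either a constant shift removing an $\Entropy{\Y}$-dependent term or a strictly positive rescaling, which is precisely where the hypothesis $\beta > 1$ enters, and I must track the limits $\beta \to 1^{+}$ and $\beta \to \infty$ to justify the half-open intervals claimed for $\beta'$ and $\beta''$. For $\beta \le 1$ the factor $\beta - 1$ is non-positive, so dividing by it would flip the optimization direction and convert the minimization into a maximization; hence the restriction $\beta > 1$ is essential to the reformulation.
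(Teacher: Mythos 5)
Your proposal is correct and follows essentially the same route as the paper's own derivation: rewrite $\preservedinfo - \beta\capturedinfo$ as $\redundantinfo - (\beta-1)\capturedinfo$ (resp. $\encodingentropy - \beta\capturedinfo$ as $\rdecoderuncertainty - (\beta-1)\capturedinfo$ or keep $\encodingentropy$ directly), substitute $\capturedinfo = \labelentropy - \decoderuncertainty$, drop the constant $\labelentropy$ term, and divide by the positive factor $\beta-1$ (resp. $\beta$), with the restriction $\beta>1$ playing exactly the role you identify. No gaps.
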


\begin{proof}
For the steps marked with $\text{*}$, we make use of $\beta > 1$.
For \IB, we obtain 
\begin{align}
    \argmin \preservedinfo - \beta \capturedinfo
    &= \argmin \redundantinfo + (\beta - 1) \decoderuncertainty \notag
    \\
    &\overset{\text{(*)}}{=} \argmin \decoderuncertainty + \beta' \; \redundantinfo \notag
    \\
    &= \argmin \decoderuncertainty + \beta' (\rdecoderuncertainty - \encodinguncertainty), \label{eq:IBobjective} \tag{\IB}
\end{align}
and, for \DIB,
\begin{align}
    \argmin \encodingentropy - \beta \capturedinfo \notag
    & = \argmin \rdecoderuncertainty + (\beta - 1) \decoderuncertainty \notag 
    \\
    &\overset{\text{(*)}}{=} \argmin \decoderuncertainty + \beta' \rdecoderuncertainty, \label{eq:DIBobjective} \tag{\DIB}
\end{align}
with $\beta' := \frac{1}{\beta - 1} \in [0,\infty)$.
Similarly, we show for \DIB
\begin{align*}
    \argmin \encodingentropy - \beta \capturedinfo \notag
    &= \argmin \encodingentropy + \beta \decoderuncertainty \notag \\
    &\overset{\text{(*)}}{=} \argmin \decoderuncertainty + \beta'' \encodingentropy, %
\end{align*}
with $\beta'' := \frac{1}{\beta}\in [0,1)$, which is relevant in \secref{sec:regularizer estimators}.

We limit ourselves to $\beta > 1$, because, 
for $\beta < 1$, we would be maximizing the \DecoderUncertainty{}, which does not make sense: the obvious solution to this is one where $\Z$ contains no information on $\Y$, that is $\pdecoder$ is uniform.
In the case of \DIB{}, it is to map every input deterministically to a single latent;
whereas for \IB{}, we only minimize the \RedundantInfo{}, and the solution is free to contain noise.
For $\beta = 1$, we would not care about \DecoderUncertainty and only minimize \RedundantInfo and \RDecoderUncertainty, respectively, which allows for arbitrarily bad predictions.
\end{proof}

We note that we have $\beta' = \frac{\beta''}{1-\beta''}$ using the relations above.

\subsection{\IB objectives and the Entropy Distance Metric}
\label{app:EDM}

Another perspective on the \IB objectives is by expressing them using the Entropy Distance Metric. \citet[p. 140]{Mackay} introduces the entropy distance
\begin{equation}
    \edmyz = \decoderuncertainty + \rdecoderuncertainty. \label{eq:edm}
\end{equation}
as a metric when we identify random variables up to permutations of the labels for categorical variables: if the entropy distance is $0$, $\Y$ and $\Z$ are the same distribution up to a consistent permutation of the labels (independent of $\X$). If the entropy distance becomes 0, both $\decoderuncertainty = 0 = \rdecoderuncertainty$, and we can find a bijective map from $\Z$ to $\Y$.\footnote{The argument for continuous variables is the same. We need to identify distributions up to ``isentropic'' bijections.}

We can express the \RDecoderUncertainty{} $\rdecoderuncertainty$ using the \DecoderUncertainty{} $\decoderuncertainty$ and the entropies:
\begin{align*}
    \rdecoderuncertainty + \labelentropy = \decoderuncertainty + \encodingentropy, 
\end{align*}
and rewrite equation \eqref{eq:edm} as
\begin{equation*}
    \edmyz = 2 \decoderuncertainty + \encodingentropy - \labelentropy.
\end{equation*}
For optimization purposes, we can drop constant terms and rearrange:
\begin{equation*}
    \argmin \edmyz = \argmin \decoderuncertainty + \tfrac{1}{2} \encodingentropy.
\end{equation*}

\subsubsection{Rewriting \IB{} and \DIB{} using the \EDM{}}

For $\beta \ge 1$, we can rewrite equations \eqref{eq:IBobjective} and \eqref{eq:DIBobjective} as:
\begin{align}
    \argmin \edmyz + \gamma \left (\decoderuncertainty - \rdecoderuncertainty\right ) + \left ( \gamma - 1 \right ) \encodinguncertainty
\end{align}
for \IB{}, and
\begin{align}
    \argmin \edmyz + \gamma \left (\decoderuncertainty - \rdecoderuncertainty \right )
\end{align}
for \DIB{} and replace $\beta$ with $\gamma = 1 - \frac{2}{\beta} \in [-1,1]$ which allows for a linear mix between $\decoderuncertainty$ and $\rdecoderuncertainty$.

\DIB{} will encourage the model to match both distributions for $\gamma = 0$ ($\beta=2$), as we obtain a term that matches the \EDM{} from \secref{app:EDM}, and otherwise trades off \DecoderUncertainty{} and \RDecoderUncertainty{}.
\IB{} behaves similarly but tends to maximize \EncodingUncertainty{} as $\gamma - 1 \in [-2,0]$. 
\citet{Fisher2019} argues for picking this configuration similar to the arguments in \secref{app:optimization goals}.
\DIB{} will force both distributions to become exactly the same, which would turn the decoder into a permutation matrix for categorical variables.

\section{\DecoderUncertainty \texorpdfstring{$\decoderuncertainty$}{}}
\subsection{Cross-entropy loss}
\label{sec:xel}

The cross-entropy loss features prominently in \secref{sec:decoder uncertainty}.
We can derive the usual cross-entropy loss for our model by minimizing the Kullback-Leibler divergence between the empirical sample distribution $\pxy$ and the parameterized distribution $\pmgenerative$. For discriminative models, we are only interested in $\pmpredict$, and can simply set $\probm{x}=\px$:
\begin{align*}
    & \argmin_{\theta} \kale{\pxy}{\pmgenerativece}
    \\ & \quad = \argmin_{\theta} {\kale{\pycx}{\pmpredictce}}
        + \underbrace{\kale{\px}{\probm{x}}}_{=0} 
    \\ & \quad
    = \argmin_{\theta} {\CrossEntropy{\pycx}{\pmpredictce}}
    - \underbrace{\Hc{\Y}{\X} }_{\text{const.}}
    \\ & \quad
    = \argmin_{\theta} \CEpmpredict.
\end{align*}
In \secref{sec:decoder uncertainty}, we introduce the shorthand $\predictXE$ for $\CEpmpredict$ and refer to it as \PredictionCE.

\subsection{Upper bounds \& training error minimization}
\label{app:minimizing the training error}

To motivate that $\decoderuncertainty$ (or $\decoderXE$) can be used as main loss term, we show that it can bound the (training) error probability since \textit{accuracy} is often the true objective when machine learning models are deployed on real-world problems\footnote{As we only take into account the empirical distribution $\pxy$ available for training, the following derivation refers only to the empirical risk, and not to the expected risk of the estimator $\Yhat$.}.
\begin{exobservation}
    The \DecoderCE provides an upper bound on the \DecoderUncertainty:
    \begin{align*}
        \decoderuncertainty \le \decoderuncertainty + \kale{\pdecoder}{\pmdecoder} = \decoderXE,  
    \end{align*}
    and further bounds the training error:
    \begin{align*}
        \prob{\text{``$\Yhat$ is wrong''}} & \le 1 - e^{-\decoderXE}
        = 1 - e^{-\left (\decoderuncertainty + \kale{\pdecoder}{\pmdecoder} \right )}.
    \end{align*}
    Likewise, for the \PredictionCE $\predictXE$ and the \LabelUncertainty $\labeluncertainty$.
\end{exobservation}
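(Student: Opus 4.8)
The plan is to treat the two claims in turn, reusing the cross-entropy definitions from \secref{app:information quantities}. For the first inequality I would expand the decoder cross-entropy through the standard identity ``cross-entropy $=$ entropy $+$ KL''. Concretely, writing $-\ln \probmc{\Yhat=y}{z} = -\ln \probc{y}{z} + \ic{\tfrac{\probmc{\Yhat=y}{z}}{\probc{y}{z}}}$ and taking expectations under the joint marginal $\prob{\y,\z}$ induced by the model gives $\decoderXE = \CrossEntropy{\pdecoder}{\pmdecoder} = \decoderuncertainty + \kale{\pdecoder}{\pmdecoder}$, since the first term integrates to the conditional entropy $\Hc{\Y}{\Z}$ and the second to the stated divergence. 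Non-negativity of the Kullback-Leibler divergence then yields $\decoderuncertainty \le \decoderXE$ immediately.

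For the error bound I would observe that both the cross-entropy and the accuracy are expectations under the same joint marginal $\prob{\y,\z}$ of the full model $\prob{\x,\y,\z,\yhat} = \pxy \pmencoder \pmdecoder$. Abbreviating the per-sample correct-class probability as $p_c := \probmc{\Yhat=y}{z}$, the definition of cross-entropy gives $\decoderXE = \E{-\ln p_c}{\prob{\y,\z}}$, while the probability that the stochastic predictor is correct is $\prob{\Yhat = \Y} = \E{p_c}{\prob{\y,\z}}$.

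The crux is then a single application of Jensen's inequality to the convex map $\exp$: since $-\decoderXE = \E{\ln p_c}{\prob{\y,\z}}$, we obtain $e^{-\decoderXE} = \exp\!\bigl(\E{\ln p_c}{\prob{\y,\z}}\bigr) \le \E{\exp(\ln p_c)}{\prob{\y,\z}} = \E{p_c}{\prob{\y,\z}} = \prob{\Yhat = \Y}$. Rearranging, $\prob{\text{``$\Yhat$ is wrong''}} = 1 - \prob{\Yhat = \Y} \le 1 - e^{-\decoderXE}$, and substituting the decomposition from the first part produces the stated closed form. The analogous bound for $\predictXE$ and $\labeluncertainty$ follows from the identical argument, with the decoder distribution $\probmc{\Yhat=y}{z}$ replaced by the predictive distribution $\pmpredictce$ and $\Z$ replaced by $\X$.

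I expect the only real subtlety to be pinning down what ``the prediction is wrong'' means: the Jensen step requires interpreting $\Yhat$ as sampled from $\pmdecoder$, matching the generative model in equation \eqref{eq:probmodel}, rather than as an $\argmax$. Under an $\argmax$ decoder the clean identity $\prob{\Yhat = \Y} = \E{p_c}{\prob{\y,\z}}$ fails, so I would state explicitly at the outset that the error probability refers to the stochastic predictor the model defines.
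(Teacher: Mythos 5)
Your proposal is correct and follows essentially the same route as the paper: the first inequality is the standard cross-entropy $=$ entropy $+$ KL decomposition plus non-negativity of the divergence, and the error bound rests on the identity $\prob{\Yhat=\Y}=\chainedE{\pmdecoderce}{\pyz}$ followed by Jensen's inequality (the paper applies it to the convex map $\ic{x}=-\ln x$, you to $\exp$ — the same inequality read in the other direction). Your closing remark that the error probability must refer to the stochastic predictor $\Yhat\sim\pmdecoder$ rather than an $\argmax$ decoder is a valid clarification of a point the paper leaves implicit.
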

\begin{proof}
    The upper bounds for \DecoderUncertainty $\decoderuncertainty$ and \LabelUncertainty $\labeluncertainty$ follow from the non-negativity of the Kullback-Leibler divergence, for example:
    \begin{align*}
        0 \le \kale{\pdecoder}{\pmdecoder} 
        &= \decoderXE - \decoderuncertainty, \\
        0 \le \kale{\pycx}{\pmpredict}
        &= \predictXE - \labeluncertainty.
    \end{align*}
    The derivation for the training error probability is as follows:
    \begin{align*}
        \prob{\text{``$\Yhat$ is correct''}} 
        &= \chainedE{\probc{\text{``$\Yhat$ is correct''}}{\x,\y}}{\pxy}
        = \chainedE{\chainedE{\pmdecoderce}{\pmencoder}}{\pxy} 
        \\ & = \chainedE{\pmdecoderce}{\pyz}.
    \end{align*}
    We can then apply Jensen's inequality using convex $\ic{x} = -\ln x$:
    \begin{align*}
        & \ic{\chainedE{\pmdecoderce}{\pyz}} \le \chainedE{\ic{\pmdecoderce}}{\pyz} \\
        & \Leftrightarrow \prob{\text{``$\Yhat$ is correct''}} \ge
        e^{-\CEpmdecoder} \\
        & \Leftrightarrow \prob{\text{``$\Yhat$ is wrong''}} \le 1 - e^{-\decoderXE}.
    \end{align*}
    For small $\decoderXE$, we note that one can use the approximation $e^x \approx 1 +x$ to obtain:
    \begin{align}
        \prob{\text{``$\Yhat$ is wrong''}} \lessapprox \decoderXE.
    \end{align}
    Finally, we split the \DecoderCE into the \DecoderUncertainty and a Kullback-Leibler divergence:
    \begin{equation*}
        \decoderXE = \decoderuncertainty + \kale{\pdecoder}{\pmdecoderce}.
    \end{equation*}
    If we upper-bound $\kale{\pdecoder}{\pmdecoderce}$, minimizing the \DecoderUncertainty $\decoderuncertainty$ becomes a sensible minimization objective as it reduces the probability of misclassification.

    We can similarly show that the training error is bounded by the \PredictionCE $\predictXE$.
\end{proof}
In the next section, we examine categorical $\Z$ for which optimal decoders can be constructed and $\kale{\pdecoder}{\pmdecoderce}$ becomes zero. %

\section{Categorical \texorpdfstring{$\Z$}{Z}}
\label{app:optimal decoders}

For categorical $\Z$, $\pdecoder$ can be computed exactly for a given encoder $\pmencoder$ by using the empirical data distribution, which, in turn, allows us to compute $\decoderuncertainty$\footnote{$\pdecoder$ depends on $\theta$ through $\pmencoder$: $\pdecoder = \frac{\sum_x \pxy \pmencoder}{\sum_x \px \pmencoder}$.}. This is similar to computing a confusion matrix between $\Y$ and $\Z$ but using information content instead of probabilities.

Moreover, if we set $\pmdecoder := \pdecoderhat$ to have an optimal decoder, we obtain equality in equation \eqref{eq:decoder bound with KALE}, and obtain
\begin{math}
    \predictXE \le \decoderXE = \decoderuncertainty.
\end{math}
If the encoder were also deterministic, we would obtain
\begin{math}
    \predictXE = \decoderXE = \decoderuncertainty.
\end{math}
We can minimize $\decoderuncertainty$ directly using gradient descent. $\ddtheta \decoderuncertainty$ only depends on $\pdecoder$ and $\ddtheta \pmencoder$:
\begin{equation*}
    \ddtheta \decoderuncertainty =
    \E{\ddtheta \left [ \ln \pmencoder \right ] \chainedE{\ic{\pdecoder}}{\probhc{\y}{\x}} }{\prob{\x,\z}}.
\end{equation*}

\begin{proof}
    \begin{align*}
        \ddtheta \decoderuncertainty &= \ddtheta \chainedE{\ic{\pdecoder}}{\pyz}
        = \ddtheta \chainedE{\ic{\pdecoder}}{\pxyz}
        = \chainedE{\ddtheta \chainedE{\ic{\pdecoder}}{\pmencoder}}{\pxy} \\
        & = \chainedE{\chainedE{\ddtheta \left [ \ic{\pdecoder} \right ]}{\pxy}}{\pmencoder} +
        \ic{\pdecoder} \ddtheta \left [ \ln \pmencoder \right ]
        \\
        & =
        \chainedE{\ddtheta \left [ \ic{\pdecoder} \right ]}{\pxyz} + 
        \ic{\pdecoder} \ddtheta \left [ \ln \pmencoder \right ].
    \end{align*}
    And now we show that $\chainedE{\ddtheta \left [ \ic{\pdecoder} \right ]}{\pxyz} = 0$:
    \begin{align*}
        \chainedE{\ddtheta \left [ \ic{\pdecoder} \right ]}{\pxyz} 
        &= \chainedE{\ddtheta \left [ \ic{\pdecoder} \right ]}{\pyz} = \chainedE{\frac{-1}{\pdecoder} \ddtheta \pdecoder}{\pyz}
        \\
        &= - \int \frac{\pyz}{\pdecoder} \ddtheta \pdecoder \, dy \, dz 
        = - \int \prob{\z} \int \ddtheta \pdecoder \, dy \, dz
        \\
        &= - \int \prob{\z} \ddtheta \Big[ \underbrace{\int \pdecoder \, dy}_{=1} \Big] \, dz = 0.
    \end{align*}
    Splitting the expectation and reordering of $\chainedE{\ic{\pdecoder}  \ddtheta \left [ \ln \pmencoder \right ]}{\pxyz}$, we obtain the result.
\end{proof}

The same holds for \RDecoderUncertainty{} $\rdecoderuncertainty$ and for the other quantities as can be verified easily.

If we minimize $\decoderuncertainty$ directly, we can compute $\pdecoder$ after every training epoch and fix $\pmdecoder := \pdecoderhat$ to create the discriminative model $\pmpredict$. This is a different perspective on the self-consistent equations from \citet{Tishby2000,Gondek}.

\subsection{Empirical evaluation of \texorpdfstring{$\kale{\pdecoder}{\pmdecoderce}$}{the difference Decoder Cross-Entropy and Uncertainty} during training}
\label{sec:exp for decoderuncertainty}

\begin{figure*}[t]
    \centering
    \includegraphics[width=\linewidth, clip, trim=0 0 0 0]{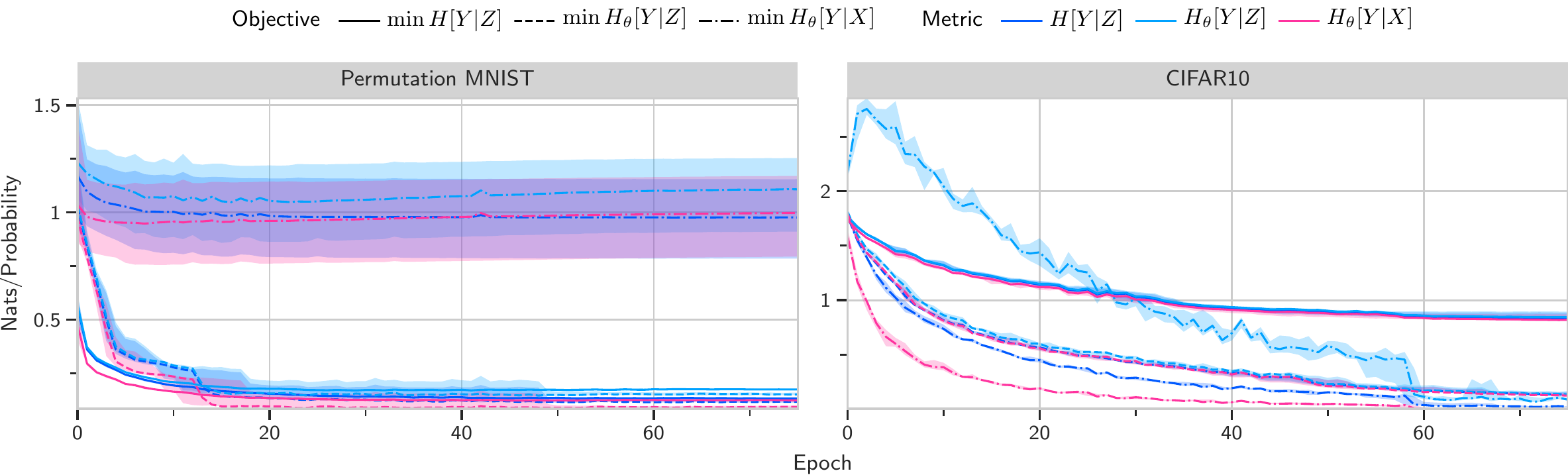}
    \caption{
        \emph{\DecoderUncertainty, \DecoderCE and \PredictionCE for Permutation-MNIST and CIFAR-10 with a categorical $\Z$.}  $C=100$ categories are used for $\Z$. We optimize with different minimization objectives in turn and plot the metrics. $\kale{\pdecoder}{\pmdecoderce}$ is small when training with $\decoderXE$ or $\decoderuncertainty$. When training with $\predictXE$ on CIFAR-10, $\kale{\pdecoder}{\pmdecoderce}$ remains quite large. We run 8 trials each and plot the median with confidence bounds (25\% and  75\% quartiles). See \secref{sec:exp for decoderuncertainty} for more details.
    }
    \label{fig:categorical_bounds}
\end{figure*}
We examine the size of the gap between \DecoderUncertainty and \DecoderCE and the training behavior of the two cross-entropies with \emph{categorical} latent $\Z$ on Permutation MNIST and CIFAR-10.
For Permutation MNIST \citep{Goodfellow}, we use the common fully-connected ReLU $784-1024-1024-C$ encoder architecture, with $C=100$ categories for $\Z$. For CIFAR-10 \citep{Krizhevsky2009}, we use a standard ResNet18 model with $C$ many output classes as encoder \citep{He2016}. See \secref{app:experiment_details} for more details about the hyperparameters.
Even though a $C \times 10$ matrix and a SoftMax would suffice to describe the decoder matrix $\pmdecoder$\footnote{For categorical $\Z$, $\pmdecoder$ is a stochastic matrix which sums to 1 along the $\Yhat$ dimension.}, we have found that over-parameterization using a separate DNN benefits optimization a lot. Thus, to parameterize the decoder matrix, we use fully-connected ReLUs $C-1024-1024-10$ with a final SoftMax layer.
We compute it once per batch during training and back-propagate into it.

\Figref{fig:categorical_bounds} shows the three metrics as we train with each of them in turn. Our results do not achieve SOTA accuracy on the test set---we impose a harder optimization problem as $\Z$ is categorical, and we are essentially solving a hard-clustering problem first and then map these clusters to $\Yhat$. Results are provided for the training set in order to compare with the optimal decoder.

As predicted, the \DecoderCE upper-bounds both the \DecoderUncertainty $\decoderuncertainty$ and the \PredictionCE in all cases. Likewise, the gap between $\decoderXE$ and $\decoderuncertainty$ is tiny when we minimize $\decoderXE$. On the other hand, minimizing \PredictionCE can lead to large gaps between $\decoderXE$ and $\decoderuncertainty$, as can be seen for CIFAR-10.

Very interestingly, on MNIST \DecoderCE provides a better training objective whereas on CIFAR-10 \PredictionCE trains lower. \DecoderUncertainty does not train very well on CIFAR-10, and \PredictionCE does not train well on Permutation MNIST at all. We suspect DNN architectures in the literature have evolved to train well with cross-entropies, but we are surprised by the heterogeneity of the results for the two datasets and models.

\section{Surrogates for regularization terms}
\subsection{Differential entropies}
\label{sec:math for differential entropies}

\begin{exobservation}
    After adding \ZeroEntropyNoise, the inequality $\redundantinfo \le \rdecoderuncertainty \le \encodingentropy$ also holds in the continuous case, and we can minimize $\redundantinfo$ in the \IB objective by minimizing $\rdecoderuncertainty$ or $\encodingentropy$, similarly to the \DIB objective. 
    We present a formal proof in \secref{sec:math for differential entropies}.
\end{exobservation}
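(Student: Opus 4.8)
The plan is to reduce the two-sided inequality to the non-negativity of two atomic quantities and then show that, after noise injection, both are non-negative in the continuous setting. Recall from the Mickey Mouse I-diagram (\secref{app:all definitions}) the equalities $\rdecoderuncertainty = \redundantinfo + \encodinguncertainty$ and $\encodingentropy = \encodinguncertainty + \capturedinfo + \redundantinfo$, which hold identically in the discrete and differential settings because they are equalities between information quantities and such equalities transfer. Subtracting, the two inequalities I must establish become
\begin{align*}
    \rdecoderuncertainty - \redundantinfo &= \encodinguncertainty \ge 0, \\
    \encodingentropy - \rdecoderuncertainty &= \capturedinfo \ge 0.
\end{align*}
Thus the whole proposition rests on showing $\encodinguncertainty = \Hc{\Z}{\X} \ge 0$ and $\capturedinfo = \MI{\Y}{\Z} \ge 0$ for continuous $\Z$.

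The second of these is free: $\capturedinfo = \MI{\Y}{\Z}$ is a Kullback--Leibler divergence between the joint law of $(\Y, \Z)$ and the product of its marginals, hence non-negative regardless of whether $\Z$ is discrete or continuous (and $\Y$ is categorical here). The first quantity is the only one that can turn negative in the differential setting, and it is exactly where the noise injection is needed. Writing $\Z = \hat{\Z} + \epsilon$ with $\epsilon$ independent of $(\hat{\Z}, \X)$, I would prove the conditional analogue of the bound already used in \secref{sec:continuous entropies}, namely
\begin{align*}
    \encodinguncertainty = \Hc{\hat{\Z} + \epsilon}{\X} \ge \Entropy{\epsilon}.
\end{align*}
The argument combines two elementary facts: conditioning reduces differential entropy, and differential entropy is translation-invariant. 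Conditioning additionally on $\hat{\Z}$ gives $\Hc{\hat{\Z}+\epsilon}{\X} \ge \Hc{\hat{\Z}+\epsilon}{\X, \hat{\Z}} = \Hc{\epsilon}{\X, \hat{\Z}} = \Entropy{\epsilon}$, where the middle step uses that given $\hat{\Z}$ the map $\epsilon \mapsto \hat{\Z}+\epsilon$ is a translation, and the last uses independence of $\epsilon$. Taking \ZeroEntropyNoise $\epsilon \sim \normaldist{0}{\tfrac{1}{2\pi e} I_k}$ makes $\Entropy{\epsilon} = 0$, so $\encodinguncertainty \ge 0$.

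Combining the two non-negativity results with the I-diagram equalities yields $\redundantinfo \le \rdecoderuncertainty \le \encodingentropy$. For the minimization claim, this chain shows that $\rdecoderuncertainty$ and $\encodingentropy$ are upper bounds on the \IB regularizer $\redundantinfo$, with gaps $\encodinguncertainty \ge 0$ and $\encodinguncertainty + \capturedinfo \ge 0$ respectively, so driving either of the \DIB surrogates down also drives $\redundantinfo$ down, which is what unifies the \IB and \DIB objectives. The main obstacle is purely the first inequality: unlike the discrete case, $\Hc{\Z}{\X}$ has no a priori sign, and the entire content of the proposition is that the conditioning-plus-translation-invariance lemma, applied conditionally on $\X$, restores its non-negativity once noise of non-negative (here zero) entropy is added.
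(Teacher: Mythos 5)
Your proposal is correct and follows essentially the same route as the paper's proof in \secref{sec:math for differential entropies}: reduce both inequalities to the non-negativity of the atomic quantities $\encodinguncertainty$ and $\capturedinfo$ via the I-diagram decompositions, get $\capturedinfo \ge 0$ from non-negativity of mutual information, and get $\encodinguncertainty \ge \Entropy{\epsilon} = 0$ from the fact that adding independent noise cannot decrease (conditional) differential entropy. The only cosmetic difference is that you inline the standard proof of $\Entropy{A+B} \ge \Entropy{B}$ (conditioning plus translation invariance) where the paper cites it as a lemma.
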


\begin{theorem}
\label{thm:noise helps}
    For random variables $A$, $B$, we have
    \begin{equation*}
        \Entropy{A+B} \ge \Entropy{B}.
    \end{equation*}
\end{theorem}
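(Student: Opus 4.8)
The plan is to prove this exactly as it is used in the application, where $B$ plays the role of the independent noise $\epsilon$: I would assume $A$ and $B$ are independent (this is implicit in the statement and essential, as I note below) and then derive the bound from two elementary facts about differential entropy, namely translation invariance and the non-negativity of mutual information (equivalently, that conditioning cannot increase entropy). The whole argument collapses to a short chain, so no genuine computation is needed.

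First I would bound $\Entropy{A+B}$ below by conditioning on $A$: since $\Entropy{A+B} - \Hc{A+B}{A} = \MI{A+B}{A} \ge 0$, we have $\Entropy{A+B} \ge \Hc{A+B}{A}$. Next I would observe that for each fixed value $A=a$, the random variable $A+B$ is merely the translate $a+B$ of $B$ by a constant, and differential entropy is invariant under translation (the Jacobian of $b \mapsto a+b$ is $1$), so $\Hc{A+B}{A=a} = \Hc{B}{A=a}$; averaging over $a$ gives $\Hc{A+B}{A} = \Hc{B}{A}$. Finally, independence of $A$ and $B$ yields $\Hc{B}{A} = \Entropy{B}$. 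Chaining these three steps gives
\[ \Entropy{A+B} \ge \Hc{A+B}{A} = \Hc{B}{A} = \Entropy{B}, \]
which is the claim.

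The main obstacle is not the logic but the regularity bookkeeping in the continuous setting: the translation-invariance step should be justified by the change-of-variables formula for the differential-entropy integral, and one must assume the relevant densities and integrals exist for $\Hc{A+B}{A}$ and $\Entropy{B}$ to be well defined. The one genuinely load-bearing hypothesis is independence: without it the translation identity still gives $\Hc{A+B}{A} = \Hc{B}{A}$, but then $\Hc{B}{A} \le \Entropy{B}$ may be strict and the conclusion can fail, so independence (as supplied by the ``independent noise $\epsilon$'' in \secref{sec:continuous entropies}) cannot be dropped. Applying the theorem with $A := \hat{\Z}$ and $B := \epsilon$ then gives $\encodingentropy = \Entropy{\hat{\Z}+\epsilon} \ge \Entropy{\epsilon}$, which for \ZeroEntropyNoise is exactly the desired lower bound $\encodingentropy \ge 0$.
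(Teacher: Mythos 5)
Your proof is correct, and it is the standard argument for this fact. The paper itself does not supply a proof here --- it simply defers to a citation (\citet[section 2.2]{Bercher}) --- so your three-step chain $\Entropy{A+B} \ge \Hc{A+B}{A} = \Hc{B}{A} = \Entropy{B}$ (non-negativity of mutual information, translation invariance of differential entropy conditional on $A=a$, and independence) fills in exactly the content the paper outsources, and it is the same route the cited reference takes. One genuinely valuable point you make: the theorem as stated in the paper has no independence hypothesis, and without it the claim is false (e.g.\ $B=-A$ makes $A+B$ degenerate with differential entropy $-\infty$). Your identification of independence as the load-bearing assumption is not pedantry --- it is a repair of the statement. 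The paper does impose $\eps \perp F(X)$ in the proposition that actually uses the theorem, so the application is unaffected, but the theorem as isolated is stated too broadly, and your version is the one that should be recorded. Your remarks on regularity (existence of the relevant densities and integrals) are appropriately flagged as bookkeeping rather than obstacles.
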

\begin{proof}
    See \citet[section 2.2]{Bercher}.
\end{proof}

\begin{proposition}
    Let $Y$, $Z$ and $X$ be random variables satisfying the independence property $Z\perp Y | X$, and $F$ a possibly stochastic function such that $Z = F(X) + \eps$, with independent noise $\eps$ satisfying $\eps \perp F(X), \eps \perp Y$ and $H(\eps) = 0$. Then the following holds whenever $\capturedinfo$ is well-defined.
    \begin{equation*}
        \redundantinfo \le \rdecoderuncertainty \le \encodingentropy.
    \end{equation*}
\end{proposition}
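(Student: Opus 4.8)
The plan is to split the chain $\redundantinfo \le \rdecoderuncertainty \le \encodingentropy$ into its two halves and reduce each to a non-negativity statement. For the right-hand inequality $\rdecoderuncertainty \le \encodingentropy$, I would simply observe that $\encodingentropy - \rdecoderuncertainty = \Entropy{\Z} - \Hc{\Z}{\Y} = \MI{\Y}{\Z} = \capturedinfo$. Since mutual information is a Kullback--Leibler divergence between the joint law of $(\Y,\Z)$ and the product of its marginals, it is non-negative whenever it is defined, which is exactly the standing hypothesis that $\capturedinfo$ is well-defined. This argument is insensitive to whether the entropies are discrete or differential, so no noise is needed for this half.

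For the left-hand inequality $\redundantinfo \le \rdecoderuncertainty$, I would use the identity $\redundantinfo = \MIc{\X}{\Z}{\Y} = \Hc{\Z}{\Y} - \Hc{\Z}{\X,\Y}$, so the claim is equivalent to $\Hc{\Z}{\X,\Y} \ge 0$. First I would invoke the independence assumption $Z \perp Y \mid X$ to collapse the conditioning: since $p(z \mid x,y) = p(z \mid x)$, we get $\Hc{\Z}{\X,\Y} = \Hc{\Z}{\X}$. It therefore suffices to show that the differential conditional entropy $\Hc{\Z}{\X}$ is non-negative. This is precisely the point at which continuous entropies could otherwise become negative and where the injected noise must do its work.

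To establish $\Hc{\Z}{\X} \ge 0$, I would substitute $Z = F(X) + \eps$ and argue conditionally on $X$, applying Theorem \ref{thm:noise helps} ($\Entropy{A+B}\ge\Entropy{B}$) with signal $A = F(X)$ and noise $B = \eps$. Concretely: more conditioning cannot increase differential entropy, so $\Hc{\Z}{\X} \ge \Hc{F(X)+\eps}{\X, F(X)}$; shift-invariance of differential entropy under the deterministic offset $F(X)$ (fixed once we condition on $(X,F(X))$) gives $\Hc{F(X)+\eps}{\X,F(X)} = \Hc{\eps}{\X,F(X)}$; and independence of $\eps$ from the conditioning reduces this to $\Entropy{\eps} = 0$ by the choice of \ZeroEntropyNoise. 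The main obstacle is the bookkeeping around the assumed independences: the hypotheses are phrased as the marginal relations $\eps \perp F(X)$ and $\eps \perp Y$, whereas the argument above wants $\eps$ independent of the pair $(X, F(X))$ (equivalently, a conditional form of Theorem \ref{thm:noise helps}) so that the conditional entropy of the noise genuinely collapses to $\Entropy{\eps} = 0$. I would make explicit that \emph{independent noise} is meant jointly, and I would confirm that well-definedness of $\capturedinfo$ is enough to keep all the entropy differences finite and the decompositions valid.
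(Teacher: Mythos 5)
Your proof is correct and follows essentially the same route as the paper's: both halves reduce to the additive decompositions $\rdecoderuncertainty = \redundantinfo + \encodinguncertainty$ and $\encodingentropy = \rdecoderuncertainty + \capturedinfo$, with $\encodinguncertainty = \Hc{\Z}{\X} \ge \Entropy{\eps} = 0$ via Theorem~\ref{thm:noise helps} and the conditional independence $Z \perp Y \mid X$, and $\capturedinfo \ge 0$ by non-negativity of mutual information. Your closing remark about the independence bookkeeping is well taken --- the paper's own proof invokes ``$\eps$ is independent of $X$'' while the proposition only states $\eps \perp F(X)$ and $\eps \perp Y$, so making the joint independence explicit, as you propose, tightens a point the paper glosses over.
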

\begin{proof}
    First, we note that $\encodinguncertainty = \Hc{F(X) + \eps}{X} \geq \Hc{\eps}{X} = \Entropy{\eps}$ with \cref{thm:noise helps}, as $\eps$ is independent of $X$, and thus $\encodinguncertainty \ge 0$. We have $\encodinguncertainty = \Hc{Z}{X, Y}$ by the conditional independence assumption, and by the non-negativity of mutual information, $\capturedinfo \ge 0$.
    Then:
    \begin{gather*}
        \redundantinfo + \underbrace{\encodinguncertainty}_{\ge 0} = \rdecoderuncertainty \\
        \rdecoderuncertainty + \underbrace{\capturedinfo}_{\ge 0} = \encodingentropy
    \end{gather*}
\end{proof}

The probabilistic model from \secref{sec:probabilistic model} fulfills the conditions exactly, and the two statements motivate our proposition. 

It is important to note that while \ZeroEntropyNoise is necessary for preserving inequalities like $\redundantinfo \le \rdecoderuncertainty \le \encodingentropy$ in the continuous case, any Gaussian noise will suffice for optimization purposes:
we optimize via pushing down an upper bound, and constant offsets will not affect this.

Thus, if we had $\Entropy{\eps} \ne 0$, even though $\redundantinfo + \encodinguncertainty \not\le \rdecoderuncertainty$, we could instead use
\begin{align*}
    \redundantinfo + \encodinguncertainty - \Entropy{\eps}\le \rdecoderuncertainty - \Entropy{\eps}
\end{align*}
as upper bound to minimize.
The gradients remain the same.

This also points to the nature of differential entropies as lacking a proper point of origin by themselves. We choose one by fixing $\Entropy{\eps}$. Just like other literature usually only considers mutual information as meaningful, we consider $\encodinguncertainty - \Entropy{\eps}$ as more meaningful than $\encodinguncertainty$. However, we can side-step this discussion conveniently by picking a canonical noise as point of origin in the form of \ZeroEntropyNoise $\Entropy{\eps}=0$.

\subsection{Upper bounds}
\label{sec:math for upperbound}
We derive this result as follows:
\begin{align*}
    \rdecoderuncertainty & = \chainedE{{\Hc{\Z}{\y}}}{\py} \\
    & \le \chainedE{ \tfrac{1}{2} \ln \det (2 \pi e \; {\cCov{\Z}{\y})} }{\py} \label{eq:var_ub}\\
    & \le \chainedE{ \sum_i \tfrac{1}{2} \ln (2 \pi e \; {\cVar{\Z_i}{\y})}}{\py}\\
    & \approx \chainedE{ \sum_i \tfrac{1}{2} \ln (2 \pi e \; {\hcVar{\Z_i}{\y})}}{\py},
\end{align*}

\begin{theorem}
    Given a $k$-dimensional random variable $X = (X_i)_{i=1}^k$ with $\Var{X_i} > 0$ for all $i$, 
    \begin{align*}
        \Entropy{X} & \le \tfrac{1}{2} \ln \det (2 \pi e \; {\Cov{X}} ) \\
        & \le \sum_i \tfrac{1}{2} \ln (2 \pi e \; {\Var{X_i}}).
    \end{align*}
\end{theorem}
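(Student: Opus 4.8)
The plan is to establish the two inequalities separately; both are classical facts from information theory, so the work is mostly in assembling them cleanly against the paper's notation. Throughout, write $\Sigma := \Cov{X}$ and $\mu := \opE[X]$, and let $g$ denote the density of the matched Gaussian $\normaldist{\mu}{\Sigma}$. I would first treat the nondegenerate case in which $X$ admits a density $p$ and $\Sigma$ is positive definite, and dispatch the degenerate case separately at the end.

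For the first inequality I would invoke the maximum-entropy property of the Gaussian, proved through the non-negativity of the Kullback--Leibler divergence. Writing
\begin{align*}
    0 \le \kale{p}{g} = -\Entropy{X} - \int p(x) \ln g(x)\, dx,
\end{align*}
the key observation is that $\ln g(x) = -\tfrac{1}{2}(x-\mu)^\top \Sigma^{-1} (x-\mu) - \tfrac{1}{2}\ln\det(2\pi\Sigma)$ is a polynomial of degree two in $x$, so $\int p \ln g$ depends on $p$ only through its first and second moments. Since $p$ and $g$ share these moments by construction, $\int p \ln g = \int g \ln g$, and therefore $\Entropy{X} \le -\int g \ln g = \Entropy{\normaldist{\mu}{\Sigma}}$. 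Evaluating the Gaussian entropy via $\opE[(X-\mu)^\top \Sigma^{-1}(X-\mu)] = \tr(\Sigma^{-1}\Sigma) = k$ yields the closed form $\tfrac{1}{2}\ln\det(2\pi e\,\Sigma)$, which is exactly the first bound.

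For the second inequality I would apply Hadamard's inequality: a positive semidefinite matrix satisfies $\det\Sigma \le \prod_i \Sigma_{ii} = \prod_i \Var{X_i}$. Factoring the constant and using monotonicity of $\ln$,
\begin{align*}
    \tfrac{1}{2}\ln\det(2\pi e\,\Sigma)
    = \tfrac{1}{2}\Big(k\ln(2\pi e) + \ln\det\Sigma\Big)
    \le \tfrac{1}{2}\Big(k\ln(2\pi e) + \textstyle\sum_i \ln\Var{X_i}\Big)
    = \sum_i \tfrac{1}{2}\ln(2\pi e\,\Var{X_i}),
\end{align*}
which is the desired chain.

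I expect the only delicate points --- rather than a genuine obstacle --- to be measure-theoretic bookkeeping. First, the identity $\int p\ln g = \int g\ln g$ needs finiteness of the second moments, which is implicit in the hypothesis (each $\Var{X_i}$ is finite); one should also note that $\Entropy{X}$ may equal $-\infty$, in which case the bounds hold trivially. Second, if $\Sigma$ is singular (some components perfectly correlated, even though each $\Var{X_i} > 0$), then $X$ has no density, $\Entropy{X} = -\infty$, and $\det\Sigma = 0$ makes the first bound read $-\infty$ as well, so the chain still holds; Hadamard's inequality remains valid for positive semidefinite $\Sigma$. Handling these boundary cases explicitly is the main thing to get right.
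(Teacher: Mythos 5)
Your proof is correct, and the overall skeleton matches the paper's: first bound $\Entropy{X}$ by the entropy of the moment-matched Gaussian, then bound $\ln\det\Cov{X}$ by $\sum_i \ln \Var{X_i}$. The two arguments diverge in where they put the work. For the first inequality the paper simply cites the maximum-entropy property of the Gaussian, whereas you prove it via $0 \le \kale{p}{g}$ together with the observation that $\int p \ln g$ depends on $p$ only through its first two moments --- a genuine strengthening in self-containedness. For the second inequality the roles reverse: you invoke Hadamard's inequality $\det \Sigma \le \prod_i \Sigma_{ii}$ as a black box, while the paper derives exactly that determinant inequality from the non-negativity of the Kullback--Leibler divergence between $\normaldist{0}{\Sigma_0}$ and $\normaldist{0}{\opDiag(\Var{X_i})_i}$, using $\tr(\Sigma_1^{-1}\Sigma_0)=k$; the paper's computation is in effect an information-theoretic proof of the positive-definite case of Hadamard's inequality, so the two routes are mathematically equivalent but cite different named facts. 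Your explicit treatment of the degenerate cases (singular $\Cov{X}$, $\Entropy{X}=-\infty$) goes beyond what the paper does and is a worthwhile addition, since the paper's hypothesis $\Var{X_i}>0$ does not by itself rule out a singular covariance.
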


\begin{proof}
First, the multivariate normal distribution with same covariance is the maximum entropy distribution for that covariance, and thus $\Entropy{X} \le \ln \det (2 \pi e \, \Cov{X} )$, when we substitute the differential entropy for a multivariate normal distribution with covariance $\Cov{X}$.
Let $\Sigma_0 := \Cov{X}$ be the covariance matrix and $\Sigma_1 := \opDiag(\Var{X_i})_i$ the matrix that only contains the diagonal.
Because we add independent noise, $\Var{X_i} > 0$ and thus $\Sigma_1^{-1}$ exists. It is clear that $\tr (\Sigma_1^{-1} \Sigma_0) = k$.
Then, we can use the KL-Divergence between two multivariate normal distributions $\N_0$, $\N_1$ with same mean $0$ and covariances $\Sigma_0$ and $\Sigma_1$ to show that $\ln \det \Sigma_0 \le \ln \det \Sigma_1$:
\begin{align*}
    & \quad 0 \le \kale{\N_0}{\N_1} = \tfrac{1}{2} \left( \tr (\Sigma_1^{-1} \Sigma_0) - k + \ln \left ( \frac{\det \Sigma_1}{\det \Sigma_0} \right ) \right) \\
    & \Leftrightarrow 0 \le \tfrac{1}{2} \ln \left ( \frac{\det \Sigma_1}{\det \Sigma_0} \right ) 
    \Leftrightarrow \tfrac{1}{2} \ln \det \Sigma_0 \le \tfrac{1}{2} \ln \det \Sigma_1. 
\end{align*}
We substitute the definitions of $\Sigma_0$ and $\Sigma_1$, and obtain the second inequality after adding $k \ln (2 \pi e)$ on both sides.
\end{proof}

\begin{theorem}
    Given a $k$-dimensional real-valued random variable $X = (X_i)_{i=1}^k \in \reals^k$, we can bound the entropy by the mean squared norm of the latent:
    \begin{align}
        \chainedE{\left\Vert X \right\Vert^2}{} \le C' \Rightarrow \Entropy{X} \le C,
    \end{align}
    with $C':=\frac{k e^{2C/k}}{2 \pi e}$.
\end{theorem}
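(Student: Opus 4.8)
The plan is to build directly on the previous theorem, which already supplies the diagonal bound $\Entropy{X} \le \sum_i \tfrac{1}{2} \ln(2\pi e \, \Var{X_i})$, and then to collapse the per-coordinate variances into a single scalar constraint coming from the mean squared norm. Conceptually, the statement is an instance of the fact that the maximum-entropy distribution subject to a fixed trace of the second-moment matrix is an isotropic zero-mean Gaussian; the diagonal bound is exactly the tool that lets me avoid re-deriving this from scratch. First I would observe that the expected squared norm dominates the sum of the variances: since $\chainedE{X_i^2}{} = \Var{X_i} + (\chainedE{X_i}{})^2 \ge \Var{X_i}$, summing over $i$ gives $\sum_i \Var{X_i} \le \sum_i \chainedE{X_i^2}{} = \chainedE{\left\Vert X \right\Vert^2}{} \le C'$. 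This replaces all covariance information with a single budget $S := \sum_i \Var{X_i} \le C'$.

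The second step is a constrained maximization of the diagonal bound. Writing $v_i := \Var{X_i}$, I want the largest possible value of $\sum_i \tfrac{1}{2} \ln(2\pi e\, v_i)$ over nonnegative $v_i$ with $\sum_i v_i = S$. By concavity of $\ln$ (equivalently, by AM--GM applied to the $v_i$), the sum $\sum_i \ln v_i$ is maximized at the symmetric point $v_i = S/k$, giving $\sum_i \tfrac{1}{2}\ln(2\pi e\, v_i) \le \tfrac{k}{2}\ln(2\pi e\, S/k)$. Since the right-hand side is increasing in $S$ and $S \le C'$, this is at most $\tfrac{k}{2}\ln(2\pi e\, C'/k)$.

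Finally I would substitute the definition $C' = \frac{k e^{2C/k}}{2\pi e}$, so that $2\pi e \, C'/k = e^{2C/k}$ and hence $\tfrac{k}{2}\ln(2\pi e\, C'/k) = \tfrac{k}{2}\cdot \tfrac{2C}{k} = C$, chaining everything into $\Entropy{X} \le C$.

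The main obstacle I anticipate is not the algebra but the handling of degenerate coordinates: the diagonal bound invoked from the previous theorem requires $\Var{X_i} > 0$, and the symmetric optimum $v_i = S/k$ is only meaningful when the variances are strictly positive. In the intended application $X$ always carries added noise, so $\Var{X_i} > 0$; for a fully general statement I would either restrict to that case or pass to the limit $v_i \to 0^{+}$, under which $\tfrac{1}{2}\ln(2\pi e\, v_i) \to -\infty$ and the upper bound only becomes stronger. A second, minor point to verify is the monotonicity used at the end: enlarging the budget $S$ up to $C'$ genuinely yields the worst case because $\ln$ is increasing, so the inequality $S \le C'$ may be used freely in the last step.
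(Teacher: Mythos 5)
Your proposal is correct and follows essentially the same route as the paper: invoke the diagonal bound $\Entropy{X} \le \sum_i \tfrac{1}{2}\ln(2\pi e \Var{X_i})$, apply AM--GM (equivalently, concavity of $\ln$) to replace the product of variances by $\bigl(\tfrac{1}{k}\sum_i \Var{X_i}\bigr)^k$, dominate $\sum_i \Var{X_i}$ by $\chainedE{\Vert X\Vert^2}{}$ via $\E{X_i^2}{} \ge \Var{X_i}$, and substitute $C'$. Your closing remark about degenerate coordinates matches the paper's own hypothesis $\Var{X_i} > 0$, which it justifies by the added independent noise.
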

\begin{proof}
    We begin with the previous bound:
    \begin{align*}
        \Entropy{X}
        & \le \sum_i \tfrac{1}{2} \ln (2 \pi e \; {\Var{X_i}}) 
        = \tfrac{k}{2} \ln 2 \pi e + \tfrac{1}{2} \ln \prod_i {\Var{X_i}} 
        \\
        & \le \tfrac{k}{2} \ln 2 \pi e + \tfrac{1}{2} \ln \left ( \tfrac{1}{k} \sum_i \Var{X_i} \right )^k
        = \tfrac{k}{2} \ln \tfrac{2 \pi e}{k} \sum_i \Var{X_i} 
        \\
        & \le \tfrac{k}{2} \ln \tfrac{2 \pi e}{k} \chainedE{\left\Vert X \right\Vert^2}{},
    \end{align*}
    where we use the AM-GM inequality:
    \begin{align*}
        \left ( \prod_i \Var{X_i} \right )^{\tfrac{1}{k}} \le \tfrac{1}{k} \sum_i \Var{X_i}
    \end{align*}
    and the monotony of the logarithm with:
    \begin{align*}
        \sum_i \Var{X_i} = \sum_i \E{X_i^2}{} - \E{X_i}{}^2 \le \sum_i \E{X_i^2}{} = \chainedE{\left\Vert X \right\Vert^2}{}
    \end{align*}
    Bounding using $\chainedE{\left\Vert X \right\Vert^2}{} \le C'$, we obtain
    \begin{align*}
        \Entropy{X} \le \tfrac{k}{2} \ln \tfrac{2 \pi e}{k} C' = C,
    \end{align*}
    and solving for $C'$ yields the statement. 
\end{proof}

This theorem provides justification for the use of $\ln \MSA$ as a regularizer, but does not justify the use of $\MSA$ directly. Here, we give two motivations.
We first observe that $\ln x \le x - 1$ due to $\ln$'s strict convexity and $\ln 1 = 0$, and thus:
\begin{align*}
    \Entropy{X} \le \tfrac{k}{2} \ln \tfrac{2 \pi e}{k} \chainedE{\left\Vert X \right\Vert^2}{} = \tfrac{k}{2}\left( \ln \tfrac{2 \pi}{k} \chainedE{\left\Vert X \right\Vert^2}{} - 1 \right ) \le \pi \chainedE{\left\Vert X \right\Vert^2}{}.
\end{align*}

We can also take a step back and remind ourselves that \IB objectives are actually Lagrangians, and $\beta$ in $\min \preservedinfo - \beta \capturedinfo$ is introduced as Lagrangian multiplier for the constrained objective:
\begin{align*}
    \min \preservedinfo \text{ s.t. } \capturedinfo \ge C.
\end{align*}
We can similarly write our canonical \DIB objective $\decoderuncertainty + \beta'' \encodingentropy$ as constrained objective
\begin{align*}
    \min \decoderuncertainty \text{ s.t. } \encodingentropy \le C,
\end{align*}
and use above statement to find the approximate form
\begin{align*}
    \min \decoderuncertainty \text{ s.t. } \MSA \le C'.
\end{align*}
Reintroducing a Lagrangian multiplier recovers our reguralized $\MSA$ objective:
\begin{align*}
    \min \decoderuncertainty + \gamma \MSA.
\end{align*}

\subsection{\texorpdfstring{``Deep Variational Information Bottleneck'' and $\MSA$}{``Deep Variational Information Bottleneck'' and Mean-Squared Activations}}
\label{app:alemia and msa}
\citet{Alemi2019} model $\pmencoder$ explicitly as multivariate Gaussian with parameterized mean and parameterized diagonal covariance in their encoder and regularize it to become close to $\normaldist{0}{I_k}$ by minimizing the Kullback-Leibler divergence $\kale{\pmencoder}{\normaldist{0}{I_k}}$ alongside the cross-entropy:
\begin{align*}
    \min \decoderXE + \gamma \, \kale{\pzcx}{\rprob{z}},
\end{align*}
as detailed in \secref{app:ib objectives}.

We can expand the regularization term to
\begin{align*}
    & \kale{\pzcx}{\normaldist{0}{I_k}} \\
    & \qquad = \chainedE{\chainedE{\ic{(2\pi)^{-\frac{k}{2}} \, e^{-\frac{1}{2}\|\Z\|^2|}}}{\pzcx}}{\px} - \encodinguncertainty \\
    & \qquad = \E{\frac{k}{2}\ln(2\pi) + \frac{1}{2}\|\Z\|^2}{\pz} - \encodinguncertainty.
\intertext{
After dropping constant terms (as they don't matter for optimization purposes), we obtain
}
    & \qquad = \frac{1}{2} \MSA - \encodinguncertainty.
\end{align*}
When we inject \ZeroEntropyNoise into the latent $\Z$, we have $\encodinguncertainty \ge 0$ and thus $\MSA - \encodinguncertainty \le \MSA$.
Thus, the $\MSA$ regularizer also upper-bounds \DVIB's regularizer in this case.

In particular, we have equality when we use a deterministic encoder. 
When we inject \ZeroEntropyNoise and use a deterministic encoder, we are optimizing the \DVIB objective function when we use the $\MSA$ regularizer. In other words, in this particular case, we could reinterpret ``$\min \decoderXE + \gamma \, \MSA$'' as optimizing the \DVIB objective from \citet{Alemi2019} if they were using a constant covariance instead of parameterizing it in their encoder. This does not hold for stochastic encoders.

We empirically compare \DVIB and the surrogate objectives from \secref{sec:summary surrogate objectives} in \secref{app:dvib_vs_uib}. In the corresponding plot in \figref{appfig:dvib_vs_uib}, we can indeed note that $\MSA$ and \DVIB are separated by a factor of 2 in the Lagrange multiplier.

\subsection{Detailed Comparison to \CEB, \VCEB \& \DVIB}
\label{app:detailed_prior_art_comparison}

In \citet{Fisher2019}, the introduced \CEB objective "$\min \redundantinfo - \gamma \capturedinfo$" is rewritten to "$\min \gamma \decoderuncertainty + \rdecoderuncertainty - \encodinguncertainty$" similar to the \IB objective in \cref{obs:rewritten_objectives} in \secref{sec:ib background}. However, these atomic quantities are not separately examined in detail.

Both \citet{Alemi2019} and \citet{fischer2020conditional} focus on the application of variational approximations to these quantities. Using a slight abuse of notation to denote all variational approximations, we can write the \VCEB objective\footnote{We will not examine the original objective without Lagrange multipliers from \citet{Fisher2019} here.} \citep{fischer2020conditional} and the \DVIB objective \citep{Alemi2019} more concisely as
\begin{align*}
    \text{\VCEB} & \equiv \min_\theta \decoderXE + \beta' (\textcolor{rdecoderuncertaintycolor}{\opH_\theta [ \Z \mathbin{\vert} \Y ]} - \textcolor{encodinguncertaintycolor}{\opH_\theta [ \Z \mathbin{\vert} \X ]}), \\
    \text{\DVIB} & \equiv \min_\theta \decoderXE + \beta'' (\textcolor{encodinguncertaintycolor}{\opH_\theta [ \Z ]} - \textcolor{encodinguncertaintycolor}{\opH_\theta [ \Z \mathbin{\vert} \X ]}).
\end{align*}
\DVIB does not specify how to choose stochastic encoders and picks the variational marginal $\qprob{z}$ to be a unit Gaussian. We relate how this choice of marginal relates to the $\MSA$ surrogate objective in \secref{app:alemia and msa}. 
\citet{Alemi2019} use VAE-like encoders that output mean and standard deviation for latents that are then sampled from a multivariate Gaussian distribution with diagonal covariance in their experiments. They run experiments on MNIST and on features extracted from the penultimate layer of pretrained models on ImageNet.

While \VCEB as introduced in \citet{Fisher2019} is agnostic to the choice of stochastic encoder, \citet{fischer2020conditional} mention that stochastic encoders can be similar to encoders and decoders in VAEs \citep{kingma2013auto} or like in \DVIB mentioned above. Both VAEs and \DVIB explicitly parameterize the distribution of the latent to sample from it before passing samples to the decoder. 

\citet{fischer2020modelrobustness} use an existing classifier architecture to output means for a Gaussian distribution with unit diagonal covariance. They further parameterize the variational approximation for the \RDecoderUncertainty $\qprobc{y}{z}$ with one Gaussian of fixed variance per class and learn this reverse decoder during training as well.
\citet{fischer2020modelrobustness} report results on CIFAR-10 and ImageNet that show good robustness against adversarial attacks without adversarial training, similar to the results in this paper.

This specific (and not motivated) instantiation of the \VCEB objective in \citet{fischer2020modelrobustness} is similar to the $\logVarZY$ surrogate objective introduced in \secref{sec:summary surrogate objectives} with a deterministic encoder and \ZeroEntropyNoise injection. However, the latter uses minibatch statistics instead of learning a reverse decoder, trading variational tightness for ease of computation and optimization. 

Compared to this prior literature, this paper examines the usage of implicit stochastic encoders (for example when using dropout) and presents three different simple surrogate objectives together with a principled motivation for \ZeroEntropyNoise injection, which has a dual use in enforcing meaningful compression and in simplifying the estimation of information quantities. Moreover, multi-sample approaches are examined to differentiate between \DecoderCE and \PredictionCE. %
In particular, implicit stochastic encoders together with \ZeroEntropyNoise and simple surrogates make it easier to use \IB objectives in practice compared to using explicitly parameterized stochastic encoders and variational approaches.

\subsection{An information-theoretic approach to VAEs}

While \citet{Alemi2019} draw a general connection to $\beta$-VAEs \citep{higgins2016beta}, we can use the insights from this paper to derive a simple VAE objective. Taking the view that VAEs learn latent representations that compress input samples, we can approach them as entropy estimators. Using $\Entropy{\X} + \Hc{\Z}{\X} = \Hc{\X}{\Z} + \Entropy{\Z}$, we obtain the ELBO
\begin{align}
    \Entropy{\X} = \Hc{\X}{\Z} + \Entropy{\Z} - \Hc{\Z}{\X} \overset{\text{(1)}}{\le} \thetaEntropy{\X}{\Z} + \Entropy{\Z} - \Hc{\Z}{\X} \overset{\text{(2)}}{\le} \thetaEntropy{\X}{\Z} + \Entropy{\Z}. \label{eq:VAE}
\end{align}

We can also put \cref{eq:VAE} into words: we want to find latent representations such that the reconstruction cross-entropy $\Hc{\X}{\Z}$ and the latent entropy $\Entropy{\Z}$, which tell us about the length of encoding an input sample, become minimal and approach the true entropy as average optimal encoding length of the dataset distribution.

The first inequality $\text{(1)}$ stems from introducing a cross-entropy approximation $\thetaEntropy{\X}{\Z}$ for the conditional entropy $\Hc{\X}{\Z}$. The second inequality (2) stems from the injection of zero-entropy noise with a stochastic encoder. For a deterministic encoder, we would have equality. We also note that $\text{(1)}$ is the \DVIB objective for a VAE with $\beta = 1$, and $\text{(2)}$ is the DIB objective for a VAE.

Finally, we can use one of the surrogates introduced in \secref{sec:summary surrogate objectives} to upper bound $\Entropy{\Z}$. For optimization purposes, we can substitute the $L_2$ activation regularizer $\MSA$ from \cref{obs:simple_approximators} and obtain as objective
\begin{align*}
    \min_\theta \thetaEntropy{\X}{\Z} + \MSA.
\end{align*}

It turns out that this objective is examined amongst others in the recently published \citet{ghosh2019variational} as a \emph{CV-VAE}, which uses a deterministic encoder and noise injection with constant variance. The paper derives this objective by noticing that the explicit parameterizations that are commonly used for VAEs are cumbersome, and the actual latent distribution does often not necessarily match the induced distribution (commonly a unit Gaussian) which causes sampling to generate out-of-distribution data. It fits a separate density estimator on $\prob{\z}$ after training for sampling. The paper goes on to then examine other methods of regularization, but also provides experimental results on CV-VAE, which are in line with VAEs and WAEs. The derivation and motivation in the paper are different and make no use of information-theoretic principles. Our short derivation above shows the power of using the insights from \cref{sec:decoder uncertainty} and \ref{sec:summary surrogate objectives} for applications outside of supervised learning. 

\subsection{Soft clustering by entropy Minimization with Gaussian noise}
\label{sec:entropy minimization with gaussian noise}
Consider the problem of minimizing $\rdecoderuncertainty$ and $\decoderuncertainty$, in the setting where $Z = f_\theta(X) + \epsilon \sim \mathcal{N}(0, \sigma^2)$---i.e. the embedding $Z$ is obtained by adding Gaussian noise to a deterministic function of the input.  Let the training set be enumerated $x_1, \dots, x_n$, with $\mu_i = f_\theta(x_i)$. Then the distribution of $Z$ is given by a mixture of Gaussians with the following density, where $d(x, \mu_i) := \|x - \mu_i\|/\sigma^2$.
\begin{equation*}
   \pz \propto \frac{1}{n} \sum_{i=1}^n \exp(-d(z, \mu_i))
\end{equation*}
Assuming that each $x_i$ has a deterministic label $y_i$, we then find that the conditional distributions $\pdecoder$ and $\probc{\z}{\y}$ are given as follows:
\begin{align*}
    \probc{\z}{\y} &\propto \frac{1}{n_y} \sum_{i: y_i=y} \exp (-d(z, \mu_{i})) \\
    \probc{y}{z} & = \sum_{i:y_i=y} \probc{\mu_i}{z} = \sum_{i:y_i=y} \frac{\probc{z}{\mu_i} \prob{\mu_i}}{\pz}\\
    &= \frac{\sum_{i: y_i=y} \probc{\z}{\mu_i}}{\sum_{k=1}^n \probc{z}{\mu_k}} = \frac{\sum_{i: y_i=y} \exp (-d(z, \mu_i))}{\sum_{k=1}^n \exp(-d(z, \mu_k) )},
\end{align*}
where $n_y$ is the number of $x_i$ with class $y_i=y$.
Thus, the conditional $Z|Y$ can be interpreted as a mixture of Gaussians and $Y|Z$ as a Softmax marginal with respect to the distances between $Z$ and the mean embeddings. We observe that $\rdecoderuncertainty$ is lower-bounded by the entropy of the random noise added to the embeddings:
\begin{equation*}
    \rdecoderuncertainty \geq \Hc{f_\theta(X) + \epsilon}{\Y} \geq \Entropy{\epsilon}
\end{equation*}
with equality when the distribution of $f_\theta(X)|Y$ is deterministic -- that is $f_\theta$ is constant for each equivalence class. 

Further, the entropy $\decoderuncertainty$ is minimized when $\encodingentropy$ is large compared to $\rdecoderuncertainty$ as we have the decomposition
\begin{equation*}
    \decoderuncertainty = \rdecoderuncertainty - \encodingentropy + \labelentropy.
\end{equation*}
In particular, when $f_\theta$ is constant over equivalence classes of the input, then $\decoderuncertainty$ is minimized when the entropy $H[f_\theta(X) + \epsilon]$ is large -- i.e. the values of $f_\theta(x_i)$ for each equivalence class are distant from each other and there is minimal overlap between the clusters. Therefore, the optima of the information bottleneck objective under Gaussian noise share similar properties to the optima of geometric clustering of the inputs according to their output class.

{
To gain a better understanding of local optimization behavior, we decompose the objective terms as follows:
\newcommand{\ddmuk}{\frac{d}{d\mu_k}}
\begin{align*}
    \rdecoderuncertainty &= \chainedE{\CrossEntropy{\probc{z}{y}}{\probc{z}{y}}}{\py} \\
    & = \chainedE{\CrossEntropy{\probc{z}{x}}{\probc{z}{y}}}{\pxy} \\
    & = \chainedE{\kale{\probc{z}{x}}{\probc{z}{y}} + \Hc{Z}{x}}{\pxy} \\
    & = \chainedE{\kale{\probc{z}{x}}{\probc{z}{y}}}{\pxy} \\
    & \qquad + \underbrace{\encodinguncertainty}_{=const}. \\
\intertext{
    To examine how the mean embedding $\mu_k$ of a single datapoint $x_k$ affects this entropy term, we look at the derivative of this expression with respect to $\mu_k = f_\theta(x_k)$. We obtain: 
}
    \ddmuk \rdecoderuncertainty &= \ddmuk \Hc{Z}{y_k} \\ 
    &= \ddmuk \chainedE{\kale{\probc{z}{x}}{\probc{z}{y}}}{\probc{x}{y_k}} \\
    &= \sum_{i \neq i: y_i=y_k} \frac{1}{n_{y_k}} \ddmuk {\kale{\probc{z}{x_i}}{\probc{z}{y_k}}} \\
    & \qquad + \frac{1}{n_{y_k}} \ddmuk \kale{\probc{z}{x_k}}{\probc{z}{y_k}}.
\end{align*}
While these derivatives do not have a simple analytic form, we can use known properties of the KL divergence to develop an intuition on how the gradient will behave.
We observe that in the left-hand sum $\mu_k$ only affects the distribution of $Z|Y$ (that is we are differentiating a sum of terms that look like a reverse KL), whereas it has greater influence on $\probc{z}{x_k}$ in the right-hand term, and so its gradient will more closely resemble that of the forward KL. The left-hand-side term will therefore push $\mu_k$ towards the centroid of the means of inputs mapping to $y$, whereas the right-hand side term is mode-seeking. 
}

\subsection{A note on differential and discrete entropies}
\label{sec:pathologies}

The mutual information between two random variables can be defined in terms of the KL divergence between the product of their marginals and their joint distribution. However, the KL divergence is only well-defined when the Radon-Nikodym derivative of the density of the joint with respect to the product exists. Mixing continuous and discrete distributions---and thus differential and continuous entropies---can violate this requirement, and so lead to negative values of the ``mutual information''. This is particularly worrying in the setting of training stochastic neural networks, as we often assume that an stochastic embedding is generated as a deterministic transformation of an input from a finite dataset to which a continuous perturbation is added. We provide an examples where naive computation without ensuring that the product and joint distributions of the two random variables have a well-defined Radon-Nikodym derivative yields negative mutual information.

Let $X \sim U([0,0.1])$, $Z = X + R$ with $R \sim U(\{0,1\})$. Then
\begin{align*}
    \MI{X}{Z} = \Entropy{X} = \log \tfrac{1}{10} \le 0.
\end{align*}
Generally, given $X$ as above and an invertible function $f$ such that $Z = f(X)$, $\MI{X}{Z}=\Entropy{X}$ and can thus be negative. In a way, these cases can be reduced to (degenerate) expressions of the form $\MI{X}{X} = \Entropy{X}$.

We can avoid these cases by adding independent continuous noise.

These examples show that not adding noise can lead to unexpected results. While they still yield finite quantities that bear a relation to the entropies of the random variables, they violate some of the core assumptions we have such that mutual information is always positive.

\section{Experiment details}
\label{app:experiment_details}

\subsection{DNN architectures and hyperparameters}
\label{app:dnn_archs_and_setup}
For our experiments, we use PyTorch \citep{Paszke2019} and the Adam optimizer \citep{Kingma2015}. In general, we use an initial learning rate of $0.5 \times 10^{-3}$ and multiply the learning rate by $\sqrt{0.1}$ whenever the loss plateaus for more than 10 epochs for CIFAR-10. For MNIST and Permutation MNIST, we use an initial learning rate of $10^{-4}$ and multiply the learning rate by 0.8 whenever the loss plateaus for more than 3 epochs.

Sadly, we deviate from this in the following experiments: when optimizing the decoder uncertainty for \emph{categorical Z} for CIFAR-10, we used 5 epochs patience for the decoder uncertainty objective and a initial learning rate of $10^{-4}$. We do not expect this difference to affect the qualitative results mentioned in \secref{app:optimal decoders} when comparing to other objectives. We also only used 5 epochs patience when comparing the two cross-entropies on CIFAR-10 in \secref{sec:decoder uncertainty}. As this was used for both sets of experiments, it does not matter.

We train the experiments for creating the information plane plots for 150 epochs. The toy experiment (\figref{fig:entropy_minimization_and_noise}) is trained for 20 epochs. All other experiments train for 100 epochs.

We use a batchsize of 128 for most experiments. We use a batchsize of 32 for comparing the cross-entropies for CIFAR-10 (where we take 8 dropout samples each), and a batchsize of 16 for MNIST (where we take 64 dropout samples each).

For MNIST, we use a standard dropout CNN, following \url{https://github.com/pytorch/examples/blob/master/mnist/main.py}. For Permutation MNIST, we use a fully-connected model (for experiments with categorical $\Z$ in \secref{app:optimal decoders}): $784 \times 1024 \times 1024\times C$.
For CIFAR-10, we use a regular deterministic ResNet18 model \citep{He2016} for the experiments in \secref{app:optimal decoders}. (As the model outputs a categorical distribution it becomes stochastic through that and we don't need stochasticity in the weights.) For the other experiments as well as the Imagenette experiments, we use a ResNet18v2 \citep{HeV22016}. When we need a stochastic model for CIFAR-10 (for \emph{continuous} $\Z$), we add DropConnect \citep{wan2013regularization} with rate 0.1 to all but the first convolutional layers and dropout with rate 0.1 before the final fully-connected layer. Because of memory issues, we reuse the dropout masks within one batch. The model trains to 94\% accuracy on CIFAR-10.

For CIFAR-10, we always remove the maximum pooling layer and change the first convolutional layer to have kernel size 3 with stride 1 and padding 1. We also use dataset augmentation during training, but not during evaluation on the training set and test set for purposes of computing metrics. We crop randomly after the padding the training images by 4 pixels in every direction and randomly flip images horizontally.

We generally sample 30 values of $\gamma$ for the information plane plots from the specified ranges, using a log scale. For the ablation studies mentioned below, we sample  10 values of $\gamma$ each. We always sample $\gamma = 0$ separately and run a trial with it.

Baselines were tuned by hand (without regularization) using grad-student descent and small grid searches.

\subsection{Cluster setup \& used resources}

We make use of a local SLURM cluster \citep{slurm}. We run our experiments on GPUs (Geforce RTX 2080 Ti). We estimate reproducing all results would take 94 GPU days.

\subsection{Comparison of the surrogate objectives}

\begin{figure*}[h]
    \centering %
	\includegraphics[width=0.9\linewidth]{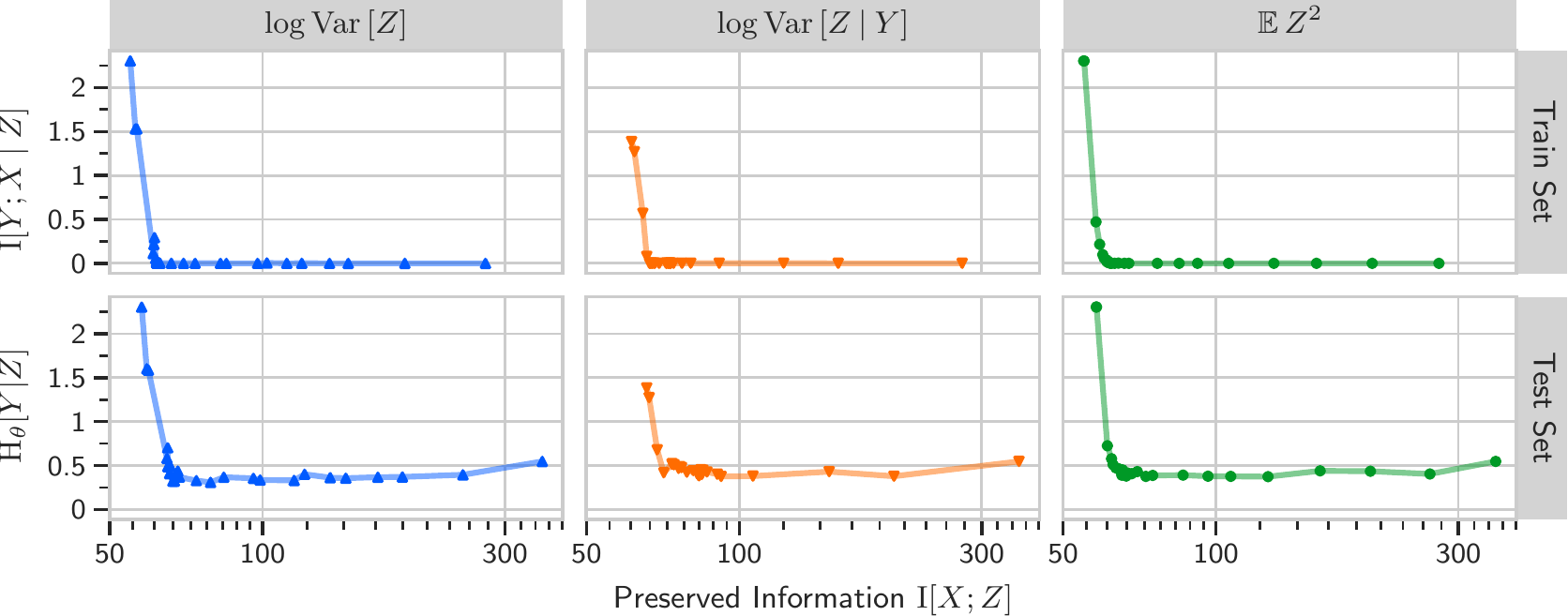} %
    \caption{\emph{Information Plane Plot of the latent $\Z$ similar to \citet{Tishby2015} but using a \emph{ResNet18} model on \emph{CIFAR-10} using the different regularizes from \secref{sec:summary surrogate objectives} (\emph{without dropout, but with \ZeroEntropyNoise}).} The dots are colored by $\gamma$.
    See \secref{sec:exp info plane plots} for more details.}
    \label{fig:cifar10_kraskov_IBP_big}
\end{figure*}

\begin{figure*}[h]
    \includegraphics[width=\linewidth, clip, trim=0 0 0 7]{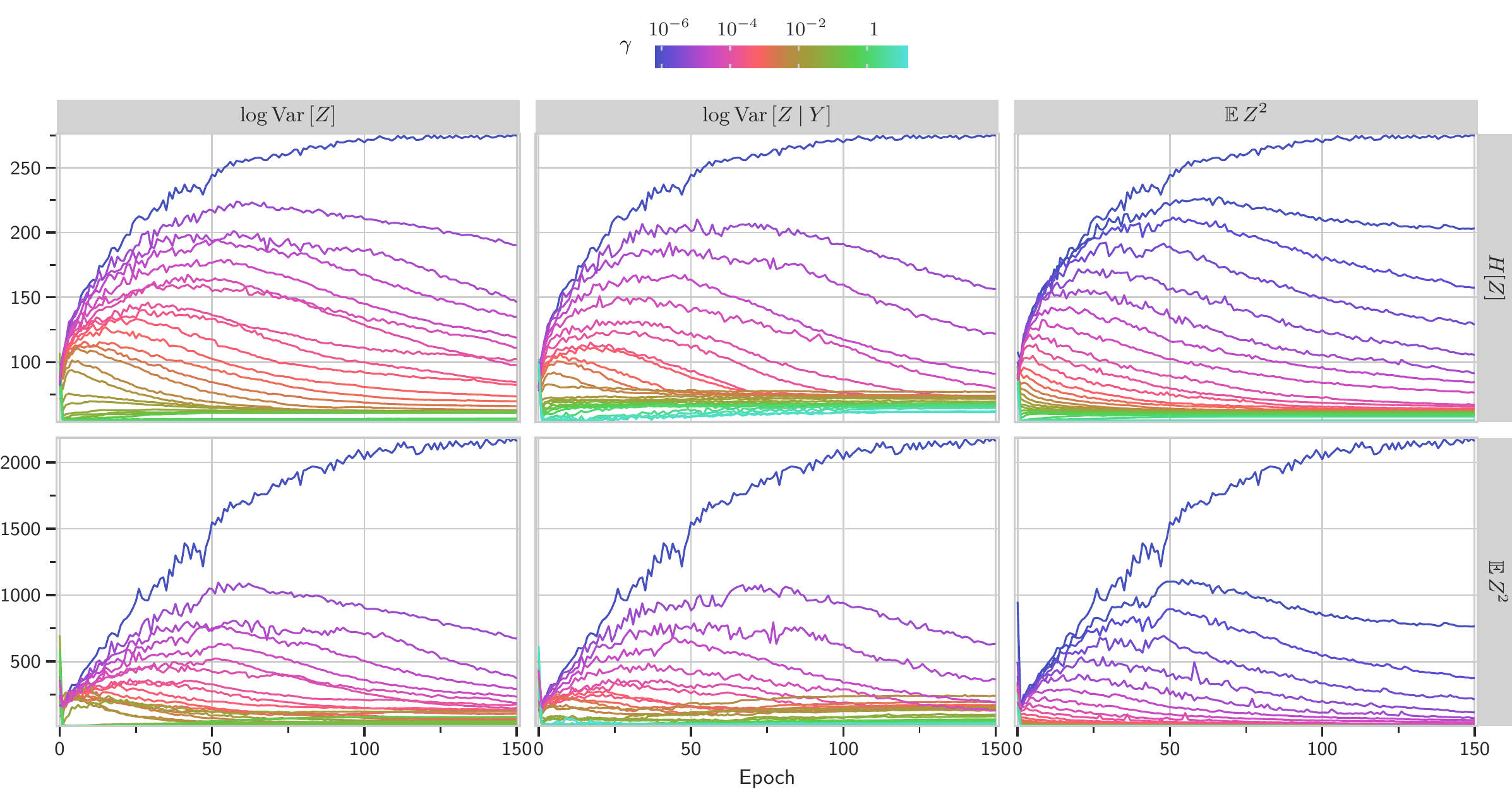}
    \caption{\emph{Entropy estimates while training with different $\gamma$ and with different surrogate regularizers on \emph{CIFAR-10} with a \emph{ResNet18} model.} Entropies are estimated on training data based on \citet{Kraskov2003}. Qualitatively all three regularizers push $\encodingentropy$ and $\rdecoderuncertainty$ down. $\rdecoderuncertainty$ is not shown here because it always stays very close to $\encodingentropy$. $\MSA$ tends to regularize entropies more strongly for small $\gamma$.
    See \secref{sec:exp surrogate regularizers} for more details.
    }
    \label{appfig:cifar10_kraskov_regularizers}
    \vspace{-1.5em}
\end{figure*}

\begin{sidewaysfigure*}[h!]
    \centering
    \includegraphics[width=\linewidth, clip, trim=0 0 0 5]{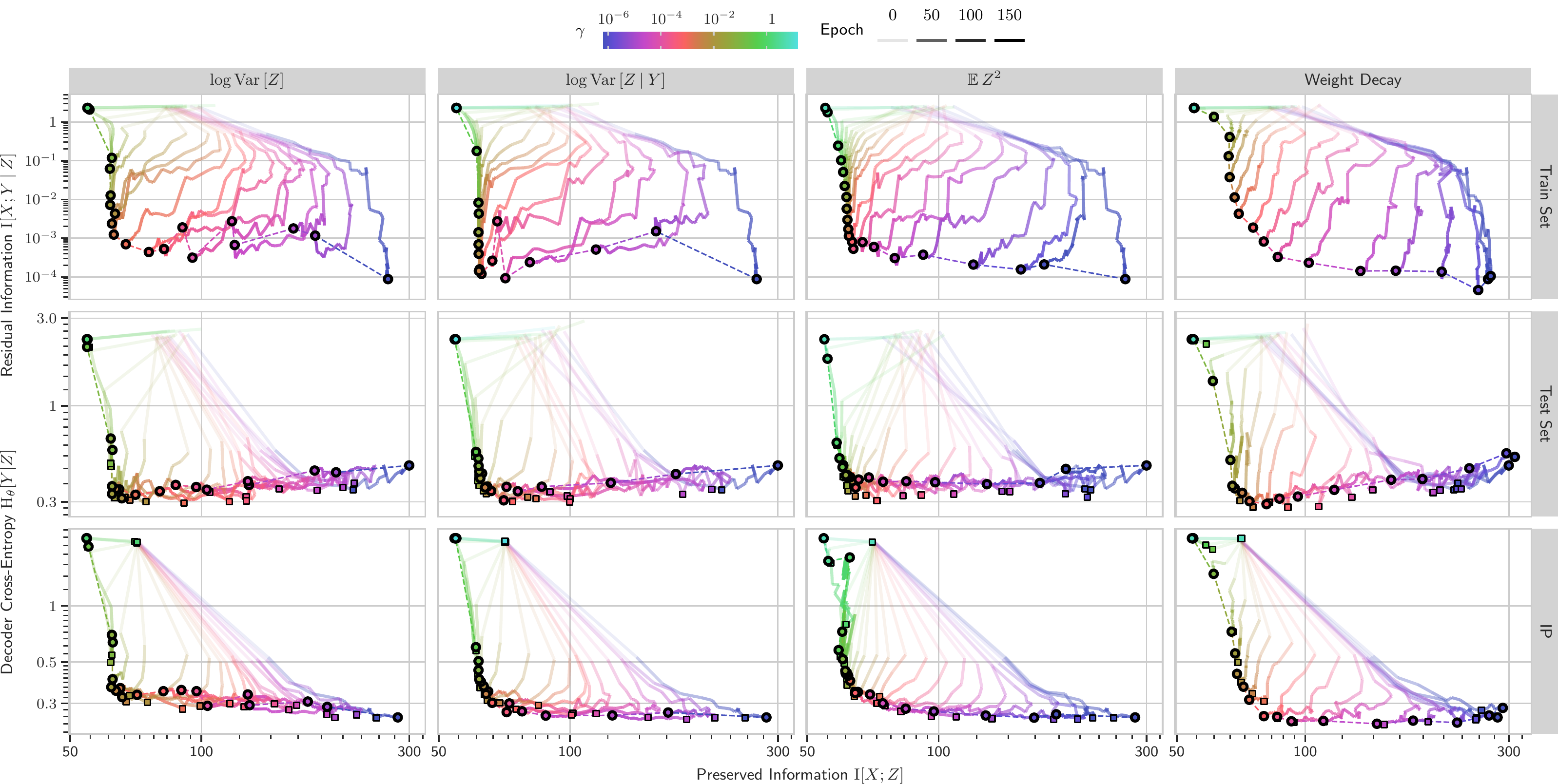}%
    \caption{\emph{\emph{Without dropout but with \ZeroEntropyNoise:} Information Plane Plot of training trajectories for ResNet18 models on CIFAR-10 and different regularizers.}
    The trajectories are colored by their respective $\gamma$; their transparency changes by epoch. %
    Compression (\PreservedInfo $\downarrow)$ trades-off with performance (\ResidualInfo $\downarrow$). See \secref{sec:exp surrogate regularizers}.
    The circle marks the final epoch of a trajectory. The square marks the best epoch (\ResidualInfo $\downdownarrows$).
    }%
    \label{appfig:cifar10_beta_trajectories_no_dropout}%
\end{sidewaysfigure*}

\begin{sidewaysfigure*}[h!]
    \centering
    \includegraphics[width=\linewidth, clip, trim=0 0 0 5]{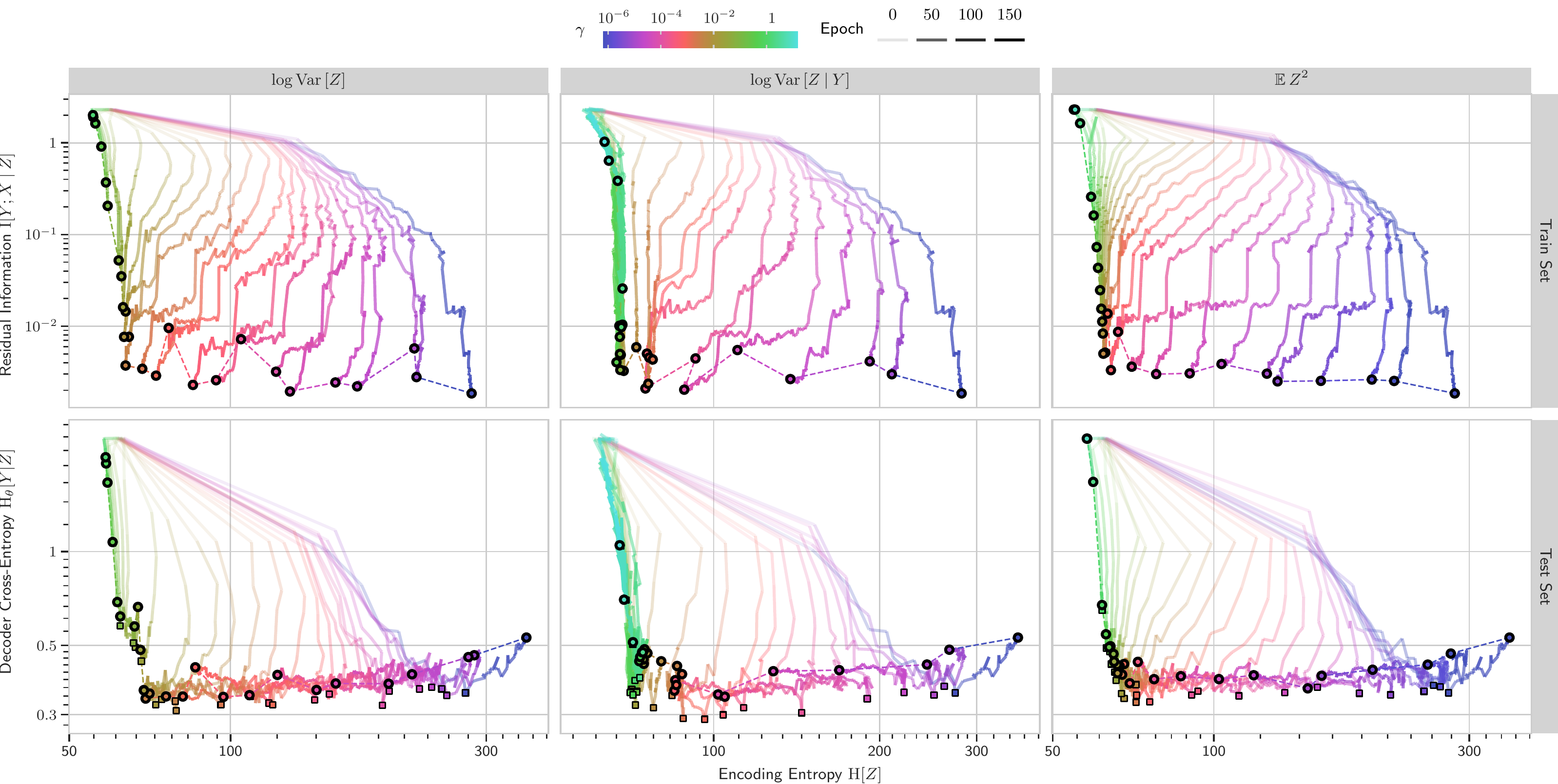}%
    \caption{\emph{\emph{With dropout and with \ZeroEntropyNoise:} Information Plane Plot of training trajectories for ResNet18 models on CIFAR-10 and different regularizers.}
    The trajectories are colored by their respective $\gamma$; their transparency changes by epoch. %
    Compression (\PreservedInfo $\downarrow)$ trades-off with performance (\ResidualInfo $\downarrow$). See \secref{sec:exp surrogate regularizers}.
    The circle marks the final epoch of a trajectory. The square marks the best epoch (\ResidualInfo $\downdownarrows$).
    }%
    \label{appfig:cifar10_beta_trajectories_dropout}%
\end{sidewaysfigure*}

\begin{sidewaysfigure*}[h!]
    \centering
    \includegraphics[width=\linewidth, clip, trim=0 0 0 5]{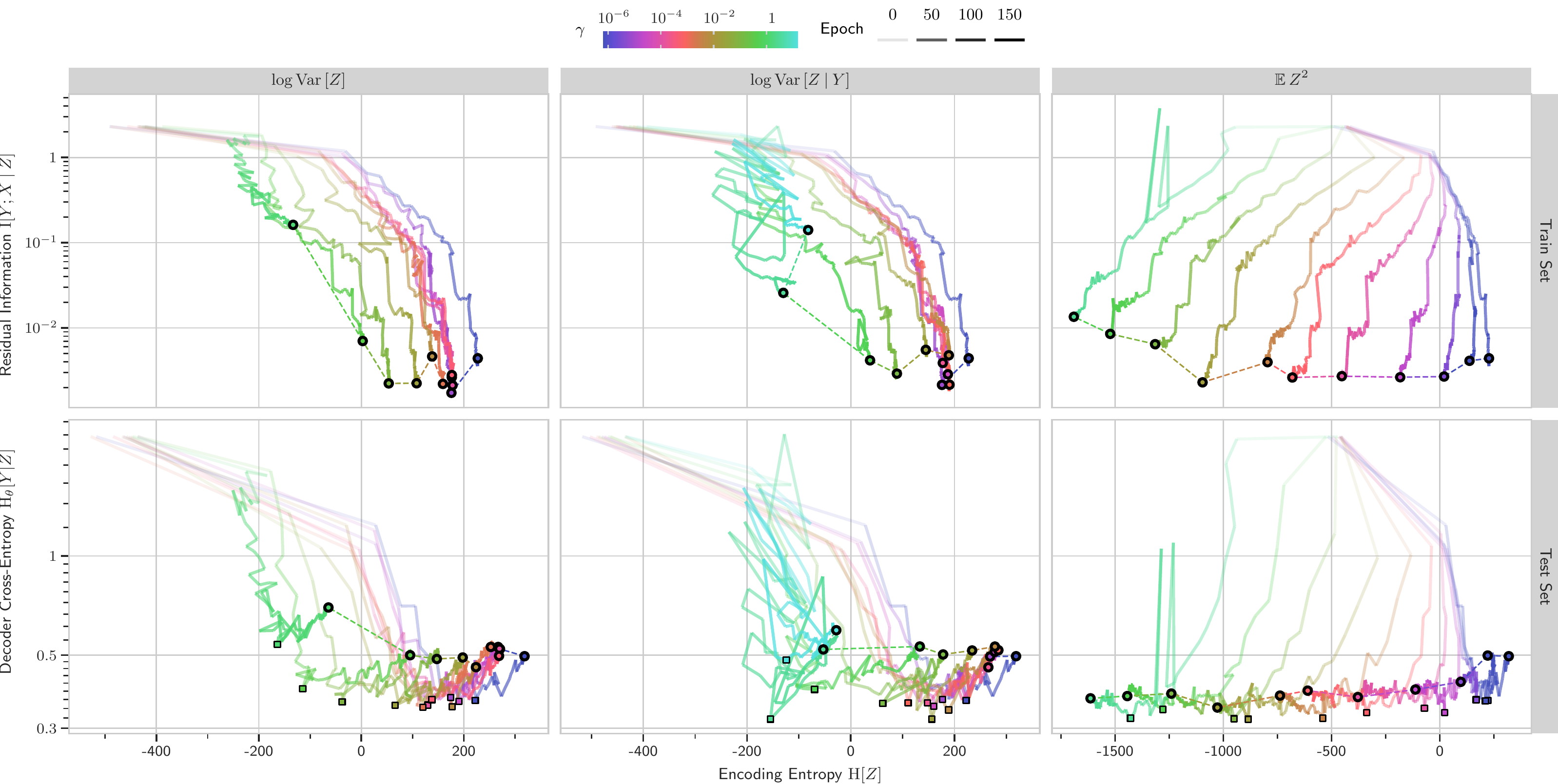}%
    \caption{\emph{\emph{With dropout but without \ZeroEntropyNoise:} Information Plane Plot of training trajectories for ResNet18 models on CIFAR-10 and different regularizers.}
    The trajectories are colored by their respective $\gamma$; their transparency changes by epoch. %
    Compression (\PreservedInfo $\downarrow)$ trades-off with performance (\ResidualInfo $\downarrow$). See \secref{sec:exp surrogate regularizers}.
    The circle marks the final epoch of a trajectory. The square marks the best epoch (\ResidualInfo $\downdownarrows$).
    }%
    \label{appfig:cifar10_beta_trajectories_no_noise}%
\end{sidewaysfigure*}

\begin{sidewaysfigure*}[h!]
    \centering
    \includegraphics[width=\linewidth, clip, trim=0 0 0 5]{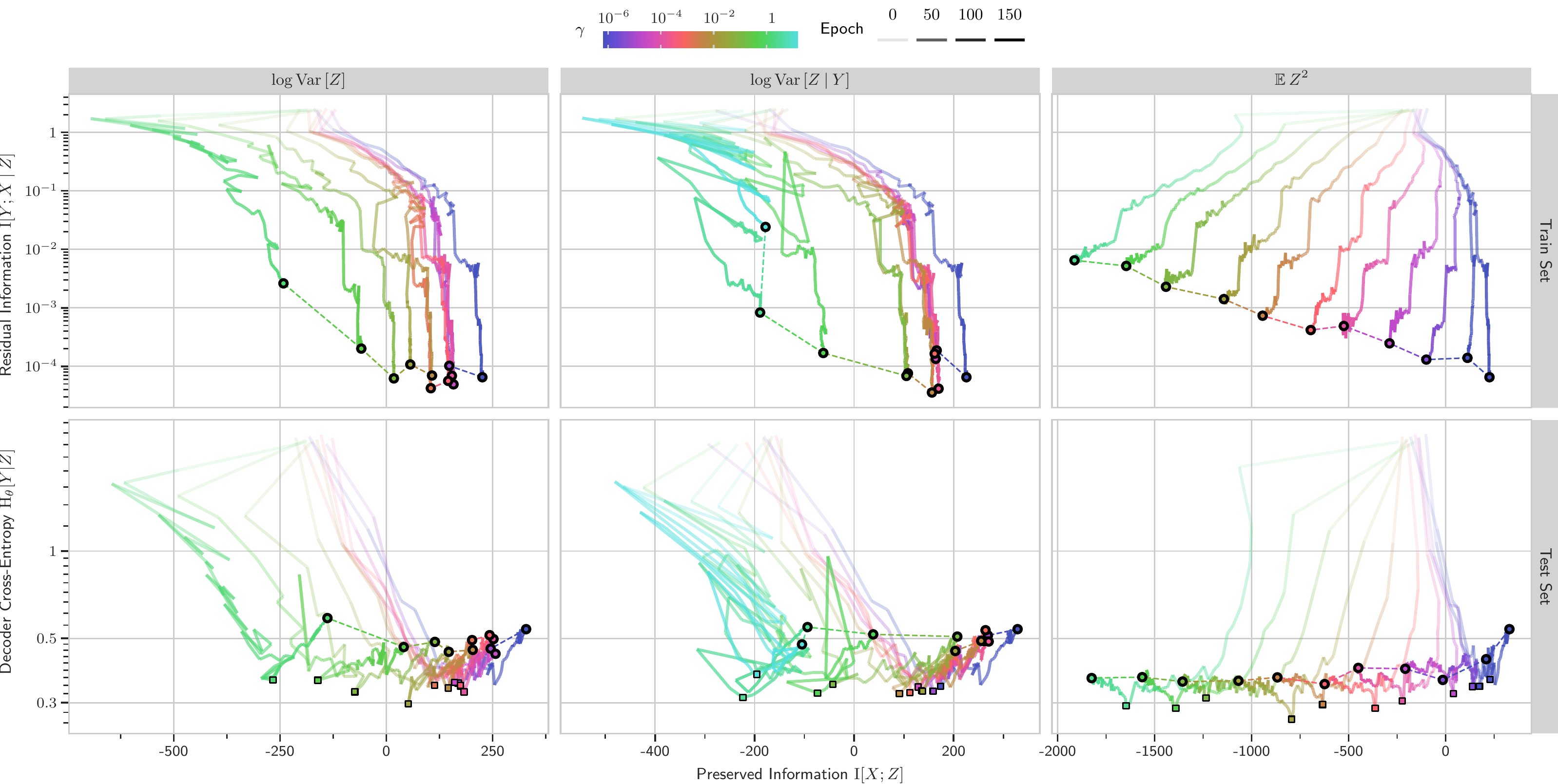}%
    \caption{\emph{\emph{Without dropout and without \ZeroEntropyNoise:} Information Plane Plot of training trajectories for ResNet18 models on CIFAR-10 and different regularizers.}
    The trajectories are colored by their respective $\gamma$; their transparency changes by epoch. %
    Compression (\PreservedInfo $\downarrow)$ trades-off with performance (\ResidualInfo $\downarrow$). See \secref{sec:exp surrogate regularizers}.
    The circle marks the final epoch of a trajectory. The square marks the best epoch (\ResidualInfo $\downdownarrows$).
    }%
    \label{appfig:cifar10_beta_trajectories_no_dropout_no_noise}%
\end{sidewaysfigure*}

\begin{sidewaysfigure*}[h!]
    \centering
    \includegraphics[width=\linewidth, clip, trim=0 0 0 5]{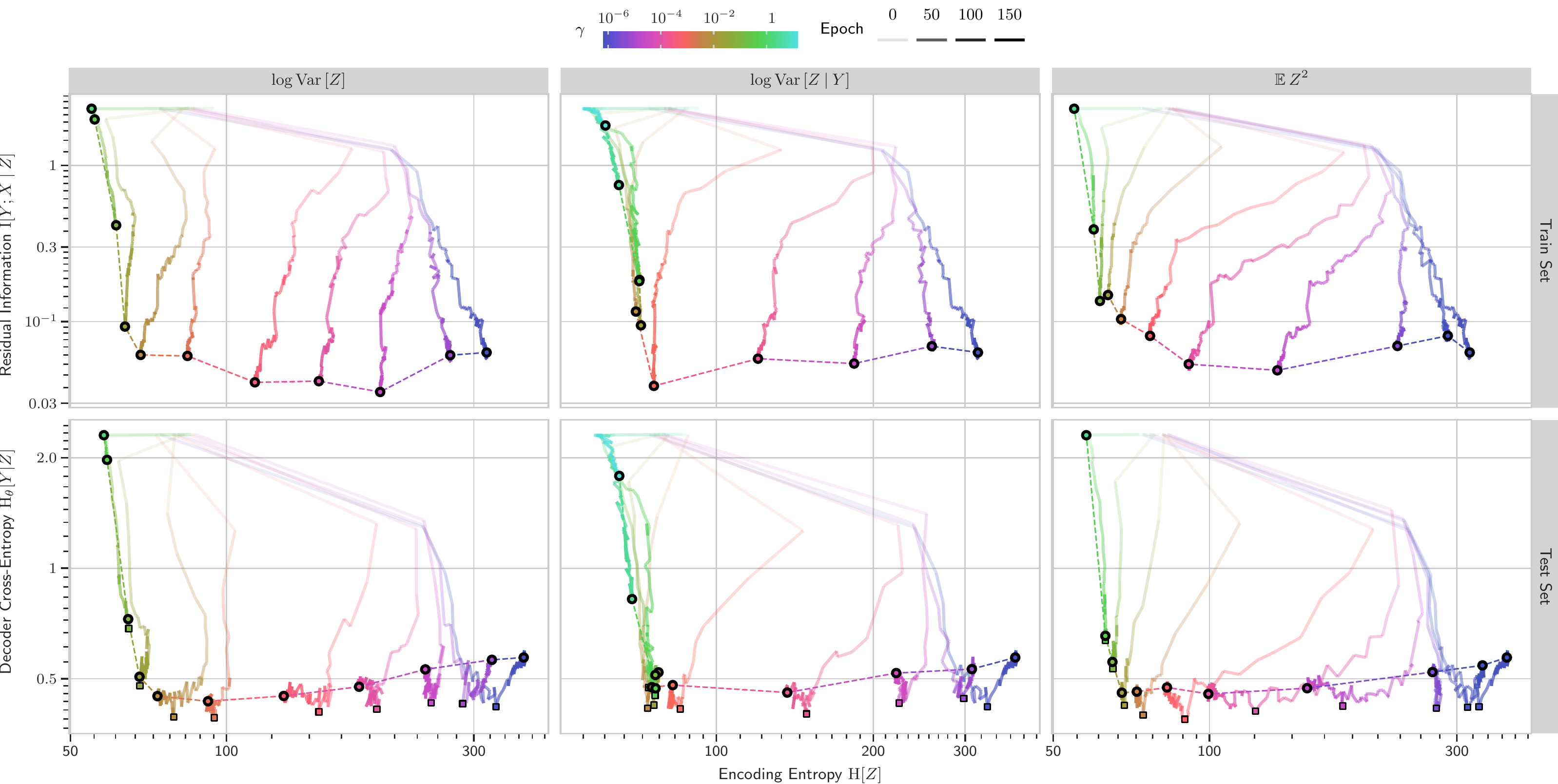}%
    \caption{\emph{\emph{With more dropout and \ZeroEntropyNoise:} Information Plane Plot of training trajectories for ResNet18 models on CIFAR-10 and $\logVarZY$ regularizer with batchsizes 128 and 256.}
    The trajectories are colored by their respective $\gamma$; their transparency changes by epoch. %
    Compression (\PreservedInfo $\downarrow)$ trades-off with performance (\ResidualInfo $\downarrow$). See \secref{sec:exp surrogate regularizers}.
    The circle marks the final epoch of a trajectory. The square marks the best epoch (\ResidualInfo $\downdownarrows$). A DropConnect rate of 0.3 and dropout rate of 0.4 were used instead of 0.1 for each.
    }%
    \label{appfig:cifar10_beta_trajectories_more_dropout}%
\end{sidewaysfigure*}

\begin{figure*}[h!]
    \centering
    \includegraphics[width=\linewidth, clip, trim=0 0 0 5]{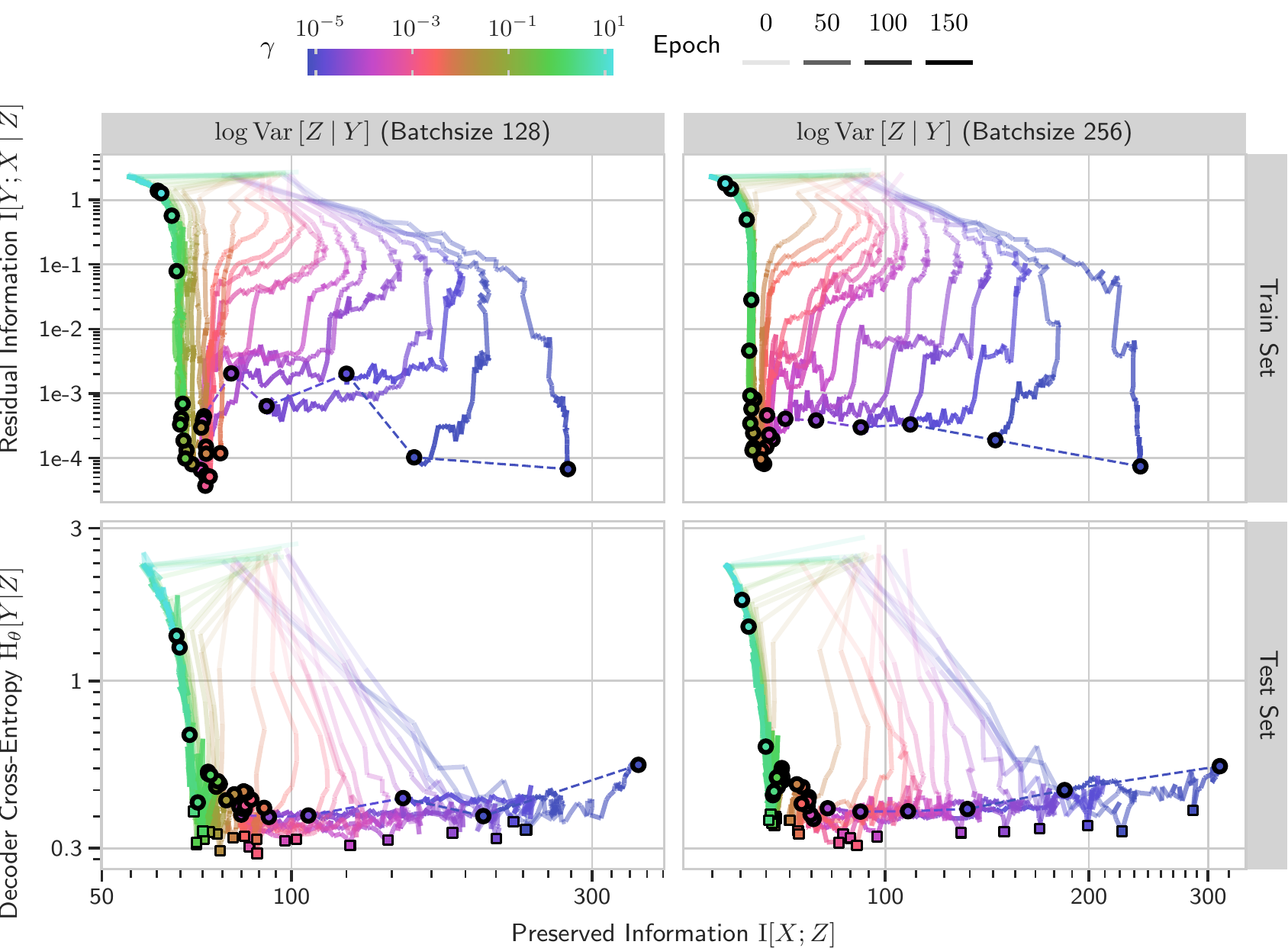}%
    \caption{\emph{\emph{Without dropout but with \ZeroEntropyNoise:} Information Plane Plot of training trajectories for ResNet18 models on CIFAR-10 and $\logVarZY$ regularizer with batchsizes 128 and 256.}
    The trajectories are colored by their respective $\gamma$; their transparency changes by epoch. %
    Compression (\PreservedInfo $\downarrow)$ trades-off with performance (\ResidualInfo $\downarrow$). See \secref{sec:exp surrogate regularizers}.
    The circle marks the final epoch of a trajectory. The square marks the best epoch (\ResidualInfo $\downdownarrows$).
    }%
    \label{appfig:cifar10_beta_trajectories_no_dropout_batchsize}%
\end{figure*}

\begin{figure*}[h]
    \centering %
	\includegraphics[width=\linewidth]{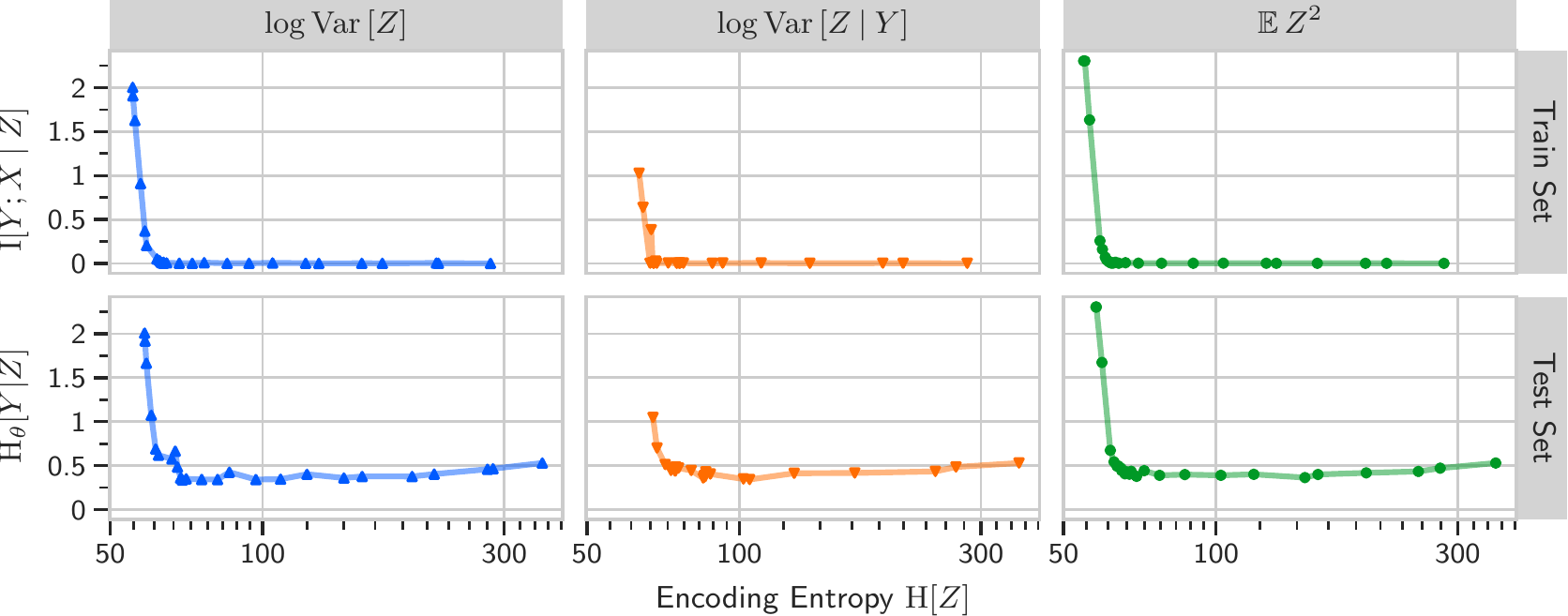} %
    \caption{\emph{Information Plane Plot of the latent $\Z$ similar to \citet{Tishby2015} but using a \emph{ResNet18} model on \emph{CIFAR-10} using the different regularizes from \secref{sec:summary surrogate objectives} (\emph{with dropout and \ZeroEntropyNoise}).} The dots are colored by $\gamma$.
    See \secref{sec:exp info plane plots} for more details.}
    \label{fig:cifar10_kraskov_IBP_big_dropout}
\end{figure*}

\begin{figure*}[h!]
    \begin{subfigure}[h]{\textwidth}
        \centering
        \includegraphics[width=0.9\linewidth, clip, trim=0 0 0 0]{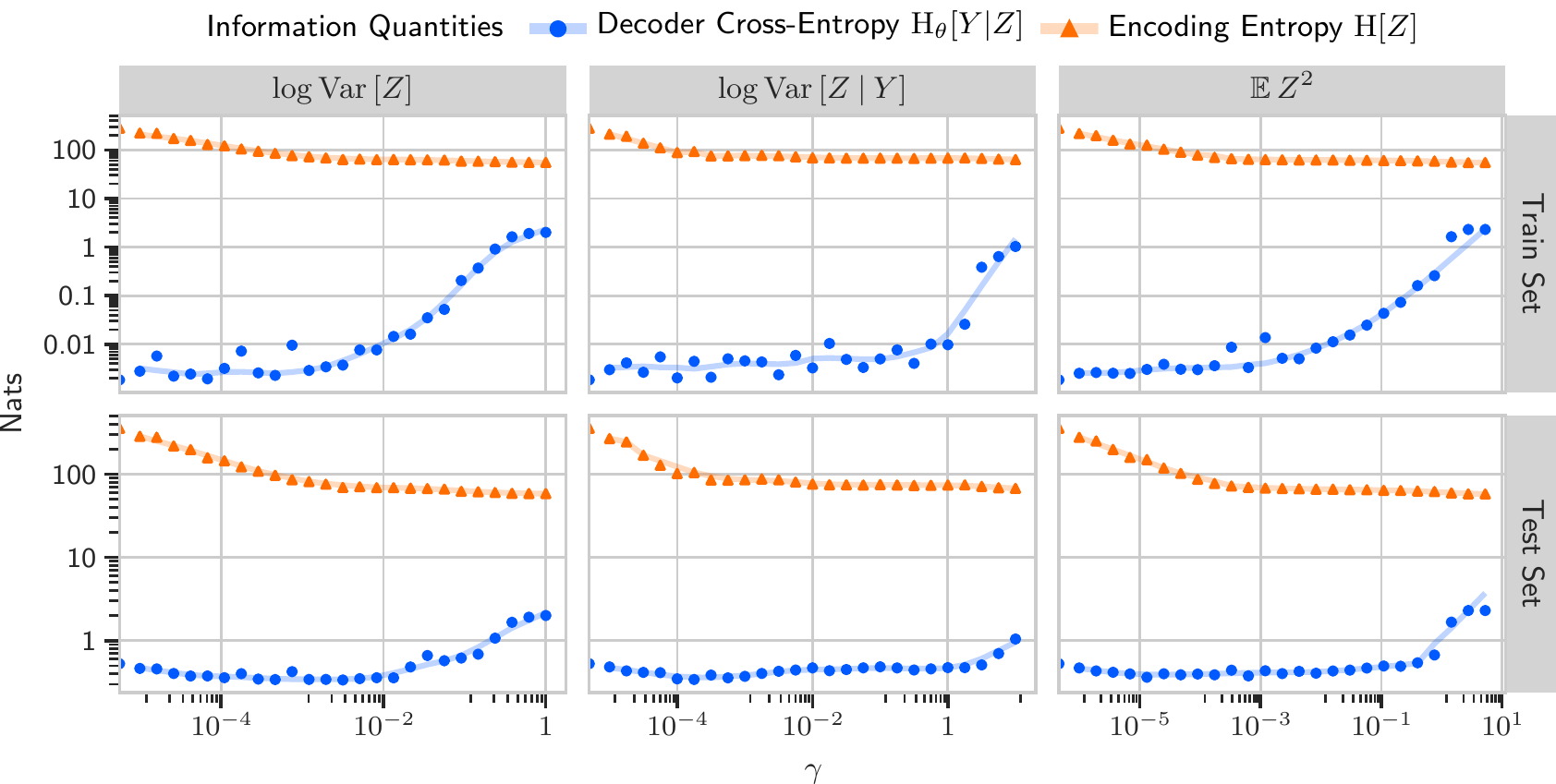}%
        \caption{With dropout and \ZeroEntropyNoise.}%
        \label{appfig:cifar10_kraskov_gamma_appendix_dropout}%
    \end{subfigure}
    \begin{subfigure}[h]{\textwidth}
        \centering
        \includegraphics[width=0.9\linewidth, clip, trim=0 0 0 0]{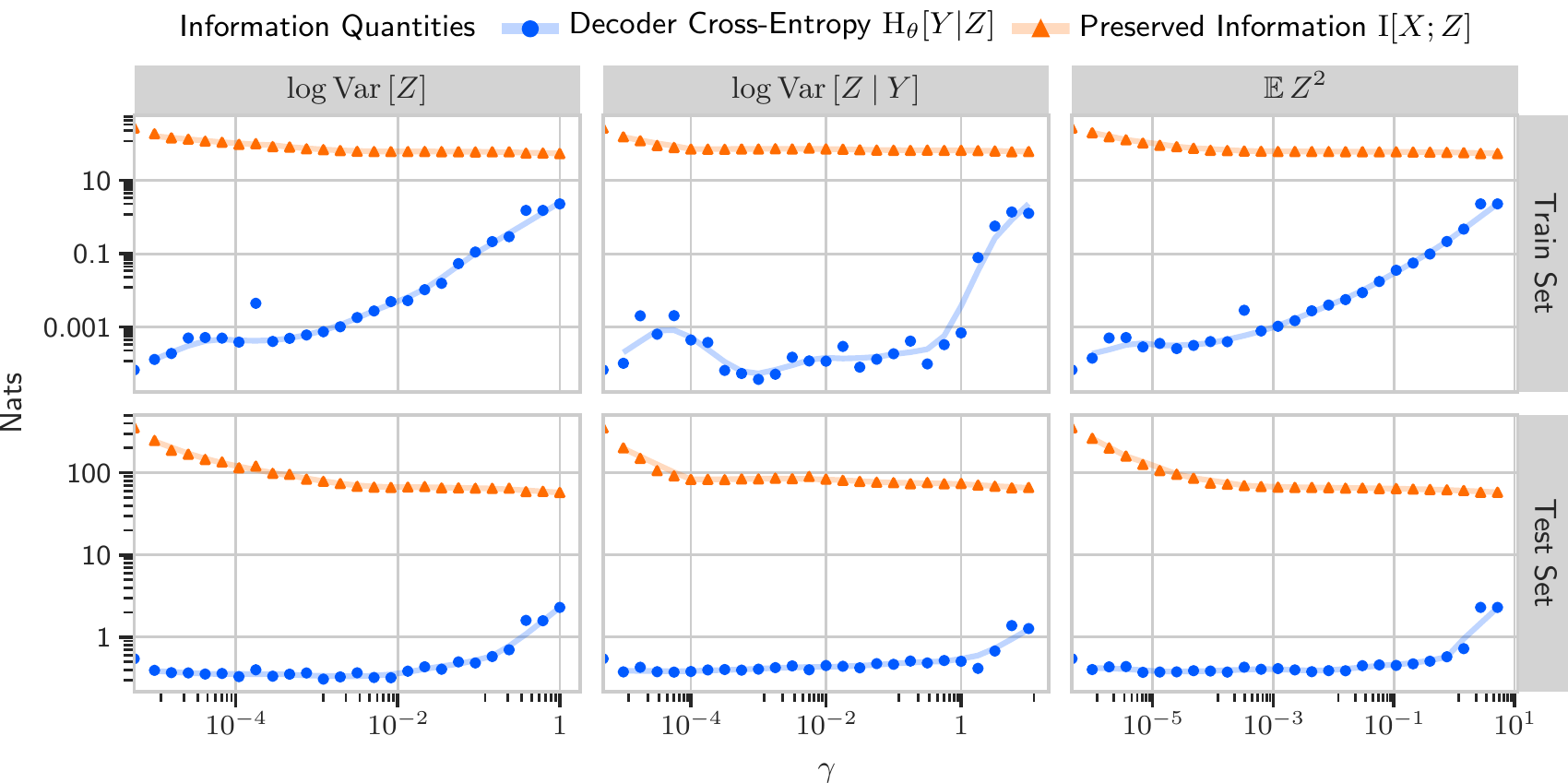}%
        \caption{Without dropout but with \ZeroEntropyNoise.}%
        \label{appfig:cifar10_kraskov_gamma_appendix_no_dropout}%
    \end{subfigure}
    \begin{subfigure}[h]{\textwidth}
        \centering
        \includegraphics[width=0.9\linewidth, clip, trim=0 0 0 0]{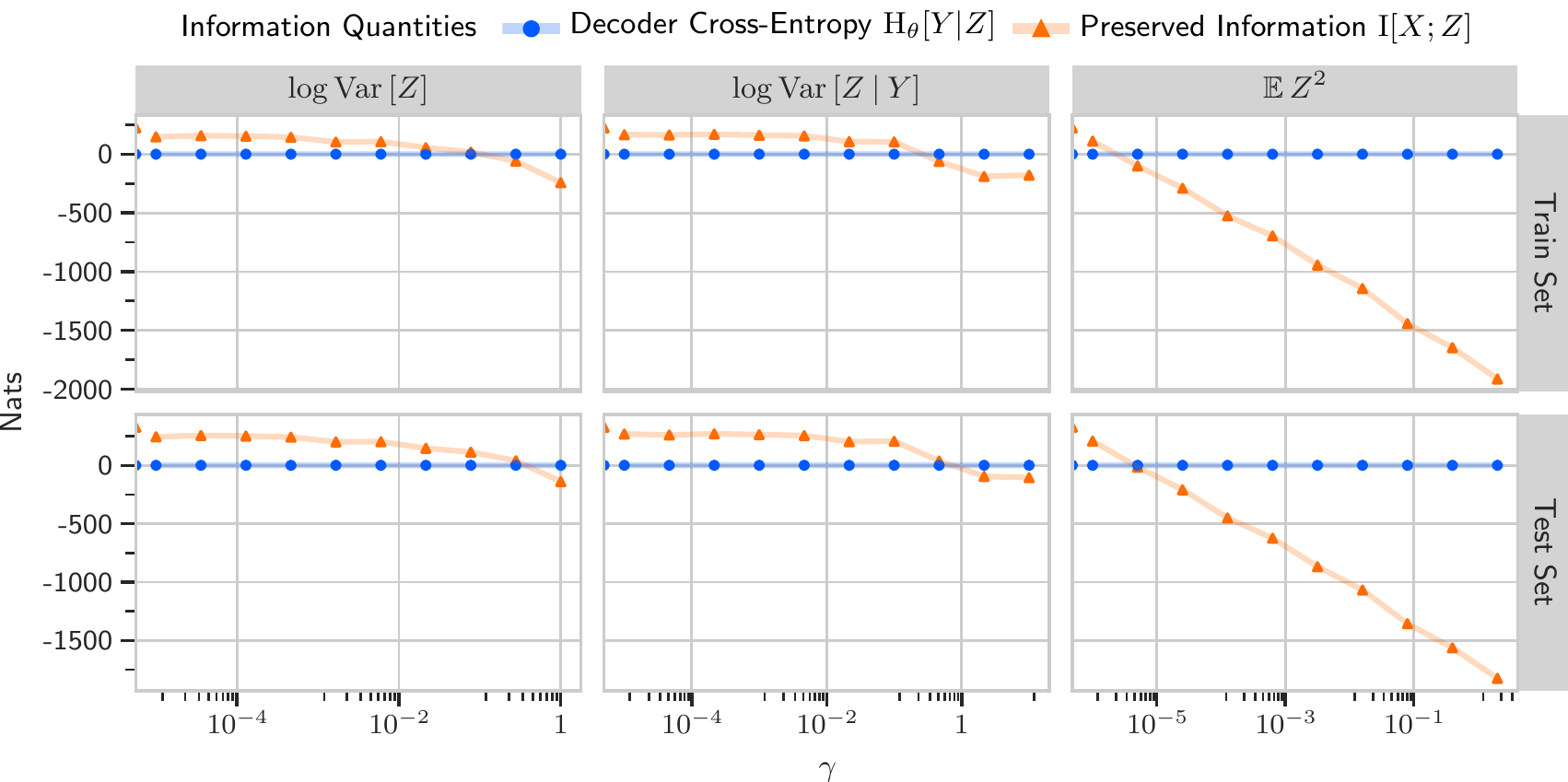}%
        \caption{Without dropout and without \ZeroEntropyNoise.}%
        \label{appfig:cifar10_kraskov_gamma_appendix_no_dropout_no_noise}%
    \end{subfigure}
    \caption{\emph{Information quantites for different $\gamma$ at the end of training for ResNet18 models on CIFAR-10 and $\logVarZY$ regularizer with batchsizes 128 and 256.}
    Compression (\PreservedInfo $\downarrow)$ trades-off with performance (\ResidualInfo $\downarrow$). See \secref{sec:exp surrogate regularizers}.
    }%
    \label{appfig:cifar10_kraskov_gamma_appendix}%
\end{figure*}

\begin{figure*}[t]
    \centering
    \includegraphics[width=0.9\linewidth, clip, trim=0 0 0 0]{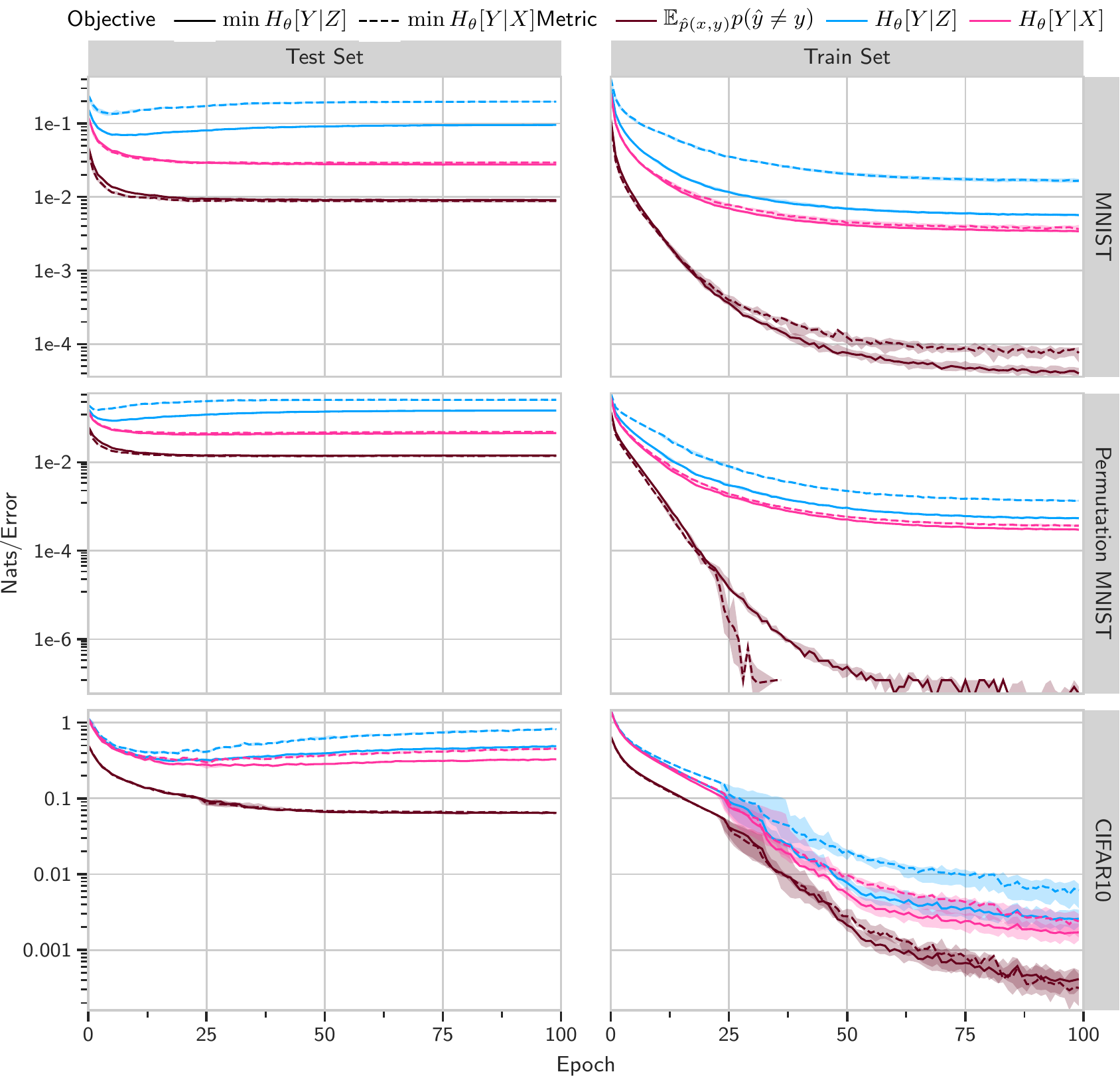}
    \caption{\emph{Training error probability, \DecoderCE $\decoderXE$ and \PredictionCE $\predictXE$ with continuous $\Z$.}  $K=100$ dimensions are used for $\Z$, and we use dropout to obtain stochastic models. 
    Minimizing $\decoderXE$ (solid) leads to smaller cross-entropies and lower training error probability than minimizing $\predictXE$ (dashed). This suggests a better data fit, which is what we desire for a loss term. 
    We run 8 trials each and plot the median with confidence bounds (25\% and  75\% quartiles).
    See \secref{sec:decoder uncertainty} and \ref{sec:exp for cross-entropies} for more details.}
    \label{fig:continuous_bounds_all}
    \vspace{-1.5em}
\end{figure*}

\begin{figure*}[t]
    \centering
    \includegraphics[width=0.9\linewidth, clip, trim=0 0 0 0]{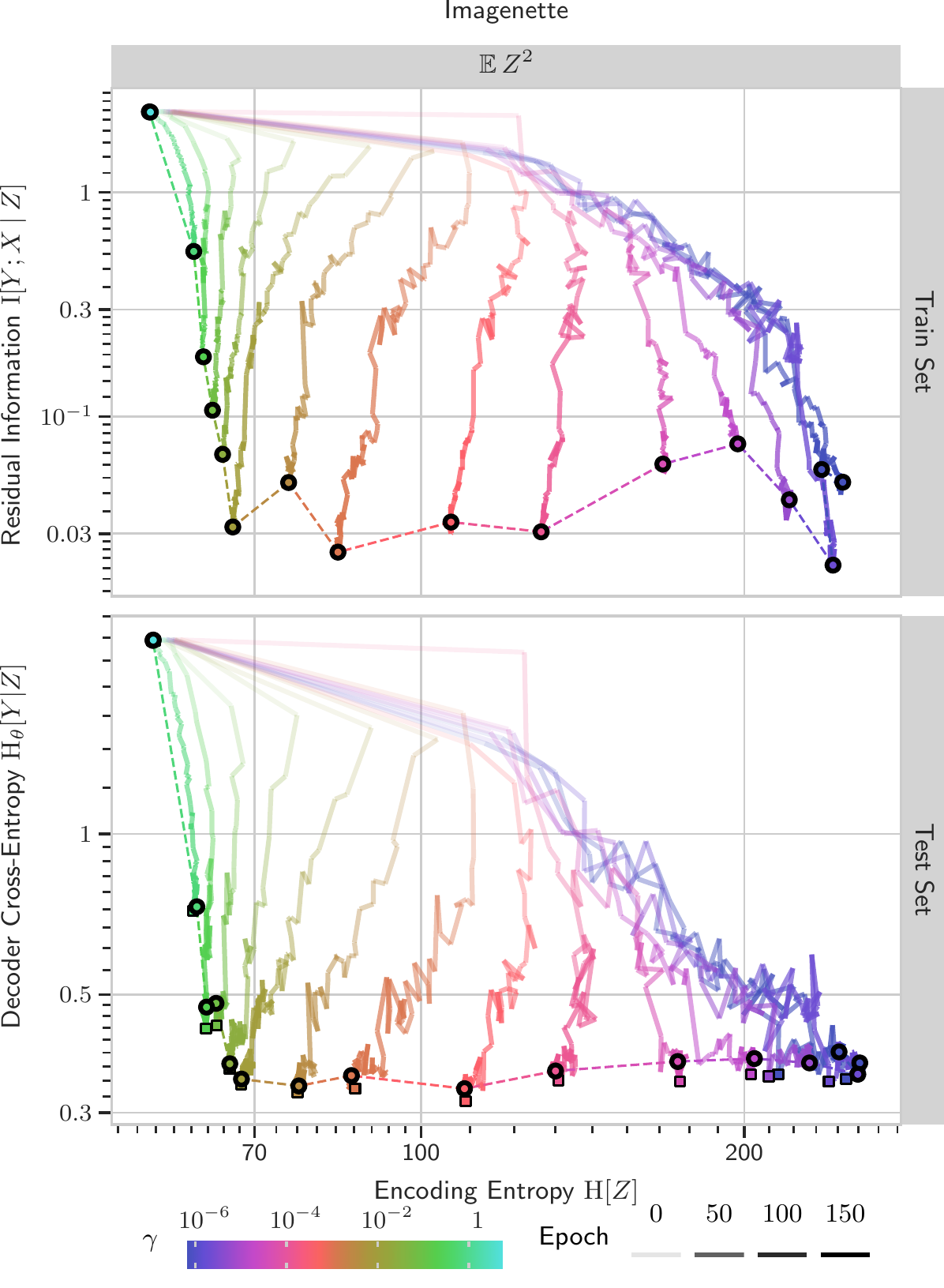}
    \caption{\emph{Information Plane Plot of the latent $\Z$ similar to \citet{Tishby2015} but using a \emph{ResNet18v2} model on \emph{Imagenette} using the $\MSA$ surrogate obejctive from \secref{sec:summary surrogate objectives} (\emph{with dropout and \ZeroEntropyNoise}).} The dots are colored by $\gamma$.} %
    \label{fig:imagenette_train_and_test}
    \vspace{-1.5em}
\end{figure*}

\begin{figure*}[t]
    \centering
    \includegraphics[width=0.9\linewidth, clip, trim=0 0 0 0]{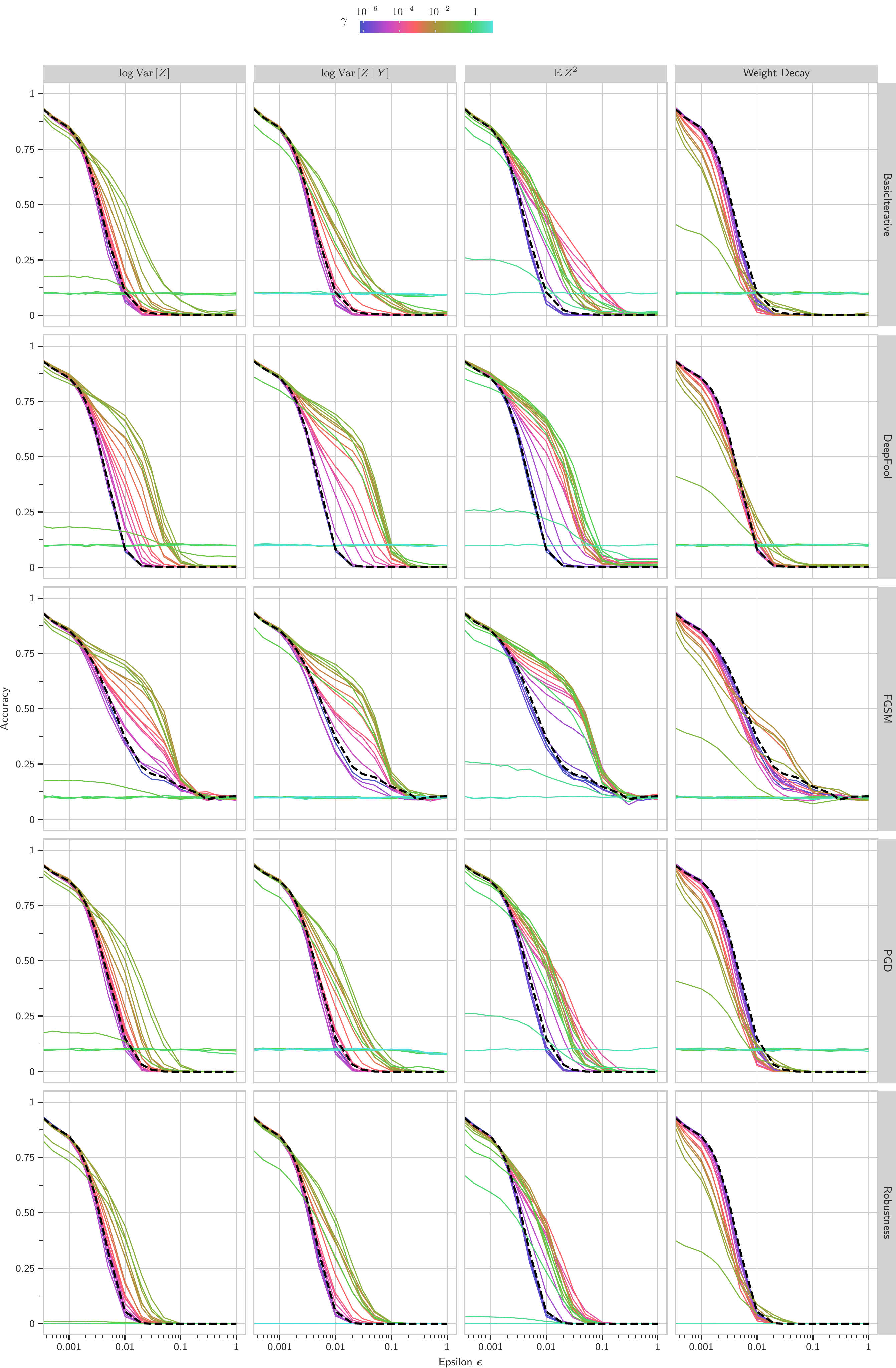}
    \caption{\emph{Adversarial robustness of ResNet18 models trained on CIFAR-10 with surrogate objectives in comparison to regularization with L2 weight-decay as non-IB method for different attack strengths $\eps$.} The robustness is evaluated using FGSM, PGD, DeepFool and BasicIterative attacks of varying $\epsilon$ values. The dashed black line represents a model trained only with cross-entropy and no noise injection. We see that models trained with the surrogate IB objective (colored by $\gamma$) see improved robustness over a model trained only to minimize the cross-entropy training objective (shown in black) while the models regularized with weight-decay actually perform worse.} %
    \label{fig:appendix_robustness}
    \vspace{-1.5em}
\end{figure*}

\begin{sidewaysfigure*}[t]
    \centering
    \includegraphics[width=0.9\linewidth, clip, trim=0 0 0 0]{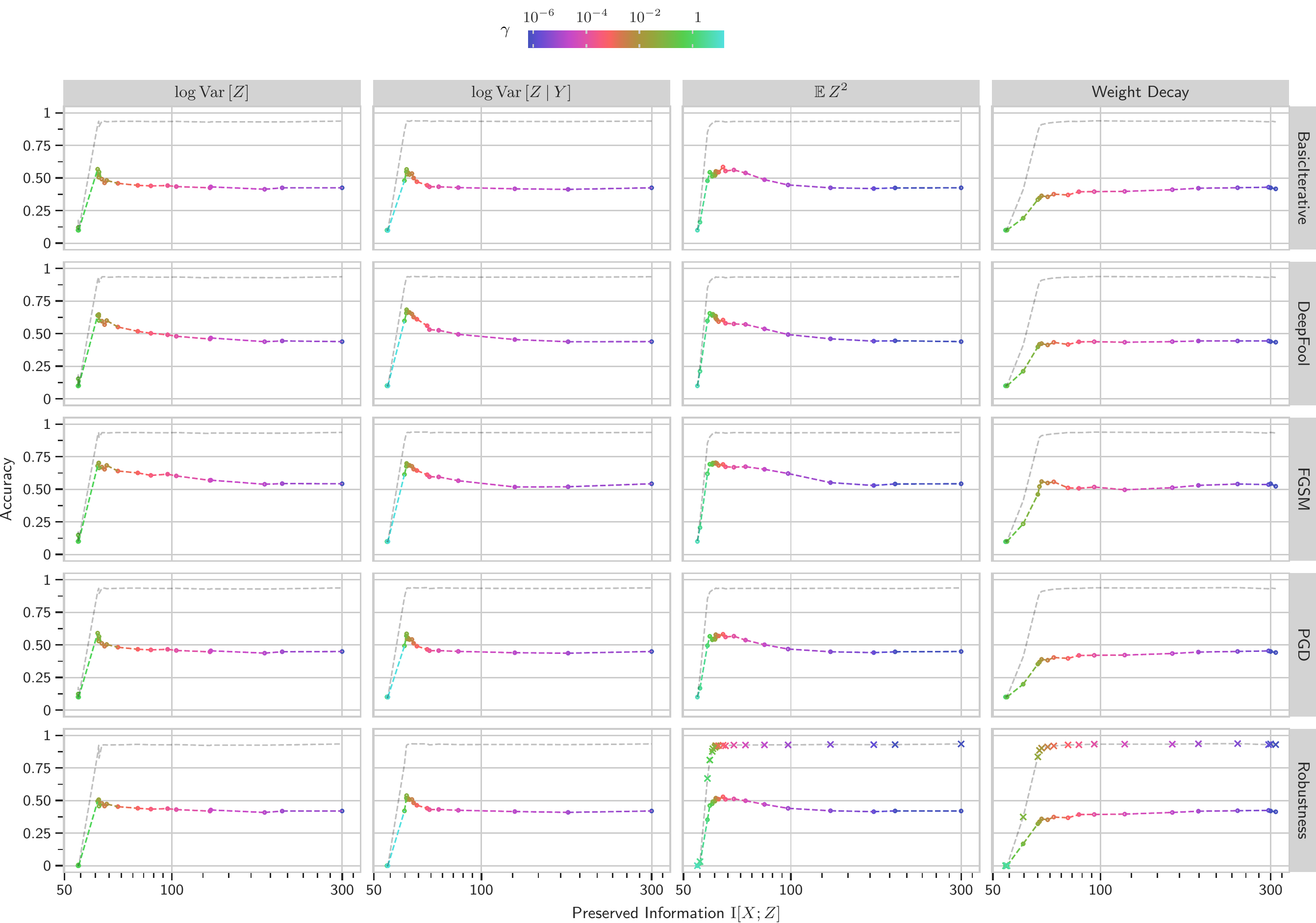}
    \caption{\emph{Average adversarial robustness  over $\eps \in [0, 0.1]$ of ResNet18 models trained on CIFAR-10 with surrogate objectives in or L2 weight-decay (as non-IB method) compared to normal accuracy for different amounts of \PreservedInfo.} $\circ$ markers show robustness. $\times$ markers show the normal accuracy. We see that robustness depends on the \PreservedInfo. If the latent is compressesed too much, robustness (and accuracy) are low. If the latent is not compressed enough, robustness and thus generalization suffer.}%
    \label{fig:appendix_integrated_robustness}
    \vspace{-1.5em}
\end{sidewaysfigure*}

As can be seen in \figref{appfig:cifar10_kraskov_regularizers}, the different surrogate regularizers have very similar effects on $\encodingentropy$ and $\rdecoderuncertainty$.
Regularizing with $\MSA$ shows a stronger initial regularization effect, but is difficult to compare quantitatively as its hyperparameter does not map to an equivalent $\beta$, unlike regularizing using entropy estimates.
Overall, we find $\logVarZ$ to provide stable training trajectories (and expected visualizations) while also having a more meaningful hyperparameter than $\MSA$, though $\MSA$ is trivial to implement and communicate\footnote{Which is the reason why we showcase it in \figref{fig:imagenette_beta_trajectories_test} and in equation \eqref{eq:explicit_loss}.}.
$\logVarZY$ performs worse which we hypothesize is due to the increased variance (given equal batch sizes) from conditioning on  $\Y$.
It further does not minimize the \PreservedInfo $\preservedinfo$ as strongly as the other regularizers.

\subsubsection{Measurement of information quantities}
\label{app:measuring information quantities}

Measuring information quantities can be challenging. As mentionend in the introduction, there are many complex ways of measuring entropies and mutual information terms. We can side-step the issue by making use of the bounds we have established and the zero-entropy noise we are injecting, and design experiments around that.

First, to estimate the \PreservedInfo $\preservedinfo$, we note that when we use a deterministic model as encoder and only inject \ZeroEntropyNoise, we have $\encodinguncertainty = 0$ and $\preservedinfo = \preservedinfo + \encodinguncertainty = \encodingentropy$.
We use the entropy estimator from \citet[equation $(20)$]{Kraskov2003} to estimate the \EncodingEntropy $\encodingentropy$ and thus $\preservedinfo$.

To estimate the \ResidualInfo $\residualinfo$, we similarly note that $\residualinfo = \residualinfo + \labeluncertainty = \decoderuncertainty$. Instead of estimating the entropy using \citet{Kraskov2003}, we can use the \DecoderCE $\decoderXE$ which provides a tighter bound as long as we also minimize $\decoderXE$ as part of the training objective.

When we use stochastic models as encoder, we cannot easily compute $\preservedinfo$ anymore. In the ablation study in the next section, we thus change the X axis accordingly.

Similarly, when we look at the trajectories on the test set instead of the training set, for example in \figref{appfig:cifar10_beta_trajectories_dropout}, we change the Y axis to signify the \DecoderUncertainty $\decoderXE$. It is still an upper-bound, but we do not minimize it directly anymore.

For the plots in \figref{fig:cifar10_beta_trajectories}, we retrained the decoder on the test set to obtain a tighter bound on $\decoderuncertainty$ (while keeping the encoder fixed). We then sampled the latent using the test set to estimate the trajectories. We only did this for the CIFAR-10 model without dropout. For our ablations, we did not retrain the decoder and thus only present plots on the test and training set, respectively.

At this point, it is important to recall that the \DecoderUncertainty is also the negative log-likelihood (when training with a single dropout sample), which provides a different perspective on the plots. It makes it clear that we can see how much a model overfits by comparing the best and final epochs of a trajectory in the plot (marked by a circle and a square, respectively).

\subsubsection{Ablation study}
\label{app:ablation study}

We perform an ablation study to determine whether injecting noise is necessary. Furthermore, we investigate the more interesting case of using a stochastic model as encoder, and if we can use a stochastic model without injecting \ZeroEntropyNoise.

We also investigate whether $\logVarZY$ performs better when we increase batchsize as we hypothesized that a batchsize of 128 does not suffice as it leaves only $\approx 13$ samples per class to approximate $\rdecoderuncertainty$).

\Figref{appfig:cifar10_beta_trajectories_no_dropout} shows a larger version of \figref{fig:cifar10_beta_trajectories} for all three regularizers and also training trajectories on the test set. As described in the previous section, this allows us to validate that the regularizers prevent overfitting on the training set: with increasing $\gamma$, the model overfits less.

\Figref{appfig:cifar10_beta_trajectories_no_dropout_no_noise} and \figref{appfig:cifar10_beta_trajectories_no_noise} shows that injecting noise is necessary independently of whether we use dropout or not. Regularizing with $\MSA$ still has a very weak effect. We hypothesize that, similar to the toy experiment depicted in \figref{fig:entropy_minimization_and_noise}, floating-point precision issues might provide a natural noise source eventually. This would change the effectiveness of $\gamma$ and might require much higher values to observe similar regularization effects as when we do inject \ZeroEntropyNoise.

\Figref{appfig:cifar10_beta_trajectories_dropout} shows trajectories for a stochastic encoder (as described above with DropConnect/dropout rate 0.1). It overfits less than a deterministic one.

\Figref{appfig:cifar10_beta_trajectories_more_dropout} shows the effects of using higher dropout rates (using DropConnect/dropout rates of 0.3/0.5). It overfits less than model with DropConnect/dropout rates of 0.1/0.1.

The plots in \figref{appfig:cifar10_kraskov_gamma_appendix} show the effects of different $\gamma$ with different regularizers more clearly. On both training and test set, one can clearly see the effects of regularization.

Overall, $\logVarZY$ performs worse as a regularizer. In \figref{appfig:cifar10_beta_trajectories_no_dropout_batchsize}, we compare the effect of doubling batchsize. Indeed, $\logVarZY$ performs better with higher batchsize and looks closer to $\logVarZ$.

\subsubsection{Comparison between \DecoderCE and \PredictionCE}
\label{sec:exp for cross-entropies}
When training deterministic models or dropout models with a single sample (as one usually does), the estimators for both the \DecoderCE $\decoderXE$ and the \PredictionCE $\predictXE$ coincide. In \secref{sec:decoder uncertainty}, we discuss the differences from a theoretical perspective. Here, we empirically evaluate the difference between optimizing the estimators for each of the two cross-entropy losses, for which we will draw multiple dropout samples during training and inference.

We examine models with \emph{continuous} $\Z$ on MNIST and CIFAR-10 \citep{LeCun1998, Krizhevsky2009}. Specifically, we use a standard dropout CNN as an encoder for MNIST, and a modified ResNet18 to which we add DropConnect in each layer for CIFAR-10. We use $K=100$ dimensions for the continuous latent $\Z$ in the last fully-connected layer, and use a linear decoder to obtain the final $10$-dimensional output of class logits. For MNIST, we compute the cross-entropies using 64 dropout samples; for CIFAR-10, we use 8. For the purpose of this examination of training behavior, it is not necessary to achieve SOTA accuracy:
our models obtain 99.2\% accuracy on MNIST and 93.6\% on CIFAR-10.

\Figref{fig:continuous_bounds_all} shows the training error probability as well as the value of each cross-entropy loss for models trained either with the \DecoderCE or the \PredictionCE. The \DecoderCE $\decoderXE$ outperforms \PredictionCE $\predictXE$ as a training objective: the training error probability and both cross-entropies are lower when minimizing $\decoderXE$ compared to minimizing $\predictXE$. We compare only the training, rather than the test, losses of the models to isolate the effect of each loss term on training performance; we leave the prevention of overfitting to the regularization terms considered later.
Recently, \citet{Dusenberry2020} also observed empirically that the \DecoderCE $\decoderXE$ as an objective is both easier to optimize and provides better generalization performance.

\subsection{Differential entropies and noise}
\label{sec:exp minimizing entropy}
We demonstrate the importance of adding noise to continuous latents by constructing a pathological sequence of parameters which attain monotonically improving and unbounded regularized objective values ($\encodingentropy$) while all computing \textit{the same function}. We use MNIST with a standard dropout CNN as encoder, with $K=128$ continuous dimensions in $\Z$, and a $K \times 10$ linear layer as decoder. 
After every training epoch, we decrease the entropy of the latent by normalizing and then scaling the latent to bound the entropy. We multiply the weights of the decoder to not change the overall function. As can be seen in \figref{fig:entropy_minimization_and_noise}, without noise, entropy can decrease freely during training without change in error rate until it is affected by floating-point issues;
while when adding \ZeroEntropyNoise, the error rate starts increasing gradually and meaningfully as the entropy starts to approach zero.
We conclude that entropy regularization is meaningful only when noise is added to the latent.

\subsection{Comparison between DVIB and surrogate objectives on Permutation-MNIST}
\label{app:dvib_vs_uib}

Comparing \DVIB and our surrogate objectives is not straightforward because \DVIB uses a VAE-like model that explicitly parameterize mean and standard deviation of the latent whereas the stochastic models we focus on in \secref{sec:decoder uncertainty} and beyond are implicit by using dropout. 

For this comparison, we use the same architecture and optimization strategy for \DVIB as described in \citet{Alemi2019}: the encoder is a ReLU-MLP of the form $796-1024-1024-2K$ with K=256 latent dimensions that outputs mean and standard deviation explicitly and separately. For the standard deviation, we use a softplus transform with a bias of $-5$. We use Polyak averaging with a decay constant of $0.999$ \citep{polyak1992acceleration}. We train the model for 200 epochs with Adam with learning rate $10^{-4}$, $\beta_1=0.5, \beta_2=-.999$ \citep{Kingma2015} and decay the learning rate by 0.97 every 2 epochs. The marginal is fixed to a unit Gaussian around the origin. We use a softmax layer as decoder. We use 12 latent samples during training and test time.

For our surrogate objectives, we use a similar ReLU-MLP of the form $796-1024-1024-K$ with K=256 latent dimensions and dropout layers of rate 0.3 after the first and second layer. We use also 12 dropout samples during training and test time. We train for 75 epochs with Adam and learning rate $0.5 \times 10^{-4}$. We half the learning rate every time the loss does not decrease for 13 epochs.

\begin{figure}
    \centering
    \includegraphics[clip, trim=0 0 0 5]{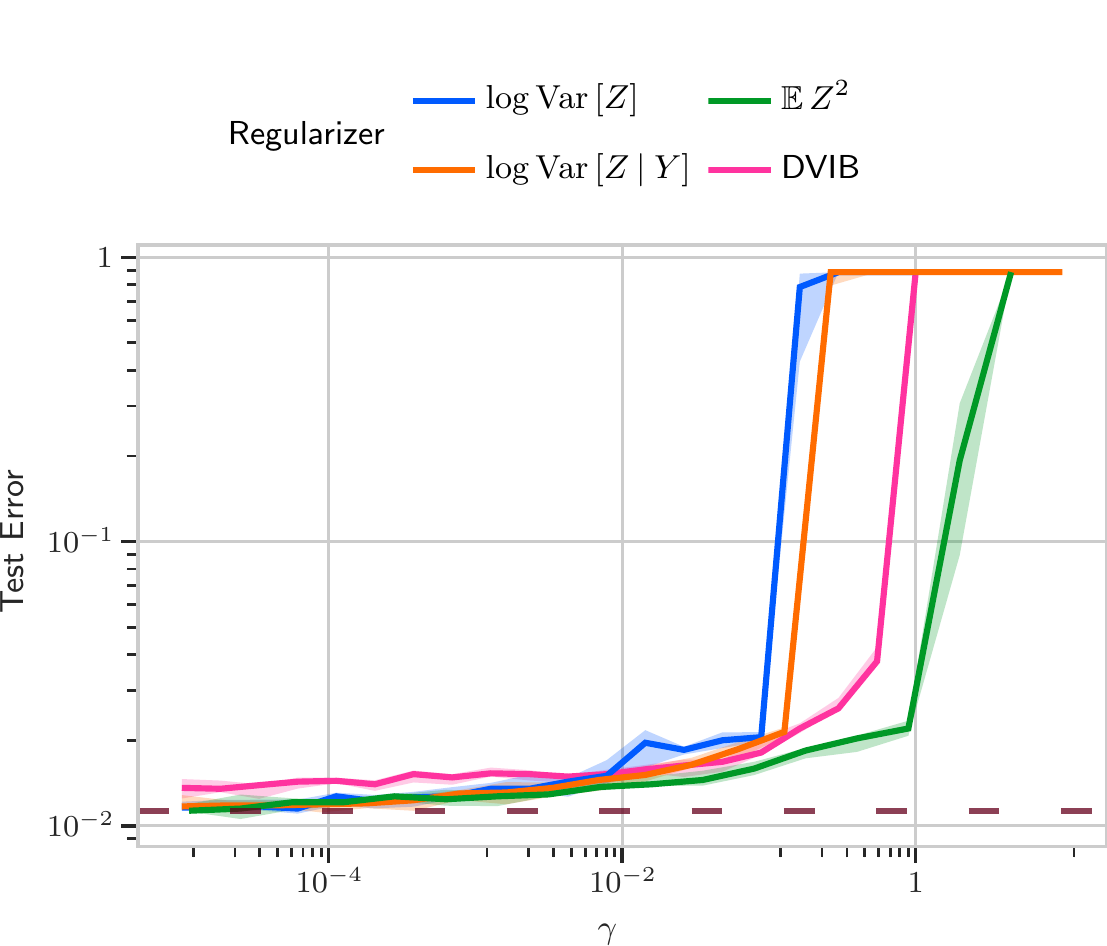}%
    \caption{\emph{Comparison of test error for different Lagrange multiplier for \DVIB and surrogate objectives from \secref{sec:summary surrogate objectives} on Permutation-MNIST.} The purple strongly dashed line shows the test error reported for \DVIB in \citet{Alemi2019}. 5 trials with 95\% confidence interval shown. Even though we could not reproduce the baseline reported in that paper, the simpler surrogate objective reach at least a similar test error as reported there. We also see that \DVIB behaves similar to $\MSA$, but shifted by a factor $2$ in $\gamma$, as predicted by \secref{app:alemia and msa}.%
    }%
    \label{appfig:dvib_vs_uib}
    \vspace{-1.5em}
\end{figure}

We run 5 trials for each experiments. We were not able to reproduce the baseline of an error of $1.13\%$ for $\beta=10^{-3}$ from \citet{Alemi2019}. We show a comparison in \figref{appfig:dvib_vs_uib}. Our methods do reach an error of $1.13\%$ overall though, so the simpler surrogate objectives perform as well good or better than \DVIB.

From \secref{app:alemia and msa}, we know that \DVIB's $\beta$ would have to be twice the $\gamma$ frm our \secref{sec:summary surrogate objectives}. We can see this correspondence in the plot. This also implies that \DVIB's $\beta$ is not related to the IB objective's $\beta$ from \secref{sec:information bottlenecks}. This makes sense as \DVIB arbitrarily fixes the marginal to be a unit Gaussian.

\begin{sidewaysfigure*}[h]
    \section{Large Version of the Mickey Mouse I-Diagram}
    \centering %
	\includegraphics[width=0.8\textwidth]{images/mickeymouse.pdf} %
    \caption{\emph{Mickey Mouse I-diagram.} See \figref{fig:mickeymouse} for details.}%
    \label{appfig:big mickey mouse}
\end{sidewaysfigure*}%

\end{document}